\definecolor{darkyellow}{RGB}{178,137,37}
\definecolor{darkblue}{RGB}{57,84,142}
\newif\ifarxiv
\newcommand{\rebut}[1]{#1}
\def\eqref#1{equation~\ref{#1}}
\def\1{\bm{1}}
\def\rr{{\textnormal{r}}}
\def\vzero{{\bm{0}}}
\def\vone{{\bm{1}}}
\def\va{{\bm{a}}}
\def\vb{{\bm{b}}}
\def\vc{{\bm{c}}}
\def\ve{{\bm{e}}}
\def\vf{{\bm{f}}}
\def\vo{{\bm{o}}}
\def\vr{{\bm{r}}}
\def\vs{{\bm{s}}}
\def\vu{{\bm{u}}}
\def\vw{{\bm{w}}}
\def\vy{{\bm{y}}}
\def\vz{{\bm{z}}}
\DeclareMathAlphabet{\mathsfit}{\encodingdefault}{\sfdefault}{m}{sl}
\SetMathAlphabet{\mathsfit}{bold}{\encodingdefault}{\sfdefault}{bx}{n}
\newtheorem{definition}{\textbf{Definition}}
\newtheorem{observation}{\textbf{Observation}}
\newtheorem{corollary}{\textbf{Corollary}}
\newtheorem{lemma}{\textbf{Lemma}}
\newtheorem{assumption}{\textbf{Assumption}}
\newtheorem{theorem}{\textbf{Theorem}}
\newtheorem{remark}{\textbf{Remark}}
\def\ee#1{\mathbb{E}\left[#1\right]}
\def\eee#1#2{\mathbb{E}_{#1}\left[#2\right]}
\def\con{\mathbb{C}}
\def\dd{\mathrm{d}}
\def\vrho{\boldsymbol{\rho}}
\def\rr{\mathbb{R}}
\def\zz{\mathbb{Z}}
\title{\underline{Co}mposing \underline{G}lobal \underline{S}olutions to Reasoning Tasks via Algebraic Objects in Neural Nets}
\author{%
  Yuandong Tian \\
  Meta Superintelligence Lab (FAIR) \\
  \texttt{yuandong@meta.com} \\
}
\def\ours{CoGS}
\def\vw{\mathbf{w}}
\def\i{\mathrm{i}}
\def\cc{\mathbb{C}}
\def\vphi{\boldsymbol{\phi}}
\def\cX{\mathcal{X}}
\def\cZ{\mathcal{Z}}
\def\cW{\mathcal{W}}
\def\bone{{(1)}}
\def\btwo{{(2)}}
\def\idx{\mathrm{idx}}
\def\ord{\mathrm{ord}}
\def\vrho{\boldsymbol{\rho}}
\def\c{\mathrm{c}}
\def\n{\mathrm{n}}
\def\g{\mathrm{g}}
\def\conv{*}
\def\con{R}
\def\cC{\mathcal{C}}
\def\bk{{(k)}}
\def\a{\mathrm{a}}
\def\syn{\mathrm{syn}}
\def\one{\mathrm{one}}
\begin{document}

\maketitle

\begin{abstract}
We prove rich algebraic structures of the solution space for 2-layer neural networks with quadratic activation and $L_2$ loss, trained on reasoning tasks in Abelian group (e.g., modular addition). Such a rich structure enables \emph{analytical} construction of global optimal solutions from partial solutions that only satisfy part of the loss, despite its high nonlinearity. We coin the framework as \ours{} (\emph{\underline{Co}mposing \underline{G}lobal \underline{S}olutions}). Specifically, we show that the weight space over different numbers of hidden nodes of the 2-layer network is equipped with a semi-ring algebraic structure, and the loss function to be optimized consists of \emph{sum potentials}, which are ring homomorphisms, allowing partial solutions to be composed into global ones by ring addition and multiplication. Our experiments show that around $95\%$ of the solutions obtained by gradient descent match exactly our theoretical constructions. Although the global solutions constructed only required a small number of hidden nodes, our analysis on gradient dynamics shows that overparameterization asymptotically decouples training dynamics and is beneficial. We further show that training dynamics favors simpler solutions under weight decay, and thus high-order global solutions such as perfect memorization are unfavorable. The code is open sourced\footnote{\url{https://github.com/facebookresearch/luckmatters/tree/yuandong3/ssl/real-dataset}}. 
\end{abstract}

\vspace{-0.15in}
\section{Introduction}
\vspace{-0.15in}
Large Language Models (LLMs) have shown impressive results in various disciplines~\citep{openai2024gpt4technicalreport,claude3,geminiteam2024gemini15unlockingmultimodal,deepseekai2024deepseekv2strongeconomicalefficient,dubey2024llama3herdmodels,jiang2023mistral7b}, while they also make surprising mistakes in basic reasoning tasks~\citep{nezhurina2024alicewonderlandsimpletasks,berglund2023reversal}. Therefore, it remains an open problem whether it can truly do reasoning tasks. On one hand, existing works demonstrate that the models can learn efficient algorithms (e.g., dynamic programming~\citep{ye2024physics} for language structure modeling, etc) and good representations~\citep{jin2024emergentrepresentationsprogramsemantics,wijmans2023emergence}. Some reports emergent behaviors~\citep{wei2022emergent} when scaling up with data and model size. On the other hand, many works also show that LLMs cannot self-correct~\citep{huang2023large}, and cannot generalize very well beyond the training set for simple tasks~\citep{dziri2023faith,yehudai2024can,ouellette2023counting}, let alone complicated planning tasks~\citep{kambhampati2024llmscantplanhelp,xie2024travelplannerbenchmarkrealworldplanning}.   

To understand how the model performs reasoning and further improve its reasoning power, people have been studying simple arithmetic reasoning problems in depth. Modular addition~\citep{nanda2023progress,zhong2024clock}, i.e., predicting $a+b\mod d$ given $a$ and $b$, is a popular one due to its simple and intuitive structure yet surprising behaviors in learning dynamics (e.g., grokking~\citep{power2022grokking}) and learned representations (e.g., Fourier bases~\citep{zhou2024pre}). Most works focus on various metrics to measure the behaviors and extracting interpretable circuits from trained models~\citep{nanda2023progress,varma2023explaining,huang2024unified}. Analytic solutions can be constructed and/or reverse-engineered~\citep{gromov2023grokking,zhong2024clock,nanda2023progress} but it is not clear how to construct a systematic framework to explain and generalize the results. 

In this work, we systematically analyze 2-layer neural networks with quadratic activation and $L_2$ loss on predicting the outcome of group multiplication in Abelian group $G$, which is an extension of modular addition. We find that global solutions can be constructed \emph{algebraically} from small partial solutions that are optimal only for parts of the loss. We achieve this by showing that (1) for the 2-layer network, there exists a \textbf{\emph{semi-ring}} structure over the weights space \emph{across different order} (i.e., number of hidden nodes or network width), with specifically defined addition and multiplication (Sec.~\ref{sec:semi-ring}), and (2) the $L_2$ loss is a function of \textbf{\emph{sum potentials}} (SPs), which are ring homomorphisms (Theorem~\ref{thm:analyticform}) that allow compositions of partial solutions into global ones using ring operations.


As a result, our theoretical framework, named \ours{} (i.e., \emph{\underline{Co}mposing \underline{G}lobal \underline{S}olutions}), successfully constructs two distinct types of Fourier-based global solutions of per-frequency order 4 (or ``$2\times 2$'') and order 6 (or ``$2\times 3$''), and a global solution of order $d^2$ that correspond to perfect memorization. Empirically, we demonstrate that around $95\%$ of the solutions obtained from gradient descent (with weight decay) have the predicted structure and match exactly with our theoretical construction of order-4 and order-6 solutions.  
In addition, we also analyze the training dynamics, and show that the dynamics favors low-order global solutions, since global solutions algebraically connected by ring multiplication can be proven to also be topologically connected. Therefore, high-order solution like perfect memorization is unfavorable in the dynamics. When the network width goes to infinity, the dynamics of sum potentials becomes decoupled, demystifying why overparameterization improves the performance. 

To our best knowledge, we are the first to discover such algebraic structures inside network training, apply it to analyze solutions to reasoning tasks such as modular additions, and show our theoretical constructions occur in actual gradient descent solutions.

\vspace{-0.15in}
\section{Related Works}
\vspace{-0.15in}
\textbf{Algebraic structures for maching learning}. Many works leverage symmetry and group structure in deep learning. For example, in geometric deep learning, different forms of symmetry are incorporated into network architectures ~\citep{bronstein2021geometric}. However, they do not open the black box and explore the algebraic structures of the network itself during training.   

\textbf{Expressibility}. Existing works on expressibility~\citep{li2024chain,liu2022transformers} gives explicit weight construction of neural networks weights (e.g., Transformers) for reasoning tasks like automata, which includes modular addition. However, their works do not discover algebraic structures in the weight space and loss, nor learning dynamics analysis, and it is not clear whether the constructed weights coincide with the actual solutions found by gradient descent, even in synthetic data. 


\textbf{Fourier Bases in Arithmetic Tasks}. Existing works discovered that pre-trained models use Fourier bases for arithmetic operations~\citep{zhou2024pre}. This is true even for a simple Transformer, or even a network with one hidden layer~\citep{morwani2023feature}. Previous works also construct analytic Fourier solutions~\citep{gromov2023grokking} for modular addition, but with the additional assumption of infinite width, unaware of the algebraic structures we discover. Existing theoretical work~\citep{morwani2023feature} also shows group-theoretical results on algebraic tasks related to finite groups, also for networks with one-hidden layers and quadratic activations. Compared to ours, they use the max-margin framework with a special regularization ($L_{2,3}$ norm) rather than $L_2$ loss, do not characterize and leverage algebraic structures in the weight space, and do not analyze the training dynamics. 

\begin{figure*}
    \centering
    \vspace{-0.05in}
    \includegraphics[width=\textwidth]{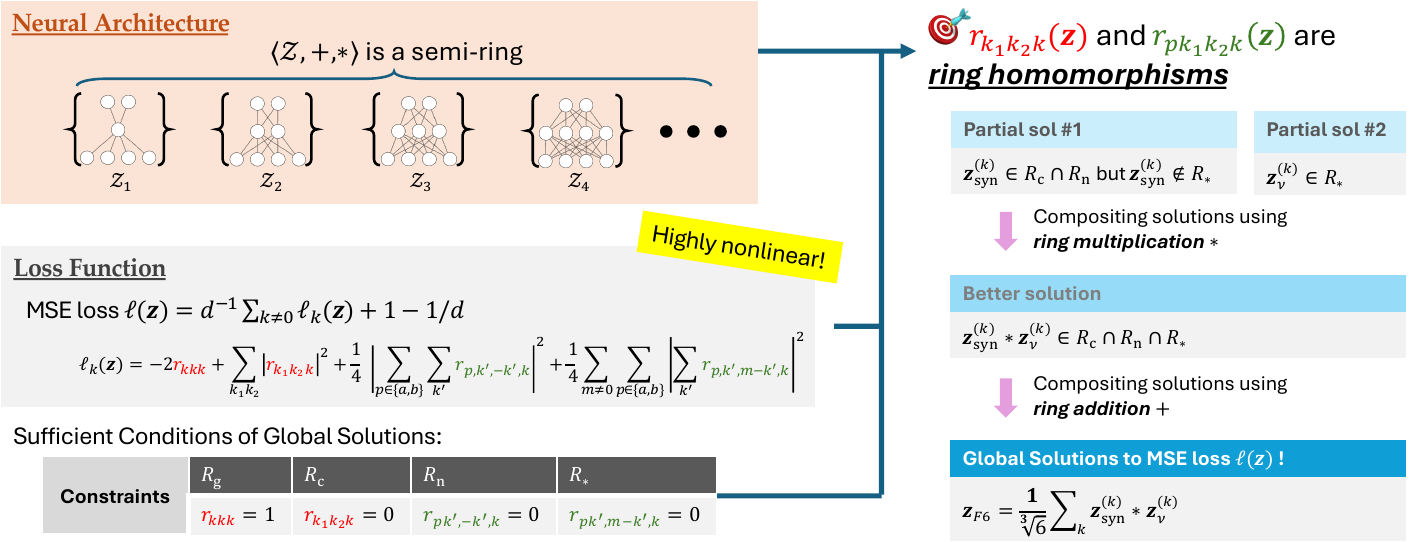}
    \vspace{-0.05in}
    \caption{\small Overview of proposed theoretical framework \ours{}. (1) The family of 2-layer neural networks, $\cZ$, form a \emph{semi-ring} algebraic structure (Theorem~\ref{thm:semi-ring}) with ring addition and multiplication (Def.~\ref{def:operationsinz}). $\cZ = \bigcup_{q\ge 0} \cZ_q$ where $\cZ_q$ is a collection of all weights with order-$q$ (i.e., $q$ hidden nodes). (2) For outcome prediction of Abelian group multiplication, the MSE loss $\ell(\vz)$ is a function of \emph{sum potentials} (SPs) $r_{k_1k_2k}(\vz)$ and $r_{pk_1k_2k}(\vz)$ (Theorem~\ref{thm:analyticform}), which are ring homomorphisms (Theorem~\ref{thm:pothomo}). (3) Thanks to the property of ring homomorphism, global solutions to MSE loss $\ell(\vz)$ with quadratic activation can be constructed \emph{algebraically} from partial solutions that only satisfy a subset of constraints (Sec.~\ref{sec:composing-solutions}) using ring addition and multiplication, instead of running gradient descent. Examples include Fourier solution $\vz_{F6}$ (Corollary~\ref{co:order-6}) and $\vz_{F4/6}$ (Corollary~\ref{co:order-4}) and perfect memorization solution $\vz_M$ (Corollary~\ref{co:perfectmem}). In Sec.~\ref{sec:gradientdynamics}, we analyze the role played of SPs in gradient dynamics, showing that the dynamics favors low-order global solutions (Theorem~\ref{thm:loworderfirst}) under weight decay regularization, and the dynamics of SPs become decoupled with infinite width (Theorem~\ref{lemma:infinitem}). In Sec.~\ref{sec:exp} we show that the gradient descent solutions match exactly with our theoretical construction.} 
\end{figure*}

\vspace{-0.15in}
\section{$L_2$ Loss Decoupling for Abelian group}
\vspace{-0.15in}
\textbf{Basic group theory}. A set $G$ forms a \emph{group}, which means that (1) there exists an operation $\cdot$ (i.e., ``multiplication''): $G\times G \mapsto G$ and it satisfies association: $(g_1\cdot g_2)\cdot g_3 = g_1\cdot( g_2 \cdot g_3)$. Often we write $g_1g_2$ instead of $g_1\cdot g_2$ for brevity. (2) there exists an identity element $e\in G$ so that $eg = ge = g$, (3) for every group element $g\in G$, there is a unique inverse $g^{-1}$ so that $gg^{-1} = g^{-1}g = e$. In some groups, the multiplication operation is commutative, i.e., $gh = hg$ for any $g,h\in G$. Such groups are called \emph{Abelian group}. Modular addition forms a Abelian (more specifically, cyclic) group by noticing that there exists a mapping $a \mapsto e^{2\pi a\i /d}$ and $a+b \mod d$ is $e^{2\pi a\i /d} \cdot e^{2\pi b\i /d} = e^{2\pi (a+b)\i /d}$.

\textbf{Basic Ring theory}. A set $\cZ$ forms a \emph{ring}, if there exists two operations, addition $+$ and multiplication $*$, so that (1) $\langle \cZ, +\rangle$ forms an Abelian group, (2) $\langle \cZ, *\rangle$ is a monoid (i.e., a group without inverse), and (3) multiplication distributes with addition (i.e., $a*(b+c) = a*b+a*c$ and $(b+c)*a = b*a+c*a$). $\cZ$ is called a \emph{semi-ring} if $\langle \cZ, +\rangle$ is a monoid.

\textbf{Notation}. Let $\rr$ be the real field and $\cc$ be the complex field. For a complex vector $\vz$, $\vz^\top$ is its transpose, $\bar\vz$ is its complex conjugate and $\vz^*$ its conjugate transpose. For a tensor $z_{ijk}$, 
$z_{\cdot jk}$ is a vector along its first dimension, $z_{i\cdot k}$ along its second dimension, and $z_{ij\cdot}$ along its last dimension. 

\textbf{Problem Setup}. We consider the following 2-layer networks with $q$ hidden nodes, trained with (projected) $\ell_2$ loss on prediction of group multiplication in Abelian group $G$ with $|G| = d$:
\begin{equation}
    \ell = \sum_i \Big\|P^\perp_1\left(\frac{1}{2d}\vo[i] - \rebut{\ve_{l[i]}}\right)\Big\|^2, \quad
    \vo[i] = \sum_j \vw_{cj} \sigma(\vw_{aj}^\top \ve_{g_1[i]} + \vw_{bj}^\top \ve_{g_2[i]})\label{eq:arch}
\end{equation}
\rebut{
\textbf{Input and Output}. The input contains the two group elements $g_1[i], g_2[i]\in G$ to be multiplied, $\ve_{g_1[i]}, \ve_{g_2[i]} \in \rr^{d}$ are one-hot representation of $g_1[i]$ and $g_2[i]$. Here $i$ is the sample index. The target $\ve_{l[i]}$ is a one-hot representation of $l[i] = g_1[i]g_2[i] \in G$, the group product of $g_1[i]$ and $g_2[i]$.  
}

\ifarxiv
\begin{figure}
    \centering
    \includegraphics[width=\linewidth]{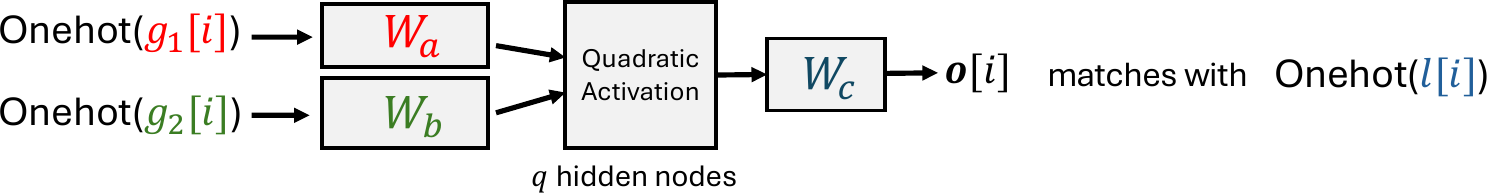}
    \caption{\small Problem setup. For group multiplication $l[i] = g_1[i] g_2[i] \mod d$, we use a 2-layer network with quadratic activation to predict $l[i]$ from $g_1[i]$ and $g_2[i]$. See Eqn.~\ref{eq:arch} for the loss function.}
    \label{fig:problem-setup}
\end{figure}
\fi

\textbf{Architectures}. In Eqn.~\ref{eq:arch}, we use quadratic activation $\sigma(x)=x^2$~\citep{du2018power,allen2023backward}, $P_1^\perp = I - \frac{1}{d}\vone\vone^\top$ is the zero-mean projection, \rebut{$\vw_{aj},\vw_{bj}, \vw_{cj} \in \rr^{d}$ are learnable parameters ($1\le j\le q$).} Note that variants of quadratic activation have been used empirically, e.g. squared ReLU and gated activations~\citep{DBLP:journals/corr/abs-2109-08668,shazeer2020glu,zhang2024relu2winsdiscoveringefficient}.  

We can extend our framework to \emph{group action prediction}, in which $g_2$ may not be a group element but any object (e.g., a discrete state in reinforcement learning), See Appendix~\ref{sec:extension-to-group-action-pred}. 

Let $\vphi_k = [\phi_k(g)]_{g\in G} \in \cc^{d}$ be the scaled Fourier bases (or more formally, \emph{character function} of the finite Abelian group $G$, see Appendix~\ref{sec:appendix-decoupling}). Then \rebut{the weight vector $\cW := \{\vw_j\}$} can be written as:
\begin{equation}
    \vw_{aj} = \sum_{k\neq 0} z_{akj} \vphi_k, \quad\vw_{bj} = \sum_{k\neq 0} z_{bkj} \vphi_k, \quad \vw_{cj} = \sum_{k\neq 0} z_{ckj} \bar\vphi_k \label{eq:w-freq-space} 
\end{equation}
where $\vz := \{z_{pkj}\}$ are the complex coefficients, $p \in \{a,b,c\}$, $0\le k < d$ and $j$ runs through $q$ hidden nodes. \rebut{For convenience, we define $\vphi_{-k} := \overline \vphi_k$ as the (complex) conjugate representation of $\vphi_k$}. We exclude $\vphi_0 \equiv 1$ because the constant bias term has been filtered out by the top-down gradient from the loss function. Leveraging the property of quadratic activation functions, we can write down the loss function analytically (see Appendix~\ref{sec:appendix-decoupling}): 
\begin{restatable}[Analytic form of $L_2$ loss with quadratic activation]{theorem}{analyticform}
\label{thm:analyticform}
The objective of 2-layer MLP network with quadratic activation can be written as $\ell = d^{-1}\sum_{k\neq 0} \ell_k + (d-1)/d$, where 
\begin{equation}
    \ell_k = - 2 r_{kkk}\!+\!\sum_{k_1k_2} |r_{k_1k_2k}|^2 + 
    \frac14\Big|\sum_{p\in \{a,b\}} \sum_{k'} r_{p,k',-k',k}\Big|^2 + \frac14\sum_{m\neq 0} \sum_{p\in \{a,b\}} \Big|\sum_{k'} r_{p,k',m-k',k}\Big|^2  
    \label{eq:obj} 
\end{equation}
Here $r_{k_1k_2k} := \sum_j z_{a k_1 j} z_{b k_2 j} z_{ckj}$ and $r_{pk_1k_2k} := \sum_j z_{pk_1j} z_{pk_2j} z_{ckj}$.
\end{restatable}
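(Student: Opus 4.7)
The plan is to expand the quadratic activation in the character basis of $G$ and then apply Parseval's identity on three copies of $G$ (one for the output coordinate $l$, two for the sample indices $g_1,g_2$) to decouple the loss across input and output frequencies. First, substituting Eqn.~\ref{eq:w-freq-space} into the preactivation gives $\vw_{aj}^\top\ve_{g_1}+\vw_{bj}^\top\ve_{g_2}=\sum_{k\neq 0}(z_{akj}\phi_k(g_1)+z_{bkj}\phi_k(g_2))$. Squaring this produces three groups of terms: a \emph{pure-$a$} quadratic depending only on $g_1$, a \emph{pure-$b$} quadratic depending only on $g_2$, and a \emph{cross} bilinear term carrying an explicit binomial factor of $2$ (which will later cancel the $\tfrac14$ prefactor of the pure terms after squaring). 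Multiplying by $\vw_{cj}=\sum_k z_{ckj}\bar\vphi_k$ and summing over hidden units absorbs every cubic monomial $\sum_j z_{pk_1j}z_{p'k_2j}z_{ckj}$ directly into the sum potentials $r_{k_1k_2k}$ and $r_{pk_1k_2k}$ from the statement.

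Second, I would treat $\frac{1}{2d}\vo[g_1,g_2]-\ve_l$ as a $\cc^d$-valued function on $G\times G$, project its output coordinate onto $\bar\vphi_k$, and then Fourier-decompose in $(g_1,g_2)$. Because $\phi_{k_1}(g)\phi_{k_2}(g)=\phi_{k_1+k_2}(g)$, the pure-$a$ contribution lives at sample frequency $(m_1,m_2)=(m,0)$ with amplitude $\sum_{k'}r_{a,k',m-k',k}$, and symmetrically pure-$b$ lives at $(0,m)$ with amplitude $\sum_{k'}r_{b,k',m-k',k}$. The cross term sits at $(k_1,k_2)$ with amplitude proportional to $r_{k_1k_2k}$. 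Crucially, the Abelian character homomorphism $\phi_k(g_1g_2)=\phi_k(g_1)\phi_k(g_2)$ places the target $\ve_l$ exactly at position $(m_1,m_2)=(k,k)$ in the $k$-th output slot. The projection $P_1^\perp$ kills $k=0$ in the output. Parseval across the three copies of $G$ then converts $\sum_{g_1,g_2}\|\cdot\|^2$ into a sum of squared magnitudes indexed by $(k,m_1,m_2)$, which I would regroup by the output frequency $k$ to identify $\ell_k$.

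Finally I would read off $\ell_k$ from each $(k,m_1,m_2)$ bucket. At $(m_1,m_2)=(k,k)$ the cross term and the target coexist; expanding the squared modulus produces the linear term $-2r_{kkk}$ together with one $|r_{kkk}|^2$ that is absorbed into the full $\sum_{k_1k_2}|r_{k_1k_2k}|^2$ from all cross positions. For $m\neq 0$, pure-$a$ at $(m,0)$ and pure-$b$ at $(0,m)$ occupy disjoint slots, so their squared magnitudes add separately, yielding $\tfrac14\sum_{m\neq 0}\sum_{p\in\{a,b\}}|\sum_{k'} r_{p,k',m-k',k}|^2$; at $(0,0)$ they share the same slot and must be summed before squaring, giving $\tfrac14|\sum_{p}\sum_{k'} r_{p,k',-k',k}|^2$. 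The additive constant $(d-1)/d$ comes from the residual target norm $\|P_1^\perp\ve_l\|^2$ per sample. The main technical obstacle will be keeping three normalizations mutually consistent simultaneously---the character scaling of $\vphi_k$, the Parseval factors of $d$ arising from each copy of $G$, and the $1/(2d)$ prefactor inside the squared residual---so that the coefficients $-2$, $1$, and $\tfrac14$ appearing in $\ell_k$ come out exactly, rather than off by a power of $d$; this is essentially bookkeeping, but it is the only part where one can easily go wrong.
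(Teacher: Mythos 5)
Your plan is sound and, upon inspection, the constants do come out right, but you are taking a noticeably different route from the paper. The paper's proof works entirely in ``moment'' form: it expands $\ell$ into $\frac{1}{4d^2}\ee{\vo^\top P_1^\perp\vo} - \frac{1}{d}\ee{\vo^\top P_1^\perp\ve_{g_1g_2}} + \ee{\ve^\top P_1^\perp\ve}$, handles the cross term directly to get $-2r_{kkk}$, and then tackles the quartic term $\ee{\vo^\top P_1^\perp\vo}$ by writing out the fourth moment $\eee{g_1,g_2}{\phi_{k_1}(g_{p_1})\phi_{k_2}(g_{p_2})\phi_{k_3}(g_{p_3})\phi_{k_4}(g_{p_4})}$ and exhaustively enumerating the $2^4$ assignments $(p_1,p_2,p_3,p_4)\in\{a,b\}^4$, sorting survivors into three cases (cross--cross, pure-$a$ against pure-$b$, pure-$p$ against pure-$p$). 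You instead push the Fourier transform all the way through: write $\frac{1}{2d}\vo-\ve_{g_1g_2}$ as a $\cc^d$-valued function of $(g_1,g_2)$, expand it in the tensor-product character basis, and apply Parseval once. Your approach has a conceptual advantage the paper's enumeration obscures: because $z_{p0j}=0$, the pure-$a$ component lands only at $(m,0)$, pure-$b$ only at $(0,m)$, and the cross component only at $(k_1,k_2)$ with both indices nonzero, so these buckets are automatically disjoint (except at $(0,0)$, where pure-$a$ and pure-$b$ collide). This single observation directly explains why the loss has exactly four kinds of terms, why the cross-term piece sums \emph{after} squaring while the $m=0$ piece sums \emph{before} squaring, and why there is no cross$\times$pure interference to compute. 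The paper arrives at the same conclusions but needs the case analysis to see them. One small precision note: per output frequency $k$, the target bucket $(k,k)$ produces $|r_{kkk}-1|^2$, whose linear piece is $-(r_{kkk}+\bar r_{kkk})$; this equals $-2r_{kkk}$ only after re-indexing the $\pm k$ pair, which the paper silently does via the Hermitian constraint $r_{-k,-k,-k}=\bar r_{kkk}$ --- worth flagging when you write this up carefully.
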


Note that for cyclic group $G$, the frequency $k$ is a mod-$d$ integer. For general Abelian group which can be decomposed into a direct sum of cyclic groups according to Fundamental Theorem of Finite Abelian Groups~\citep{diaconis1988group}, $k$ is a multidimensional frequency index. Since \rebut{$\{\vw_{pj}\}$} are all real, the Hermitian constraints hold, i.e. $\overline{z_{ckj}} = \rebut{\overline{\vphi^*_k \vw_{cj}} = \vphi^*_{-k} \vw_{cj}} = z_{c,-k,j}$ (and similar for $z_{akj}$ and $z_{bkj}$). Therefore, $z_{p,-k,j} = \bar z_{pkj}$, $r_{-k,-k,-k} = \bar r_{kkk}$ and $\ell$ is real and can be minimized.  

Eqn.~\ref{eq:obj} contains different $r$ terms, which play an important role in determining global solutions. 
\begin{definition}[0/1-set]
Let $\con := \{r\}$ be a collection of $r$ terms. The weight $\vz$ is said to have $0$-set $R_0$ and $1$-set $R_1$ (or 0/1-sets $(R_0,R_1)$), if $r(\vz) = 0$ for all $r\in \con_0$ and $r(\vz) = 1$ for all $r\in \con_1$.
\end{definition}
With 0/1-sets, we can characterize rough structures of the
global solutions to the loss:
\begin{restatable}[A Sufficient Conditions of Global Solutions of Eqn.~\ref{eq:obj}]{lemma}{globalsolutions}
\label{co:globalminimizer}
If the weight $\vz$ to Eqn.~\ref{eq:obj} has 0-sets $R_\c \cup R_\n \cup R_*$ and 1-set $R_\g$, i.e.
\begin{equation}
    r_{kkk}(\vz) = \mathbb{I}(k\neq 0), \quad r_{k_1k_2k}(\vz) = 0, \quad r_{pk_1k_2k}(\vz) = 0  \label{eq:globalminimizer}
\end{equation}
then it is a global solution with $\ell(\vz)\!=\!0$. Here $\con_\g\!:=\!\{r_{kkk}, k\neq 0\}$, $\con_\c := \{r_{k_1k_2k}, k_1, k_2, k \mathrm{\ not\ all\ equal}\}$, $\con_\n := \{r_{p,k',-k',k}\}$ and $\con_* := \{ r_{p,k',m-k',k}, m\neq 0\}$.
\end{restatable}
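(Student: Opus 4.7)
The plan is a direct verification: substitute the 0/1-set conditions of the hypothesis into the closed form of $\ell_k$ given by Theorem~\ref{thm:analyticform} (Eqn.~\ref{eq:obj}), show each frequency-$k$ contribution equals $-1$ for $k\neq 0$, and then assemble the total via $\ell = d^{-1}\sum_{k\neq 0}\ell_k + (d-1)/d$. Since the original MSE in Eqn.~\ref{eq:arch} is manifestly nonnegative, producing any $\vz$ with $\ell(\vz)=0$ automatically certifies a global minimizer.

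Concretely, I would fix $k\neq 0$ and match each of the four summands of $\ell_k$ to the four collections $\con_\g,\con_\c,\con_\n,\con_*$. The linear term $-2r_{kkk}$ is handled by the 1-set condition $r_{kkk}(\vz) = 1$ and contributes $-2$. The quadratic term $\sum_{k_1,k_2}|r_{k_1k_2k}|^2$ is split into the single diagonal tuple $(k_1,k_2)=(k,k)$, which again gives $|r_{kkk}|^2=1$, and all remaining tuples, which by construction have indices not all equal and so lie in $\con_\c$; these vanish. The third summand is a squared modulus of a sum of terms of the form $r_{p,k',-k',k}\in\con_\n$ and therefore vanishes termwise; the fourth summand is analogous for $\con_*$ once one notes that for $m\neq 0$ every triple $(k',m-k',k)$ belongs to $\con_*$. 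Summing yields $\ell_k = -2+1+0+0 = -1$.

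Finally, aggregating over the $d-1$ nonzero frequencies,
\begin{equation*}
    \ell(\vz) \;=\; \frac{1}{d}\sum_{k\neq 0}\ell_k(\vz) + \frac{d-1}{d} \;=\; -\frac{d-1}{d} + \frac{d-1}{d} \;=\; 0,
\end{equation*}
and nonnegativity of Eqn.~\ref{eq:arch} finishes the argument. I do not expect a substantive obstacle; the only bookkeeping point worth being careful about is checking that every triple appearing in the third and fourth summands genuinely belongs to $\con_\n$ or $\con_*$, including degenerate cases such as $k'=0$ (trivial because the Fourier parameterization of Eqn.~\ref{eq:w-freq-space} excludes $\vphi_0$, forcing $z_{p,0,j}=0$) and $k'=-k'$ in groups with 2-torsion, both of which are still covered by the stated definitions of $\con_\n$ and $\con_*$.
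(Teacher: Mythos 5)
Your proof is correct and follows essentially the same route as the paper: both arguments plug the stated 0/1-set conditions into the closed form of $\ell_k$ from Theorem~\ref{thm:analyticform}, observe that the diagonal piece $-2r_{kkk}+|r_{kkk}|^2$ is pinned to $-1$ while the remaining three summands are sums of squared moduli that vanish under $R_\c$, $R_\n$, $R_*$, and conclude $\ell(\vz)=0$, which is optimal by nonnegativity of the MSE. The only cosmetic difference is that the paper phrases step four as "each term attains its separate minimum" rather than explicitly computing $\ell_k=-1$ and aggregating, but the content is identical.
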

Lemma~\ref{co:globalminimizer} provides \emph{sufficient} conditions since there may exist solutions that achieve global optimum (e.g., $\sum_{k'} r_{p,k',m-k',k}(\vz) = 0$ but $r_{p,k',m-k',k}(\vz) \neq 0$). However, as we will see, it already leads to rich algebraic structure, and serves as a good starting point. Directly finding the global solutions using Eqn.~\ref{eq:globalminimizer} can be a bit complicated and highly non-intuitive, due to highly nonlinear structure of Eqn.~\ref{eq:obj}. However, there are nice structures we can leverage, as we will demonstrate below. 

\vspace{-0.1in}
\section{Algebraic Property of the Weight Space}
\vspace{-0.1in}
\label{sec:semi-ring}
We define the \emph{weight space} $\cZ_q = \{\vz\}$ to include all the weight matrices with $q$ hidden nodes ($\cZ_0$ means an empty network), and $\cZ = \bigcup_{q\ge 0} \cZ_q$ be the solution space of all different number of hidden nodes. Interestingly, $\cZ$ naturally is equipped with a \emph{semi-ring} structure, and each term of the loss function can effective interact with such a semi-ring structure, yielding provable global solutions, including both the Fourier solutions empirically reported in previous works~\citep{zhou2024pre,gromov2023grokking}, and the perfect memorization solution~\citep{morwani2023feature}. 

To formalize our argument, we start with a few definitions. 

\def\ringadd{+}
\def\ringaddbig{\sum}
\def\ringmul{*}

\begin{definition}[Order of $\vz$]
\label{def:order-weight}
The order $\ord(\vz)$ of $\vz\in \cZ$ is its number of hidden nodes. 
\end{definition}
\begin{definition}[Scalar multiplication]
\label{def:scalar-multiplication}
$\alpha \vz \in \cZ$ is element-wise multiplication $[\alpha z_{pkj}]$ of $\vz\in\cZ$.
\end{definition}
\begin{definition}[Identification of $\cZ$]
\label{def:permutation-invariant}
In $\cZ$, two solutions of the same order that differ only by a permutation along hidden dimension $j$ are considered identical. 
\end{definition}

We define operations for $\vz_1 := \{z^\bone_{pkj}\}$ and $\vz_2 := \{z^\btwo_{pkj}\}$:
\begin{definition}[Addition and Multiplication in $\cZ$]
\label{def:operationsinz}
Define $\vz = \vz_1 \ringadd \vz_2$ in which $z_{pk\cdot} := \mathrm{concat}(z^\bone_{pk\cdot},z^\btwo_{pk\cdot})$ and $\vz = \vz_1 \ringmul \vz_2$, in which $z_{pk\cdot} := z^\bone_{pk\cdot} \otimes z^\btwo_{pk\cdot}$. The addition and multiplication respect Hermitian constraints and the identity element $\vone$ is the $1$-order solutions with $\{z_{pk0} = 1\}$. 
\end{definition}
Note that the multiplication definition is one special case of Khatri–Rao product~\citep{khatri1968solutions}. Although the Kronecker product and concatenation are not commutative, thanks to the identification (Def.~\ref{def:permutation-invariant}), it is clear that $\vz_1 \ringadd \vz_2 = \vz_2 \ringadd \vz_1$ and $\vz_1 * \vz_2 = \vz_2 * \vz_1$ and thus both operations are commutative. Then we can show: 
\begin{restatable}[Algebraic Structure of $\cZ$]{theorem}{zring}
\label{thm:semi-ring}
$\langle\cZ, \ringadd, *\rangle$ is a commutative semi-ring.  
\end{restatable}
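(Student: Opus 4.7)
The plan is to verify the three axioms of a commutative semi-ring separately: that $\langle \cZ, + \rangle$ is a commutative monoid, that $\langle \cZ, *\rangle$ is a commutative monoid, and that $*$ distributes over $+$. Throughout, I will use the identification of Def.~\ref{def:permutation-invariant} as the essential tool: the raw operations on tuples of hidden units are not literally commutative, but they differ from their reversed versions only by a permutation along the hidden dimension.

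First I would handle $\langle \cZ, + \rangle$. Associativity of concatenation along the $j$-axis is immediate, as is the fact that the empty network in $\cZ_0$ acts as a two-sided identity (concatenating with an empty list leaves the list unchanged, and the order adds correctly: $\ord(\vz_1 + \vz_2) = \ord(\vz_1) + \ord(\vz_2)$). For commutativity, $\vz_1 + \vz_2$ and $\vz_2 + \vz_1$ produce the same multiset of hidden units in a different ordering, hence are identified under Def.~\ref{def:permutation-invariant}. I would also note that the Hermitian constraint is preserved, since concatenation preserves the property $z_{p,-k,j} = \bar z_{pkj}$ entry-by-entry.

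Next I would handle $\langle \cZ, * \rangle$. Here the hidden dimension of $\vz_1 * \vz_2$ has size $\ord(\vz_1)\ord(\vz_2)$, indexed by pairs $(j_1,j_2)$, with entries $z^\bone_{pk j_1} z^\btwo_{pk j_2}$. Associativity reduces to associativity of pointwise multiplication of scalars indexed by triples $(j_1,j_2,j_3)$. The identity element is $\vone$ since $z^\bone_{pk0}\cdot z_{pkj} = z_{pkj}$ for all $p,k,j$, and multiplication by $\vone$ neither grows nor shrinks the hidden axis. Commutativity again uses the identification: swapping the roles of $\vz_1$ and $\vz_2$ relabels the index $(j_1,j_2)\mapsto(j_2,j_1)$, which is a permutation of the hidden axis. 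Hermitian preservation follows from $\overline{z^\bone_{pkj_1} z^\btwo_{pkj_2}} = \bar z^\bone_{pkj_1}\bar z^\btwo_{pkj_2} = z^\bone_{p,-k,j_1} z^\btwo_{p,-k,j_2}$.

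Finally I would verify distributivity $\vz_1 * (\vz_2 + \vz_3) = \vz_1 * \vz_2 + \vz_1 * \vz_3$. Writing out both sides along the hidden axis, the left-hand side has hidden index $(j_1, j')$ where $j'$ ranges over the concatenation, and the right-hand side is the concatenation of two tensors with hidden indices $(j_1,j_2)$ and $(j_1,j_3)$ respectively. The tensor entries match entry-by-entry, and the two hidden-axis orderings differ only by a permutation (interleaving versus block concatenation), so the two are equal under Def.~\ref{def:permutation-invariant}. The distributive law on the other side then follows from commutativity of $*$. I expect this last step to be the main subtlety: one must be careful that the correspondence between Kronecker-product indexing $(j_1,j')$ and concatenation is exactly a permutation of hidden units rather than something that alters the entries, and this is precisely where the identification convention earns its keep.
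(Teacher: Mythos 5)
Your proof is correct and takes exactly the approach the paper has in mind: the paper's own proof simply states that the result is straightforward from the definitions of the operations and the permutation identification, and notes that concatenation has no inverse (hence only a semi-ring). You have filled in the routine verification of the monoid and distributivity axioms that the paper leaves implicit.
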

As we shall see, Thm.~\ref{thm:semi-ring} allows the construction of global solutions. Now let us explore the structure of the loss (Eqn.~\ref{eq:obj}), which turns out to be connected with the semi-ring structure of $\cZ$. For this, we first define the concept of \emph{sum potentials}: 
\begin{definition}[Sum potential (SP)] 
\label{def:sum-potential}
Let \textbf{\emph{sum potential}} be $r(\vz) := \sum_j \prod_{{p,k}\in \idx(r)} z_{pkj}$ which takes the summation of a monomial term over all hidden nodes. Here $\idx(r)$ specifies the terms involved.
\end{definition}

Following this definition, terms in the loss function (Theorem~\ref{thm:analyticform}) are examples of SPs.
\begin{observation}[Specific SPs]
$r_{k_1k_2k}(\vz)$ and $r_{pk_1k_2k}(\vz)$ defined in Theorem~\ref{thm:analyticform} are SPs.
\end{observation}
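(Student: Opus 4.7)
The plan is to verify the observation by direct inspection: it is simply a matter of reading off the index set $\idx(r)$ from the two definitions in Theorem~\ref{thm:analyticform} and matching them to the template in Def.~\ref{def:sum-potential}. There is no genuine calculation to perform; the content of the statement is purely that these specific loss terms fit the SP schema, so the proof amounts to exhibiting the witness $\idx(r)$ in each case.

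Concretely, I would first recall that a sum potential has the form $r(\vz) = \sum_j \prod_{(p,k)\in\idx(r)} z_{pkj}$, where $\idx(r)$ is a (multi)set of $(\mathrm{layer},\mathrm{frequency})$ pairs used to select which entries of the $j$-th hidden node to multiply. Then for $r_{k_1k_2k}(\vz) = \sum_j z_{ak_1j}\, z_{bk_2j}\, z_{ckj}$ I would take $\idx(r_{k_1k_2k}) = \{(a,k_1),(b,k_2),(c,k)\}$, and observe that the displayed expression coincides with $\sum_j \prod_{(p,k')\in\idx(r_{k_1k_2k})} z_{pk'j}$. Similarly, for $r_{pk_1k_2k}(\vz) = \sum_j z_{pk_1j}\, z_{pk_2j}\, z_{ckj}$ with $p\in\{a,b\}$, I would take $\idx(r_{pk_1k_2k}) = \{(p,k_1),(p,k_2),(c,k)\}$ (a multiset, since the pair $(p,\cdot)$ appears twice with possibly distinct frequencies), and again read off the SP form.

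Since both identifications are immediate from the definitions, there is effectively no obstacle; the only point worth flagging is the mild bookkeeping that $\idx(r)$ should be understood as a multiset (so that repeated occurrences of the same layer index $p$ are allowed), which is consistent with Def.~\ref{def:sum-potential} since the product is taken over the elements of $\idx(r)$ with multiplicity. The observation then follows.
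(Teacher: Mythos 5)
Your proposal is correct and is essentially the only possible proof: the paper states this as an observation precisely because it follows by reading off $\idx(r)$ from Definition~\ref{def:sum-potential}, and your choices $\idx(r_{k_1k_2k}) = \{(a,k_1),(b,k_2),(c,k)\}$ and $\idx(r_{pk_1k_2k}) = \{(p,k_1),(p,k_2),(c,k)\}$ are exactly the intended witnesses. Your remark about treating $\idx(r)$ as a multiset (needed when $k_1 = k_2$ in $r_{pk_1k_2k}$) is a fair clarification of a point the paper leaves implicit.
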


\def\fR{\mathfrak{R}}
\def\zz{\mathbb{Z}}
\def\nn{\mathbb{N}}
\def\spann{\mathrm{span}}
\def\vzero{\mathbf{0}}
\def\Ker{\mathrm{Ker}}

So what is the relationship between SPs, which are functions that map a weight $\vz$ to a complex scalar, and the semi-ring structure of $\cZ$? The following theorem tells that SPs are \emph{ring homomorphisms}, that is, these mappings respect addition and multiplication: 
\begin{restatable}{theorem}{pothomo}
\label{thm:pothomo}
For any sum potential $r: \cZ \mapsto \cc$, $r(\vone) = 1$, $r(\vz_1 \ringadd \vz_2) = r(\vz_1) + r(\vz_2)$ and $r(\vz_1 \ringmul \vz_2) = r(\vz_1)r(\vz_2)$ and thus $r$ is a ring homomorphism. 
\end{restatable}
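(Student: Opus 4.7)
The plan is to verify the three ring-homomorphism identities directly from the definitions, which is essentially a bookkeeping exercise once one sees how the hidden-node index $j$ behaves under $\ringadd$ and $\ringmul$. The three identities correspond exactly to the three structural pieces of $\cZ$: $\vone$ is the order-$1$ identity, $\ringadd$ glues hidden nodes by concatenation, and $\ringmul$ pairs hidden nodes by Khatri--Rao product.

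First I would dispatch the identity: by Def.~\ref{def:operationsinz}, $\vone$ has a single hidden node $j=0$ with $z_{pk0}=1$ for every $(p,k)$, so the single-term sum in $r(\vone) = \sum_{j}\prod_{(p,k)\in\idx(r)} z_{pkj}$ collapses to a product of $1$'s and equals $1$. Next, for additivity, I would note that if $\ord(\vz_1)=q_1$ and $\ord(\vz_2)=q_2$, then $\ord(\vz_1\ringadd\vz_2)=q_1+q_2$, and the definition $z_{pk\cdot}=\mathrm{concat}(z^\bone_{pk\cdot},z^\btwo_{pk\cdot})$ means that for $j\le q_1$ the entries coincide with those of $\vz_1$ and for $q_1<j\le q_1+q_2$ with those of $\vz_2$ (after an index shift). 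Splitting the sum $\sum_j$ at $j=q_1$ and observing that the shift is absorbed by Def.~\ref{def:permutation-invariant} (permutation invariance along $j$) yields $r(\vz_1\ringadd\vz_2)=r(\vz_1)+r(\vz_2)$.

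For multiplicativity, the point is that the Khatri--Rao product factorizes each summand into an independent product over two indices. Concretely, reindexing the $q_1q_2$ hidden nodes of $\vz_1\ringmul\vz_2$ by pairs $(j_1,j_2)$, Def.~\ref{def:operationsinz} gives $z_{pk,(j_1,j_2)}=z^\bone_{pkj_1}\,z^\btwo_{pkj_2}$. Substituting into the definition of $r$ and using that the product over $(p,k)\in\idx(r)$ separates between the $\vz_1$- and $\vz_2$-factors, I would write
\begin{equation}
r(\vz_1\ringmul\vz_2)=\sum_{j_1,j_2}\!\prod_{(p,k)\in\idx(r)}\! z^\bone_{pkj_1}z^\btwo_{pkj_2}=\Big(\sum_{j_1}\!\prod z^\bone_{pkj_1}\Big)\!\Big(\sum_{j_2}\!\prod z^\btwo_{pkj_2}\Big)=r(\vz_1)r(\vz_2),
\end{equation}
where the middle step is the Fubini-type factorization of a finite sum over a Cartesian product.

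I do not anticipate a substantive obstacle: the only subtlety is making sure the identification in Def.~\ref{def:permutation-invariant} is invoked cleanly so that the concatenation-induced index shift in the additivity step is legitimate, and that the pair reindexing in the multiplicativity step matches the exact ordering used to define $\otimes$. Once those conventions are fixed, the three identities are immediate, and the ring-homomorphism claim follows since $r$ preserves the identity, addition, and multiplication of $\cZ$ and lands in the commutative ring $\cc$.
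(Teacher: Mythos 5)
Your proof is correct and follows essentially the same route as the paper's: verify $r(\vone)=1$ directly from the order-$1$ identity, split the sum over hidden nodes at the concatenation boundary for additivity, and factor the sum over the pair index $(j_1,j_2)$ for multiplicativity. The paper phrases the additivity step with $\supp(\vz_1)$, $\supp(\vz_2)$ notation rather than an explicit index shift, but the content is the same.
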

\begin{restatable}{observation}{ordhomo}
\label{lemma:ordhomo}
The order function $\ord: \cZ \mapsto \nn$ is also a ring homomorphism.
\end{restatable}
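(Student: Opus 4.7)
The plan is to verify the three homomorphism axioms for $\ord: \cZ \to \nn$ directly from the definitions, treating $\nn$ as the usual commutative semi-ring under addition and multiplication. Since this is an observation rather than a deep theorem, the work is almost entirely a matter of unpacking Def.~\ref{def:order-weight} and Def.~\ref{def:operationsinz}; the only conceptual point worth flagging is well-definedness under the identification in Def.~\ref{def:permutation-invariant}.

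First I would check the identity: by Def.~\ref{def:operationsinz}, the multiplicative identity $\vone \in \cZ$ is a 1-order solution, so $\ord(\vone) = 1$, matching the multiplicative identity of $\nn$. Next, for addition, recall that $\vz_1 \ringadd \vz_2$ is defined by concatenating along the hidden dimension $j$, so if $\vz_1$ has $q_1$ hidden nodes and $\vz_2$ has $q_2$ hidden nodes, then $\vz_1 \ringadd \vz_2$ has exactly $q_1 + q_2$ hidden nodes. Hence $\ord(\vz_1 \ringadd \vz_2) = \ord(\vz_1) + \ord(\vz_2)$. For multiplication, $\vz_1 \ringmul \vz_2$ uses the Khatri--Rao-type product $z_{pk\cdot} := z^{\bone}_{pk\cdot} \otimes z^{\btwo}_{pk\cdot}$; the Kronecker product of a length-$q_1$ vector and a length-$q_2$ vector has length $q_1 q_2$, so $\ord(\vz_1 \ringmul \vz_2) = \ord(\vz_1)\,\ord(\vz_2)$.

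It remains to confirm that $\ord$ is well-defined as a function on $\cZ$, not merely on concrete weight arrays. By Def.~\ref{def:permutation-invariant}, elements of $\cZ$ are identified modulo permutation along the hidden dimension, and a permutation preserves the number of hidden nodes. So $\ord$ descends to an invariant on equivalence classes, and the three identities above pass through the identification since concatenation and Kronecker product produce representatives whose lengths depend only on $q_1$ and $q_2$.

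I do not anticipate any real obstacle: the statement is essentially the observation that concatenation adds lengths and Kronecker product multiplies them, together with $\ord(\vone)=1$. The one small subtlety, which I would mention explicitly, is that $\nn$ must be regarded as a commutative semi-ring (rather than a ring) so that the codomain matches the semi-ring structure of $\cZ$ established in Theorem~\ref{thm:semi-ring}; otherwise one would be asking a homomorphism to land in a structure that does not exist.
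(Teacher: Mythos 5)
Your proof is correct and takes the only natural approach: the paper states this as an unproved observation, and the content is exactly what you spell out — concatenation adds hidden-node counts, the Kronecker product multiplies them, $\ord(\vone)=1$, and everything is invariant under the permutation identification of Def.~\ref{def:permutation-invariant}. Your remark that the codomain $\nn$ is a commutative semi-ring (not a ring) is a worthwhile clarification that keeps the statement consistent with Theorem~\ref{thm:semi-ring}.
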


Since the loss function $\ell(\vz)$ depends on the weight $\vz$ entirely through $r_{k_1k_2k}(\vz)$ and $r_{pk_1k_2k}(\vz)$, which are SPs, due to the property of ring homomorphism, it is possible to construct a global solution from partial solutions that satisfy only some of the constraints\footnote{Mathematically, the \emph{kernel} $\Ker(r) := \{\vz : r(\vz) = 0\}$ of a ring homomorphism $r$ is an \emph{ideal} of the ring, and the intersection of ideals are still ideals. For brevity, we omit the formal definitions.}:
\begin{restatable}[Composing Partial Solutions]{lemma}{composesol}
\label{lemma:composesol}
If $\vz_1$ has 0/1-sets $(R_1^-, R^+_1)$ and $\vz_2$ has 0/1-sets $(R_2^-, R^+_2)$, then (1) $\vz_1*\vz_2$ has 0/1-sets $(R_1^-\cup R_2^-, R^+_1\cap R^+_2)$ and (2) $\vz_1 + \vz_2$ have 0/1-sets $(R_1^-\cap R_2^-, (R^+_1\cap R^-_2)\cup (R^-_1 \cap R^+_2))$. 
\end{restatable}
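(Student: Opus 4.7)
The plan is to apply Theorem~\ref{thm:pothomo} directly, exploiting the fact that every sum potential $r$ is a ring homomorphism. Concretely, for each SP $r$ that labels an element of one of the four sets in the conclusion, I would compute $r(\vz_1 \ringmul \vz_2)$ and $r(\vz_1 \ringadd \vz_2)$ from $r(\vz_1), r(\vz_2)$ via the homomorphism identities $r(\vz_1 \ringmul \vz_2) = r(\vz_1) r(\vz_2)$ and $r(\vz_1 \ringadd \vz_2) = r(\vz_1) + r(\vz_2)$, and verify the claimed value ($0$ or $1$) drops out by simple case analysis.

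For part (1), take any $r \in R_1^- \cup R_2^-$. Then at least one of $r(\vz_1), r(\vz_2)$ vanishes by the definition of a $0$-set, so $r(\vz_1 \ringmul \vz_2) = r(\vz_1) r(\vz_2) = 0$, placing $r$ in the $0$-set of $\vz_1 \ringmul \vz_2$. Conversely for any $r \in R_1^+ \cap R_2^+$, both $r(\vz_1)$ and $r(\vz_2)$ equal $1$, so their product is $1$, placing $r$ in the $1$-set of $\vz_1 \ringmul \vz_2$.

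For part (2), take $r \in R_1^- \cap R_2^-$: both $r(\vz_i) = 0$, so the sum is $0$. For $r \in R_1^+ \cap R_2^-$, we get $r(\vz_1 \ringadd \vz_2) = 1 + 0 = 1$, and symmetrically for $r \in R_1^- \cap R_2^+$. Taking the union over these two symmetric cases yields the claimed $1$-set.

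There is essentially no obstacle: once Theorem~\ref{thm:pothomo} is in hand, the lemma is a one-line case analysis on whether each SP evaluates to $0$ or $1$ on $\vz_1$ and $\vz_2$. The only subtlety worth flagging is that the statement gives \emph{sufficient} descriptions of the 0/1-sets after composition (the actual $0$/$1$-sets could be strictly larger, since cancellations in $r(\vz_1) + r(\vz_2)$ could produce $0$ or $1$ in unintended ways); the lemma should therefore be read as asserting containment of the indicated sets inside the true 0/1-sets of the composed weight, which is exactly what the argument above establishes.
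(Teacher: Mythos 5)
Your proof is correct and is exactly the intended argument: the paper treats Lemma~\ref{lemma:composesol} as an immediate consequence of Theorem~\ref{thm:pothomo} and does not write out a separate proof, and your case analysis on $r(\vz_1), r(\vz_2) \in \{0,1\}$ via the homomorphism identities is the natural and correct way to spell it out. Your closing caveat---that the stated 0/1-sets are only guaranteed to be contained in the true 0/1-sets of the composed weight---is also the right reading of the lemma.
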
 
Once we reach 0/1-sets $(R_\c \cup R_\n \cup R_*, R_\g)$, we find a global solution. In addition, we also immediately know that there exists infinitely many global solutions, via ring multiplication (Def.~\ref{def:operationsinz}):
\begin{definition}[Unit] $\vz$ is a \emph{unit} if $r_{kkk}(\vz) = 1$ for all $k \neq 0$. 
\end{definition}
\begin{restatable}{corollary}{algebraglobaloptimizer}
\label{co:algebraoptimizer}
If $\vz$ is a global solution and $\vy$ is a unit, then $\vz \ringmul \vy$ is also a global solution.
\end{restatable}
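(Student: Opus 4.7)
The plan is to leverage Theorem~\ref{thm:pothomo}, which states that every sum potential $r$ is a ring homomorphism, so in particular $r(\vz \ringmul \vy) = r(\vz)\, r(\vy)$ for any $\vz, \vy \in \cZ$. Since the hypothesis in Lemma~\ref{co:globalminimizer} that certifies $\vz \ringmul \vy$ being a global solution is expressed entirely in terms of the values of the SPs $r_{kkk}$, $r_{k_1 k_2 k}$, and $r_{p k_1 k_2 k}$, verifying each condition reduces to checking a product of two scalars.

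First I would enumerate the four families of constraints appearing in Eqn.~\ref{eq:globalminimizer}. For the ``diagonal'' family $\con_\g = \{r_{kkk} : k \neq 0\}$, I would compute $r_{kkk}(\vz \ringmul \vy) = r_{kkk}(\vz)\, r_{kkk}(\vy) = 1 \cdot 1 = 1$, using that $\vz$ is a global solution (so the first factor is $1$) and that $\vy$ is a unit (so the second factor is $1$ for every $k \neq 0$, by definition of a unit). For the three zero-families $\con_\c$, $\con_\n$, $\con_*$, I would write $r(\vz \ringmul \vy) = r(\vz)\, r(\vy) = 0 \cdot r(\vy) = 0$, so these constraints survive automatically regardless of what $r(\vy)$ is; this is exactly the ``ideal absorption'' behavior of kernels of ring homomorphisms alluded to in the footnote after Lemma~\ref{lemma:composesol}. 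Combining these, $\vz \ringmul \vy$ satisfies all hypotheses of Lemma~\ref{co:globalminimizer} and is therefore a global solution.

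Equivalently, the same conclusion drops out of Lemma~\ref{lemma:composesol}: if $\vz$ has 0/1-sets $(R_\c \cup R_\n \cup R_*,\, R_\g)$ and $\vy$ has 0/1-sets $(R_\vy^-,\, R_\vy^+)$ with $R_\g \subseteq R_\vy^+$ (the unit condition), then $\vz \ringmul \vy$ has 0/1-sets $\bigl((R_\c \cup R_\n \cup R_*) \cup R_\vy^-,\, R_\g \cap R_\vy^+\bigr) = \bigl((R_\c \cup R_\n \cup R_*) \cup R_\vy^-,\, R_\g\bigr)$, which still contains the sufficient sets for a global minimum.

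I do not expect a genuine obstacle; the corollary is a direct corollary of the homomorphism property established in Theorem~\ref{thm:pothomo}. The only subtlety worth a sentence is to emphasize that $\vy$ is only required to satisfy the $r_{kkk}=1$ conditions — its other SPs may be arbitrary — yet arbitrariness is harmless because multiplication by zero kills the remaining constraints, which is precisely the algebraic point that motivates the definition of a unit.
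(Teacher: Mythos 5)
Your proof is correct and takes essentially the same approach as the paper's: apply the ring-homomorphism property of sum potentials (Theorem~\ref{thm:pothomo}) factor by factor to the $0/1$-set conditions of Lemma~\ref{co:globalminimizer}. The paper's own version is in fact terser — it only exhibits $r_{kkk}(\vz * \vy) = r_{kkk}(\vz)\,r_{kkk}(\vy) = r_{kkk}(\vz)$ and declares the proof complete — so you make explicit the zero-set verification that the paper leaves to the reader.

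One caveat that you and the paper share: the argument tacitly reads ``global solution'' as a $\vz$ meeting the \emph{sufficient} $0/1$-set conditions of Lemma~\ref{co:globalminimizer}. The paper's own $\vz_{F4/6}$ (Corollary~\ref{co:foursixsol}) is global yet has individual $r_{p,k',-k',m}(\vz)\neq 0$ with only the relevant sums vanishing; after Khatri--Rao multiplication by a unit $\vy$, each term gets reweighted and $\sum_{k'} r_{p,k',m-k',k}(\vz)\,r_{p,k',m-k',k}(\vy)$ need not vanish, so the ``$0\cdot r(\vy)=0$'' step would not apply. This is a looseness in the corollary's statement itself rather than an error unique to your reasoning — restricted to solutions in the sufficient-condition family (which is what the paper clearly intends), your proof is at least as complete as the published one.
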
 


\vspace{-0.1in}
\section{Composing Global Solutions}
\vspace{-0.1in}
\label{sec:composing-solutions}
\textbf{Constructing Partial Solutions with Polynomials}. While intuitively one can get global solutions by manually crafting some partial solutions and combining, in this section, we provide a more systematic approach to compose global solutions as follows. Since $\cZ$ enjoys a semi-ring structure, we consider a \emph{polynomial} in $\cZ$: 
\begin{equation}
    \vz = \vu^{L} \ringadd \vc_1 \ringmul \vu^{L-1} \ringadd \vc_2 \ringmul \vu^{L-2} \ringadd \ldots \ringadd \vc_L
\end{equation}
where the \emph{generator} $\vu$ and coefficients $\vc_l$ are order-1 and the power operation $\vu^l$ is defined by ring multiplication. Then a partial solution can be constructed: 

\begin{restatable}[Construction of partial solutions]{theorem}{polyconst}
\label{thm:polyconstruct}
Suppose $\vu$ has 1-set $\con_1$, $\Omega_{\con}(\vu) := \{r(\vu) | r \in \con \} \subseteq \cc$ is a set of evaluations on $\con$ (multiple values counted once), then if $1 \notin \Omega_\con$, then the polynomial solution $\vrho_R(\vu) := \prod_{s\in \Omega_\con(\vu)} (\vu + \hat\vs)$ has 0/1-set $(R, R_1)$ up to a scale. Here $\hat\vs$ is any order-1 weight that satisfies $r(\hat\vs) = -s$ for any $r \in R \cup R_1$. For example, $\hat\vs = -s^{1/3}\vone$.
\end{restatable}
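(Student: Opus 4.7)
The plan is to use the ring-homomorphism property of sum potentials established in Theorem~\ref{thm:pothomo} to reduce the claim, for each $r \in R \cup R_1$, to a one-line scalar computation. Concretely, since every SP $r$ satisfies $r(\vz_1 \ringadd \vz_2) = r(\vz_1) + r(\vz_2)$, $r(\vz_1 \ringmul \vz_2) = r(\vz_1) r(\vz_2)$, and $r(\vone) = 1$, applying $r$ to the polynomial weight $\vrho_R(\vu)$ commutes with its ring-polynomial structure, so I would first expand
\[
r(\vrho_R(\vu)) \;=\; \prod_{s \in \Omega_R(\vu)} r(\vu \ringadd \hat\vs) \;=\; \prod_{s \in \Omega_R(\vu)} \bigl(r(\vu) - s\bigr),
\]
where the last equality uses the defining property $r(\hat\vs) = -s$.

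Next, I would split into two cases. If $r \in R$, then by definition $s_0 := r(\vu)$ lies in $\Omega_R(\vu)$, so the factor indexed by $s = s_0$ vanishes and $r(\vrho_R(\vu)) = 0$, giving the $0$-set. If $r \in R_1$, then $r(\vu) = 1$ (since $\vu$ has $1$-set $R_1$), and
\[
r(\vrho_R(\vu)) \;=\; \prod_{s \in \Omega_R(\vu)} (1 - s) \;=:\; c,
\]
a constant independent of the particular $r \in R_1$ chosen. The assumption $1 \notin \Omega_R(\vu)$ then guarantees $c \neq 0$.

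To upgrade ``1-set up to a scale'' to an actual $1$-set, I would invoke that every SP appearing in the loss (Theorem~\ref{thm:analyticform}, Eqn.~\ref{eq:obj}) is a homogeneous cubic monomial, so $r(\alpha \vz) = \alpha^3 r(\vz)$ for any scalar $\alpha$. Replacing $\vrho_R(\vu)$ by $c^{-1/3}\, \vrho_R(\vu)$ leaves the $0$-set untouched and rescales the common $1$-set value to $1$; this is the precise meaning of ``up to a scale''. The sample choice $\hat\vs = -s^{1/3}\vone$ given in the statement is verified by exactly the same scaling identity: $r(-s^{1/3}\vone) = (-s^{1/3})^3 r(\vone) = -s$, uniformly over every cubic $r \in R \cup R_1$.

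The only mild obstacle is that a \emph{single} order-$1$ weight $\hat\vs$ must satisfy $r(\hat\vs) = -s$ simultaneously for all $r \in R \cup R_1$. This is the reason it matters that the relevant SPs share a common degree: a scalar multiple of $\vone$ then works uniformly via the homogeneity just used. Once this compatibility is granted, the proof is pure bookkeeping in the semi-ring $\cZ$ of Theorem~\ref{thm:semi-ring} --- ring homomorphism, an elementary factorization, and a cube-root rescaling --- with no further analytic input needed.
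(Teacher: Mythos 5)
Your proof is correct and takes the same route as the paper's: apply the ring-homomorphism property of SPs (Theorem~\ref{thm:pothomo}) to expand $r(\vrho_R(\vu)) = \prod_{s\in\Omega_R(\vu)}(r(\vu)-s)$, note that one factor vanishes for $r\in R$ and that the product equals the nonzero constant $\prod_{s}(1-s)$ for $r\in R_1$, then absorb the constant by rescaling. Your additional paragraph making the cube-root rescaling explicit via the degree-$3$ homogeneity of the SPs — including the check that a single $\hat\vs = -s^{1/3}\vone$ simultaneously satisfies $r(\hat\vs)=-s$ across all cubic $r$ — is a useful elaboration that the paper leaves implicit in its phrase ``up to a scaling factor.''
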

For convenience, we use $\vrho(\vu)$ to represent the \emph{maximal} polynomial, i.e., when $R = \arg\max_{1\notin\Omega_R(\vu)} |\Omega_R(\vu)|$ is the largest subset of SPs with $1\notin\Omega_R(\vu)$. Our goal is to find low-order (partial) solutions, since gradient descent prefers low order solutions (see Theorem~\ref{thm:loworderfirst}). Although there exist high-degree but low-order polynomials, e.g., $\vu^9+\vone$, in general, degree $L$ and order $q$ are correlated, and we can find low-degree ones instead. To achieve that, $\vu$ should be properly selected (e.g., symmetric weights) to create as many duplicate values (but not $1$) in $R$ as possible. 

\begin{table}[t]
\small
    \centering
    \setlength{\tabcolsep}{2pt}

\begin{tabular}{c|c||c|c|c||c|c||c|c|c|c||c|c}
        &  & \multicolumn{9}{c|}{Evaluation on SPs} &  &  \\ 
                       &  & \multicolumn{3}{c|}{$R_\c$} & \multicolumn{2}{c|}{$R_\n$} & \multicolumn{4}{c|}{$R_*$} & Maximal & \\
        Symbol & $[a, b, c]$ &  $\bar a b c$ & $a \bar b c$ & $ab\bar c$ & $\bar a a c$ & $\bar b b c$ & $aac$ & $bbc$ & $\bar a\bar a c$ & $\bar b\bar b c$ & polynomial $\vrho(\vu)$ & order $q$ \\
                       
       \hline
       $\vone_k$ & $[1, 1, 1]$ & \cellcolor{red!25} $1$ & \cellcolor{red!25} $1$ & \cellcolor{red!25} $1$ & \cellcolor{red!25} $1$ & \cellcolor{red!25} $1$ & \cellcolor{red!25} $1$ & \cellcolor{red!25} $1$ & \cellcolor{red!25} $1$ & \cellcolor{red!25} $1$ & -- & -- \\ 
       
       $\tilde\vone_k$ & $[-1, -1, 1]$ &\cellcolor{red!25} $1$ &\cellcolor{red!25} $1$\cellcolor{red!25} &\cellcolor{red!25} $1$ &\cellcolor{red!25} $1$ &\cellcolor{red!25} $1$ &\cellcolor{red!25} $1$ &\cellcolor{red!25} $1$
       &\cellcolor{red!25} $1$ &\cellcolor{red!25} $1$ & -- & -- \\ 
       
      $\vu_{\one}$  & $[1, -1, -1]$ &\cellcolor{red!25} $1$ &\cellcolor{red!25} $1$\cellcolor{red!25} &\cellcolor{red!25} $1$ & $-1$ &$-1$ &$-1$ &$-1$ &$-1$ &$-1$ & $\vu + \vone$ & 2 \\ 
       
       $\vu_{\syn}$ & $[\omega_3, \omega_3, \omega_3]$  & $\omega_3$ & $\omega_3$ & $\omega_3$ & $\omega_3$ & $\omega_3$ &\cellcolor{red!25} $1$ &\cellcolor{red!25} $1$ & $\bar\omega_3$ & $\bar\omega_3$ & $\vu^2 + \vu + \vone$ & 3 \\
       
       $\vu_{3\c}$ & $[\omega_3, \bar\omega_3, 1]$  & $\omega_3$ & $\bar\omega_3$ &\cellcolor{red!25} $1$ &\cellcolor{red!25} $1$ &\cellcolor{red!25} $1$ & $\bar\omega_3$ & $\omega_3$ & $\omega_3$ & $\bar\omega_3$ & $\vu^2 + \vu + \vone$ & 3 \\

       $\vu_{3\mathrm{a}}$ & $[1, \omega_3, \bar\omega_3]$  & \cellcolor{red!25} $1$ & $\omega_3$ 
       & $\bar\omega_3$ & 
       $\bar\omega_3$ & $\bar\omega_3$ & $\bar\omega_3$ & $\omega_3$ & $\bar\omega_3$ &\cellcolor{red!25} $1$ & $\vu^2 + \vu + \vone$ & 3 \\

       $\vu_{4\mathrm{c}}$ & $[\i, -\i, 1]$  & $-1$ & $-1$ & \cellcolor{red!25} $1$ &\cellcolor{red!25} $1$  &\cellcolor{red!25} $1$ & $-1$ & $-1$ & $-1$ & $-1$ & $\vu + \vone$ & 2 \\

       $\vu_{4\mathrm{a}}$ & $[1, \i, -\i]$  &\cellcolor{red!25} $1$ & $-1$ & $-1$ & $-\i$ & $-\i$ & $-\i$ & $\i$ & $-\i$ & $\i$ & $\vu^3 + \vu^2 + \vu + \vone$ & 4 \\ 
       
       $\vu_{\nu}$ & $[\nu, -\nu, -\bar\nu^2]$ & $\nu^2$ & $\nu^2$ & $\nu^4$ & $-\bar \nu^2$ & $-\bar \nu^2$ & $-1$ & $-1$ & $-\nu^4$ & $-\nu^4$ & 9-th degree & 10  
\end{tabular}
    \caption{\small Exemplar order-1 single frequency generator $\vu^\bk$ with $r_{kkk}(\vu^\bk) = 1$. In the single-frequency case, for each MP $r$ we use ``$\bar abc$'' to represent $r_{-k,k,k}$ and ``$\bar a\bar a c$'' to represent $r_{a,-k,-k,k}$, etc. For brevity, superscript ``$\bk$'' and conjugate columns (i.e., $\bar a \bar b c$ conjugate to $a b \bar c$) are omitted. Here, $\omega_3 := e^{2\pi \i / 3}$ and $\omega_4 := \i$ are 3rd/4th roots of unity. The constructions are partial, i.e., the evaluation of some SPs yields $1$ ({\color{red}red} cell) and cannot be the root of the polynomial (Theorem~\ref{thm:polyconstruct}). Note that $\vu_\nu$ is a general case with $\vu_{\nu=1} = \vu_\one$ and $\vu_{\nu=\i} = \vu_{4\c}$.}
    \label{tab:poly-construction}
\end{table}

\textbf{Composing Global Solutions}. We first consider the case that the generator $\vu$ is only nonzero at frequency $k$ (and thus $-k$ by Hermitian constraints), but zero in other frequencies, i.e., $u_{pk'0} = 0$ for $k' \neq \pm k$. Such solutions correspond to Fourier bases in the original domain. Also, $\vu$ has 1-set $R_1 = \{r_{kkk}\}$. This means that $\vu$ can be characterized by three numbers $u_{ak0}, u_{bk0}, u_{ck0}$ with $u_{ak0}u_{bk0}u_{ck0} = 1$. In this case, only a subset of sum potentials (SPs) whose indices only involve a single frequency $k$ are non-zero (e.g., $r_{k,-k,k} \in R_\c$ and $r_{b,-k,k,k} \in R_\n$), facilitating our construction. 

Following Theorem~\ref{thm:polyconstruct}, we can construct different partial solutions. Some examples are shown in Table~\ref{tab:poly-construction}. These solutions do not make all sum potentials in $R_\c \cup R_\n \cup R_*$ vanish and therefore are not global. Note that it is possible to create a global solution this way, but then $|\Omega_\con(\vu)|$ will be too large, producing high-degree/order polynomials (e.g., $\vu_{3\c} \ringmul \vu_{4\a}$ gives a 10th-degree polynomial). Instead, utilizing these partial solutions, with Lemma~\ref{lemma:composesol} we can construct global solutions with smaller orders:  

\begin{restatable}[Order-6 global solutions]{corollary}{ordersix}
\label{co:order-6}
The following $``3\times 2"$ Fourier solutions satisfy the sufficient condition  (Lemma~\ref{co:globalminimizer}) and thus are global solutions when $d$ is odd:
\begin{equation}
    \vz_{F6} = \frac{1}{\sqrt[3]{6}}\ringaddbig_{k=1}^{(d-1)/2} \vz^\bk_{\syn} \ringmul \vz^\bk_{\nu} \ringmul \vy_k
\end{equation}
Here $\vz^\bk_{\syn} := \vrho(\vu^\bk_\syn)$ and $\vz^\bk_{\nu} := \vu^\bk_{\nu} + \vone_k$ (i.e., not maximal polynomial), where $\vu_{\syn}$ and $\vu_{\nu}$ are defined in Table~\ref{tab:poly-construction}. $\vy$ is an order-1 unit. As a result, $\ord(\vz_{F6}) = 3 \cdot 2 \cdot 1 \cdot (d-1)/2 = 3(d-1)$ and each frequency are affiliated with 6 hidden nodes (order-6).  
\end{restatable}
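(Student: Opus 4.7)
The plan is to verify the three conditions of Lemma~\ref{co:globalminimizer} for $\vz_{F6}$ by exploiting that every SP is a ring homomorphism (Theorem~\ref{thm:pothomo}), together with the composition rule of Lemma~\ref{lemma:composesol}. Writing $\alpha := 1/\sqrt[3]{6}$ and noting that each SP is homogeneous of degree three so that $r(\alpha \vz) = \alpha^3 r(\vz)$, the homomorphism property yields
\begin{equation*}
r(\vz_{F6}) \;=\; \frac{1}{6}\sum_{k=1}^{(d-1)/2} r(\vz^\bk_{\syn})\, r(\vz^\bk_{\nu})\, r(\vy_k)
\end{equation*}
for every SP $r$. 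It therefore suffices to evaluate each of the three factors on the single-frequency generators of Table~\ref{tab:poly-construction}.

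First I would discard all $r \in R_\c \cup R_\n \cup R_*$ whose indices are not contained in $\{k,-k\}$ for one single $k$: in that case at least one of the factors $z_{p,\cdot,j}$ in the underlying monomial is identically zero across the hidden nodes of $\vz^\bk_\syn$, so the summand vanishes. This reduces the verification to a per-frequency check. Theorem~\ref{thm:polyconstruct} applied with $\Omega_{R}(\vu_\syn)\setminus\{1\} = \{\omega_3,\bar\omega_3\}$ tells us that $\vz^\bk_\syn = \vrho(\vu^\bk_\syn)$ annihilates every SP on which $\vu_\syn$ evaluates to $\omega_3$ or $\bar\omega_3$; reading the $\vu_\syn$ row of Table~\ref{tab:poly-construction}, the only single-frequency SPs in $R_\c\cup R_\n\cup R_*$ on which $\vu_\syn$ evaluates to $1$ are $r_{a,k,k,k}$ and $r_{b,k,k,k}$. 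On $\vu_\nu$ both evaluate to $-1$ (using $|\nu|=1$, so $r_{a,k,k,k}(\vu_\nu) = \nu\cdot\nu\cdot(-\bar\nu^2) = -1$), hence they vanish on $\vz^\bk_\nu = \vu^\bk_\nu + \vone_k$, and multiplication by $\vy_k$ preserves zeros by Lemma~\ref{lemma:composesol}. Hermitian-conjugate SPs omitted from the table are handled automatically, since e.g.\ $r_{-k,-k,k} = \overline{r_{k,k,-k}}$.

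For the $R_\g$ condition I would compute $r_{kkk}(\vz^\bk_\syn) = r_{kkk}(\vu_\syn)^2 + r_{kkk}(\vu_\syn) + 1 = 1 + 1 + 1 = 3$, $r_{kkk}(\vz^\bk_\nu) = 1 + 1 = 2$, and $r_{kkk}(\vy_k) = 1$ (unit), giving $r_{kkk}(\vz_{F6}) = (1/6)\cdot 3\cdot 2\cdot 1 = 1$ for each $k \in \{1,\ldots,(d-1)/2\}$; negative frequencies follow from $r_{-k,-k,-k} = \overline{r_{kkk}} = 1$. The order count comes from Observation~\ref{lemma:ordhomo}: $\ord(\vz^\bk_\syn \ringmul \vz^\bk_\nu \ringmul \vy_k) = 3\cdot 2\cdot 1 = 6$, and concatenation of $(d-1)/2$ summands yields $\ord(\vz_{F6}) = 3(d-1)$. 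The main obstacle is really bookkeeping rather than ideas: carefully enumerating all single-frequency SPs in $R_\c\cup R_\n\cup R_*$ together with their Hermitian conjugates, and specifying $\vy_k$ as an order-$1$ single-frequency unit at $\pm k$ with $r_{kkk}(\vy_k)=1$ so that the ring product remains localized to that frequency.
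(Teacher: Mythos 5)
Your proposal is correct and follows essentially the same path as the paper's proof: reduce to a per-frequency computation via the ring-homomorphism property and degree-$3$ homogeneity of SPs, read off from Table~\ref{tab:poly-construction} that $\vz^\bk_\syn = \vrho(\vu^\bk_\syn)$ kills every single-frequency SP except $r_{akkk}$ and $r_{bkkk}$, observe that $\vz^\bk_\nu$ kills exactly those two, and finally check $r_{kkk}(\vz^\bk_\syn)\,r_{kkk}(\vz^\bk_\nu)\,r_{kkk}(\vy_k)=3\cdot 2\cdot 1=6$ so the $1/\sqrt[3]{6}$ prefactor normalizes the $1$-set. You make explicit the $\alpha^3$ scaling and the Hermitian-conjugate handling that the paper leaves implicit, but there is no substantive difference in approach.
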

\underline{\emph{Remarks}}. We may replace $\vu_\syn$ and $\vu_{\nu}$ with other pairs that collectively cover all SPs. For example, $\vu_\syn$ can be combined with any of $\{\vu_{3\c}, \vu_{3\a}, \vu_{4\a}\}$, and $\vu_{\nu=\pm \i}$ can be coupled with $\vu_{3\a}$ or $\vu_{4\a}$, etc. Here we pick one with a small order. Compared to~\cite{gromov2023grokking}, our construction is more concise without infinite-width approximation.  
For even $d$, simply replace $(d-1)/2$ with $\lfloor(d-1)/2\rfloor$ and add an additional order-2 term $\vrho(\vu_\one) = \vu_\one + \vone$ (Tbl.~\ref{tab:poly-construction}) for frequency $k=d/2$, which only has $r_{kkk}$, $r_{akkk}$ and $r_{bkkk}$, and all other combinations are absent. 

\begin{figure}
    \centering
    \vspace{-0.15in}
    \includegraphics[width=0.9\linewidth]{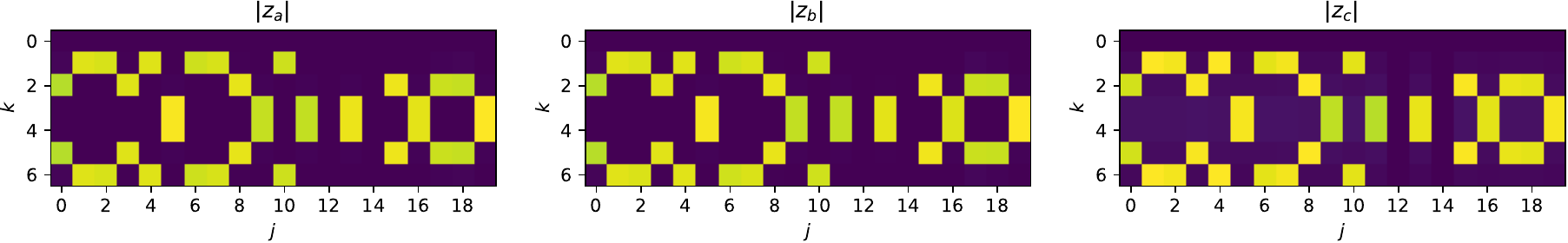}
    \includegraphics[width=0.9\linewidth]{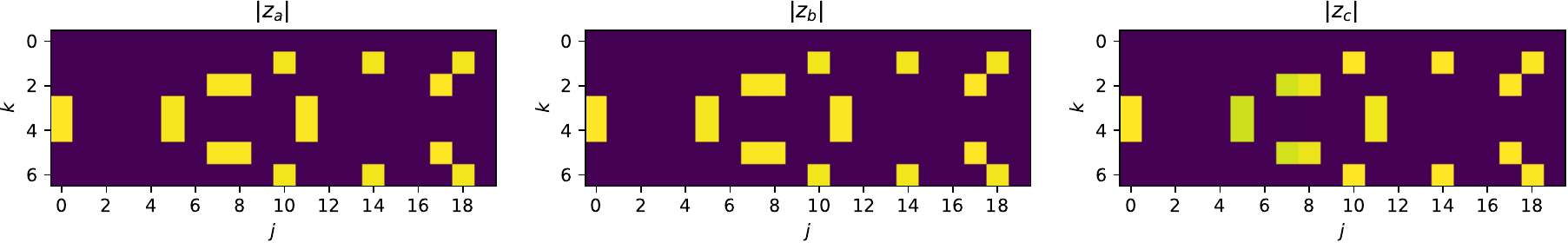}
    \vspace{-0.15in}
    \caption{\small Solutions obtained by Adam optimizers on $\ell_2$ loss for modular addition task with $|G|\!=\! d\! =\! 7$ and $q\! =\! 20$ hidden nodes. \textbf{Top:} For each frequency $\pm k$, exactly $6$ hidden nodes exist (Corollary~\ref{co:order-6}). \textbf{Bottom:} Optimizing Eqn.~\ref{eq:obj} without the last term $\sum_{m\neq 0} \sum_{p\in \{a,b\}} \Big|\sum_{k'} r_{p,k',m-k',k}\Big|^2$ (i.e., without constraint $R_\conv$). Now each frequency has exactly $3$ hidden nodes, corresponding to the solution $\vz_{\syn} = \vrho(\vu_{\syn})$ in Tbl.~\ref{tab:poly-construction}.} 
    \label{fig:solution-structure}
\end{figure}

\def\ba{(\alpha)}
\def\bb{(\beta)}

Fig.~\ref{fig:solution-structure} shows a case with $d=7$. In this case, each frequency, out of $(d-1)/2 = 3$ total number of frequencies, is associated with $6$ hidden nodes. If we remove the last term in the loss that corresponds to $R_\conv$, then an order-3 solution suffices (i.e. $\vz_\syn = \vrho(\vu_{\syn})$). Perfect-memorization solutions can also be constructed. Let two generators be $\vu_\alpha$ with $u^{\ba}_{\cdot k0}\!=\![\omega_d^k, 1, \bar\omega_d^k] \mathbb{I}(k\!\neq\! 0)$, and $\vu_\beta$ with $u^{\bb}_{\cdot k0}\!=\![1, \omega_d^k, \bar\omega_d^k] \mathbb{I}(k\!\neq\! 0)$. Here $\omega_d := e^{2\pi\i / d}$ is the $d$-th root of unity. Then, 

\begin{restatable}[Perfect Memorization]{corollary}{perfectmem}
\label{co:perfectmem}
We construct two $d$-order weights $(\vz_\alpha, \vz_\beta)$ from order-1 generators $(\vu_\alpha, \vu_\beta)$: 
\begin{equation}
    \vz_\alpha = \ringaddbig_{j=0}^{d-1} \vu_\alpha^j, \quad\quad \vz_\beta = \ringaddbig_{j=0}^{d-1} \vu_\beta^j 
\end{equation}
Here $\vz_\alpha \in R_\c(k_1 \neq k) \cap R_\n \cap R_\conv(p = b \mathrm{\ or\ } m \neq k)$, $\vz_\beta \in R_\c(k_2 \neq k) \cap R_\n \cap R_\conv(p = a \mathrm{\ or\ } m \neq k)$. Then $\vz_M = d^{-2/3}\vz_\alpha * \vz_\beta$ satisfies the sufficient condition (Lemma~\ref{co:globalminimizer}) and is the perfect memorization solution with $\ord(\vz_M) = d^2$:
\begin{equation}
    z^{(M)}_{akj_1j_2} = d^{-\frac{2}{3}}\omega^{kj_1}, \quad z^{(M)}_{bkj_1j_2} = d^{-\frac{2}{3}}\omega^{kj_2}, \quad z^{(M)}_{ckj_1j_2} = d^{-\frac{2}{3}} \omega^{-k(j_1+j_2)} 
\end{equation}
where each hidden node is indexed by $j = (j_1,j_2)$, $0\le j_1,j_2< d$, $k\neq 0$. 
\end{restatable}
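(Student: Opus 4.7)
The plan is to evaluate the three families of sum potentials on $\vz_\alpha$ and $\vz_\beta$ using the ring-homomorphism property (Theorem~\ref{thm:pothomo}), then combine them via Lemma~\ref{lemma:composesol} to read off the 0/1-sets of $\vz_M = d^{-2/3}\vz_\alpha \ringmul \vz_\beta$. Because $\vu_\alpha$ and $\vu_\beta$ are order-1, any cubic SP $r$ takes a monomial value $r(\vu_\alpha) = \omega_d^\lambda$ for some integer exponent $\lambda$ depending only on the SP's frequency indices, and likewise for $\vu_\beta$. By ring multiplicativity, $r(\vu_\alpha^j) = \omega_d^{j\lambda}$, and by ring additivity, $r(\vz_\alpha) = \sum_{j=0}^{d-1} \omega_d^{j\lambda}$. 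This geometric sum equals $d$ if $\lambda \equiv 0 \pmod d$ and vanishes otherwise, so the 0/1-set of $\vz_\alpha$ is fully determined by which integer combinations of frequencies vanish mod $d$.

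Carrying out the three cases for $\vz_\alpha$ gives $\lambda = k_1 - k$ for $r_{k_1 k_2 k}$, $\lambda = k_1 + k_2 - k$ for $r_{a, k_1, k_2, k}$, and $\lambda = -k$ for $r_{b, k_1, k_2, k}$. Since $k \neq 0$, the third family vanishes identically; the first vanishes unless $k_1 \equiv k$; and the second vanishes unless $k_1 + k_2 \equiv k$. This reproduces the claim $\vz_\alpha \in R_\c(k_1 \neq k) \cap R_\n \cap R_\conv(p = b \text{ or } m \neq k)$ (identifying $m = k_1 + k_2$ in $R_\conv$), and the symmetric calculation on $\vz_\beta$ (which swaps the roles of $a$ and $b$) gives its 0/1-set. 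By Lemma~\ref{lemma:composesol}, the 0-set of $\vz_\alpha \ringmul \vz_\beta$ is the union of the two 0-sets, which covers all of $R_\c$ (any $(k_1,k_2,k)$ not all equal satisfies $k_1 \neq k$ or $k_2 \neq k$), all of $R_\n$ (both factors kill it), and all of $R_\conv$ ($p=a$ killed by $\vz_\beta$, $p=b$ killed by $\vz_\alpha$). On $R_\g$, both factors evaluate to $d$, so $r_{kkk}(\vz_\alpha \ringmul \vz_\beta) = d^2$; since every cubic SP scales by $(d^{-2/3})^3 = d^{-2}$ under the scalar $d^{-2/3}$, we obtain $r_{kkk}(\vz_M) = 1$, matching the sufficient condition of Lemma~\ref{co:globalminimizer}.

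The explicit coordinate form of $\vz_M$ follows by unfolding the ring operations. Distributivity in the semi-ring (Theorem~\ref{thm:semi-ring}) gives $\vz_\alpha \ringmul \vz_\beta = \ringaddbig_{j_1, j_2 = 0}^{d-1} \vu_\alpha^{j_1} \ringmul \vu_\beta^{j_2}$, and each summand is an order-1 weight whose nonzero-frequency entries, by the Khatri--Rao definition of $\ringmul$, read as $[\omega_d^{k j_1},\, \omega_d^{k j_2},\, \omega_d^{-k(j_1 + j_2)}]$ in the $(a, b, c)$ slots. Concatenating over $(j_1, j_2)$ and applying the $d^{-2/3}$ scaling yields the stated formulas for $z^{(M)}_{p k j_1 j_2}$. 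Finally, $\ord(\vz_\alpha) = \ord(\vz_\beta) = d$ and $\ord(\vz_\alpha \ringmul \vz_\beta) = d^2$ by the ring-homomorphism property of $\ord$ (Observation~\ref{lemma:ordhomo}).

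The main obstacle is not any single calculation but the simultaneous bookkeeping across the three SP families and the two generators: one must check case-by-case which constraints each factor kills, and verify that $\vu_\alpha$ and $\vu_\beta$ have been designed asymmetrically in exactly the right way for their respective 0-sets to tile $R_\c \cup R_\n \cup R_\conv$ without gaps. The asymmetry between $\vu_\alpha$ (identity on the $b$-slot) and $\vu_\beta$ (identity on the $a$-slot) is precisely what routes the double-$a$ constraints onto $\vz_\beta$ and the double-$b$ constraints onto $\vz_\alpha$; losing this asymmetry would leave a nontrivial subset of $R_\conv$ uncovered and break the construction.
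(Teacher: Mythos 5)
Your proof is correct and takes essentially the same approach as the paper's: both reduce each sum potential on $\vz_\alpha$ (and $\vz_\beta$) to a geometric sum $\sum_{j=0}^{d-1}\omega_d^{j\lambda}$ and read off the 0/1-sets from whether $\lambda\equiv 0\pmod d$, then combine via Lemma~\ref{lemma:composesol} and rescale. Your phrasing routes the computation explicitly through the ring-homomorphism property ($r(\vz_\alpha)=\sum_j r(\vu_\alpha)^j$) whereas the paper substitutes the coordinate formulas directly, but the two are the same calculation; you are also slightly more careful than the paper's write-up, which drops the factor of $d$ in its intermediate indicator expressions (the sums should read $d\,\mathbb{I}(\cdot)$) before recovering it in the final $d^{-2}\cdot d\cdot d=1$ step.
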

To see why this corresponds to perfect memorization, simply apply an inverse Fourier transform for each hidden node $(j_1,j_2)$, which leads to (zero-mean) delta weights located at $j_1$, $j_2$ and $j_1+j_2$. Interestingly, there also exists a lower-order solution, $2\times 2$, that meets $R_\c$ and $R_\conv$ but not $R_\n$:
\begin{restatable}[Order-4 single frequency solution]{corollary}{orderfour}
\label{co:order-4}
Define single frequency order-2 solution $\vz_\xi$:
\begin{equation}
    z_{ak\cdot} = [1, \xi],\quad z_{bk\cdot} = [1, -\i\bar\xi],\quad z_{ck\cdot} = [1, \i]  
\end{equation}
where $|\xi|=1$. Then the order-4 solution $\vz^\bk_{F4} := \vrho(\vu^\bk_{\nu=\i}) * \vz_\xi^\bk$ has 0-sets $R_\c$ and $R_\conv$ (but not $R_\n$). 
\end{restatable}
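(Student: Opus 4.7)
The plan is to prove the claim by writing $\vz^\bk_{F4}$ as a product of two partial solutions, computing the 0-set of each factor separately, and then combining them via Lemma~\ref{lemma:composesol}, whose union rule for 0-sets under $\ringmul$ delivers the full result. The core intuition is that the two factors are engineered to be exactly complementary on the $R_\c$ constraints: the polynomial factor kills all but one representative, while $\vz_\xi$ is designed to kill precisely that remaining one.

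First I would identify the polynomial factor explicitly. Substituting $\nu=\i$ into the last row of Table~\ref{tab:poly-construction} gives $[\nu,-\nu,-\bar\nu^2]=[\i,-\i,1]$, which is exactly $\vu_{4\c}$; hence $\vrho(\vu^\bk_{\nu=\i})=\vu^\bk_{4\c}+\vone_k$ is order-$2$ and $\vz^\bk_{F4}$ is order-$4$. Applying the ring homomorphism property $r(\vu+\vone)=r(\vu)+1$ (Theorem~\ref{thm:pothomo}) to the $\vu_{4\c}$ row of Table~\ref{tab:poly-construction}, I can immediately read off the 0-set of this factor: it contains $r_{-k,k,k}$ and $r_{k,-k,k}$ (two of three single-frequency $R_\c$ representatives) and all four single-frequency $R_\conv$ entries $r_{p,\pm k,\pm k,k}$ (matching signs on the two frequency indices). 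The factor evaluates to $2$ on the remaining $R_\c$ representative $r_{k,k,-k}$ and on both $R_\n$ entries $r_{a,-k,k,k}, r_{b,-k,k,k}$ (the red cells of the table).

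Next I would compute the relevant SPs on $\vz^\bk_\xi$ directly from Definition~\ref{def:sum-potential} by summing over its two hidden nodes. The central calculation is
\[ r_{k,k,-k}(\vz_\xi)\;=\;1\cdot 1\cdot \bar 1 \;+\; \xi\cdot(-\i\bar\xi)\cdot \bar\i \;=\; 1 + \xi\bar\xi(-\i)(-\i)\;=\;1-|\xi|^2\;=\;0, \]
where the specific phases $z_{bk,2}=-\i\bar\xi$ and $z_{ck,2}=\i$ combined with $|\xi|=1$ conspire to produce the cancellation; the conjugate $r_{-k,-k,k}(\vz_\xi)=0$ follows from the Hermitian constraint. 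Thus $\vz_\xi$ kills precisely the single $R_\c$ entry missed by the polynomial factor. A parallel bookkeeping computation yields $r_{a,-k,k,k}(\vz_\xi)=1+|\xi|^2\i=1+\i$ and $r_{b,-k,k,k}(\vz_\xi)=1+\i$, both nonzero, which will matter in the final step.

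Finally, applying Lemma~\ref{lemma:composesol} to $\vz^\bk_{F4}=\vrho(\vu^\bk_{4\c})\ringmul\vz^\bk_\xi$, its 0-set is the union of the two factor 0-sets, which by the above covers all single-frequency $R_\c$ and $R_\conv$ entries (Hermitian conjugates are automatic, and SPs whose indices leave frequency $\pm k$ vanish trivially since both factors are single-frequency). For $R_\n$, the multiplicativity $r(\vz_1\ringmul\vz_2)=r(\vz_1)r(\vz_2)$ of SPs (Theorem~\ref{thm:pothomo}) gives $r_{p,-k,k,k}(\vz^\bk_{F4})=2\cdot(1+\i)\neq 0$, confirming $R_\n$ is \emph{not} in the 0-set, as claimed. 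The main obstacle is careful bookkeeping rather than conceptual difficulty: one must systematically verify that every single-frequency SP in $R_\c\cup R_\conv$ is killed by at least one factor, and that the phase structure of $\vz_\xi$ produces cancellation on precisely the $R_\c$ entry the polynomial factor fails to annihilate---this designed complementarity between the two factors is the crux of the construction.
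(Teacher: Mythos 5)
Your proof is correct and follows essentially the same route as the paper: identify $\vu_{\nu=\i}=\vu_{4\c}$, read the $0$-set of $\vrho(\vu_{4\c})$ off Table~\ref{tab:poly-construction} via $r(\vu+\vone)=r(\vu)+1$, compute $r_{k,k,-k}(\vz_\xi)=1+\xi(-\i\bar\xi)(-\i)=0$ to fill the one missing $R_\c$ entry, and combine by ring homomorphism. You add a small but welcome step the paper leaves implicit, namely the explicit evaluation $r_{p,-k,k,k}(\vz^\bk_{F4})=2(1+\i)\neq 0$ confirming $R_\n$ is not killed.
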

Although $\vz^\bk_{F4}$ does not satisfy the sufficient condition (Eqn.~\ref{eq:globalminimizer}), it is part of a global solution when mixed with $\vz_{F6}$:

\begin{restatable}[Mixed order-4/6 global solutions]{corollary}{foursixsol}
\label{co:foursixsol}
With $\vz^\bk_{F4}$, there is a global solution to Eqn.~\ref{eq:obj} that does not meet the sufficient condition, i.e., $\sum_{k'} r_{p,k',-k',m} = 0$ but $r_{p,k',-k',m} \neq 0$:
\begin{equation}
    \vz_{F4/6} = \frac{1}{\sqrt[3]{6}}\hat \vz^{(k_0)}_{F6} \ringadd \frac{1}{\sqrt[3]{4}} \ringaddbig_{k=1,k\neq k_0}^{(d-1)/2} \vz^\bk_{F4} 
\end{equation}
where $\hat \vz^{(k_0)}_{F6}$ is a perturbation of $\vz^{(k_0)}_{F6} := \vz^{(k_0)}_{\syn} \ringmul \vz^{(k_0)}_{\nu=1}$ by adding constant biases to its $(c, k)$ entries for $k\neq k_0$. The order is lower than $\vz_{F6}$: $\ord(\vz_{F4/6}) = 6 + 4 \cdot ((d-1)/2 - 1) = 2d < \ord(\vz_{F6})$. 
\end{restatable}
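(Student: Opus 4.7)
The plan is to verify that $\vz_{F4/6}$ achieves the minimum of each term $\ell_k$ in Eqn.~\ref{eq:obj}, which requires (i) $r_{kkk}=1$ for $k\neq 0$, (ii) $r_{k_1k_2k}=0$ whenever $(k_1,k_2,k)$ are not all equal, (iii) $\sum_{p}\sum_{k'} r_{p,k',-k',k}=0$ for every $k\neq 0$, and (iv) $\sum_{k'} r_{p,k',m-k',k}=0$ for every $m\neq 0$. Since SPs are ring homomorphisms (Theorem~\ref{thm:pothomo}), each SP on the ring-addition $\vz_{F4/6}$ splits into a sum over the pieces $\hat\vz^{(k_0)}_{F6}$ and $\{\vz^{(\tilde k)}_{F4}\}_{\tilde k\neq k_0}$ after cubic rescaling by $1/6$ or $1/4$. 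Because the $a,b$ coordinates of each piece sit on a single frequency shell $\{\pm k_0\}$ or $\{\pm\tilde k\}$, the bulk of these contributions vanish automatically by frequency matching, and only a small list of cases is left to check.

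Conditions (i) and (ii) are quick. For (i), ring-multiplicativity yields $r_{kkk}(\vz^{(k_0)}_{F6}) = r_{kkk}(\vz^{(k_0)}_{\syn})\cdot r_{kkk}(\vz^{(k_0)}_{\nu=1}) = 3\cdot 2 = 6$ and $r_{kkk}(\vz^{\bk}_{F4}) = r_{kkk}(\vrho(\vu^{\bk}_{\nu=\i}))\cdot r_{kkk}(\vz^{\bk}_\xi) = 2\cdot 2 = 4$, which the prefactors $1/\sqrt[3]{6}$ and $1/\sqrt[3]{4}$ normalize correctly; the bias perturbation leaves the $\pm k_0$ diagonal intact. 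For (ii), the only new cases not killed by frequency support come from $\hat\vz^{(k_0)}_{F6}$ with $k_1,k_2\in\{\pm k_0\}$ and $k\notin\{\pm k_0\}$; each equals $\xi_k$ times a ``2-sum'' such as $\sum_j z_{a k_0 j}z_{b k_0 j}$ or its $\pm k_0$ variants. Because $\vz^{(k_0)}_{F6}=\vz^{(k_0)}_{\syn}\ringmul \vz^{(k_0)}_{\nu=1}$ and the Kronecker structure of $\ringmul$ splits every such 2-sum into a product of factor-level 2-sums, one factor of which contains $\sum_j \omega_3^{2j}=0$, all these terms vanish and (ii) is preserved.

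The core step is (iii). For $k=\pm k_0$, only $\hat\vz^{(k_0)}_{F6}$ contributes (the $F4$ pieces have $z_{c,\pm k_0,j}=0$), and since the bias only alters $z_{ckj}$ at $k\notin\{\pm k_0\}$, one inherits the locally-satisfied $R_\n$ from $\vz^{(k_0)}_{F6}$, a single-frequency component of the global $\vz_{F6}$ of Corollary~\ref{co:order-6}. For $k=\pm\tilde k$ with $\tilde k\neq k_0$, $\vz^{(\tilde k)}_{F4}$ contributes a computable nonzero residual (Corollary~\ref{co:order-4} guarantees that $R_\n$ fails), while $\hat\vz^{(k_0)}_{F6}$ contributes $2\xi_k\sum_j |z^{(k_0)}_{pk_0 j}|^2$. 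By the $|z_a|=|z_b|$ symmetry that holds on every generator in Table~\ref{tab:poly-construction} and propagates through $\ringmul$, this coefficient is identical for $p=a$ and $p=b$, so a single scalar equation in $\xi_{\tilde k}$, together with $\xi_{-\tilde k}:=\overline{\xi_{\tilde k}}$ for Hermiticity, cancels all four $R_\n$ aggregates at $\pm \tilde k$, $p\in\{a,b\}$ simultaneously.

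For (iv), $R_\conv$ on each $\vz^{(\tilde k)}_{F4}$ is supplied by Corollary~\ref{co:order-4}; the only new terms from the perturbation are $r_{p,k_0,k_0,k}(\hat\vz^{(k_0)}_{F6})=\xi_k\sum_j (z^{(k_0)}_{p k_0 j})^2$ and their Hermitian partners for $k\notin\{\pm k_0\}$, and the ``self-square'' 2-sums $\sum_j (z_{pk_0 j})^2$ again factor through $\sum_j \omega_3^{2j}=0$ and vanish. The order bound $\ord(\vz_{F4/6}) = 6 + 4((d-1)/2 - 1) = 2d$ is immediate from additivity of $\ord$ under $\ringadd$ (Observation~\ref{lemma:ordhomo}). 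The main obstacle is the bookkeeping in (iii): verifying that the $a\!\leftrightarrow\! b$ modulus symmetry and Hermiticity propagate through each of $\vz_\syn,\vz_{\nu=1},\vrho(\vu_{4\c}),\vz_\xi$ and through ring multiplication, so that a \emph{single} choice of $\xi_{\tilde k}$ closes the $R_\n$ aggregate for every $p$ and every sign of $\tilde k$ consistently; once these small-generator identities are confirmed, everything remaining is mechanical.
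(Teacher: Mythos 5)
Your proposal follows the same route as the paper: decompose every SP on $\vz_{F4/6}$ via ring homomorphism, observe that the order-$4$ pieces leave a nonzero residual in the $R_\n$ aggregate, and absorb it by adding a uniform constant to the $(c,m)$-entries of the order-$6$ piece, then verify the perturbation does not disturb $R_\c$, $R_*$, or $R_\g$. One small imprecision in your step (ii): for the cross case $(k_1,k_2)=(\pm k_0,\mp k_0)$ the vanishing factor is \emph{not} the $\vz_\syn$-leg, since $\sum_{j_1} z^{\syn}_{ak_0 j_1}\overline{z^{\syn}_{bk_0 j_1}}=3\neq 0$; the cancellation there comes from the $\vz_{\nu=1}$-leg, $\sum_{j_2} z^{\nu=1}_{ak_0 j_2}\overline{z^{\nu=1}_{bk_0 j_2}}=1\cdot 1+1\cdot(-1)=0$ (the paper handles this uniformly by factoring through $\vrho(\hat\vu_\one)$ with $\hat\vu_\one=[1,-1,1]$). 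The claim ``one Kronecker factor vanishes'' remains true, so this does not break the argument, but the attribution to $\sum_j \omega_3^{2j}$ is only correct for the $(\pm k_0,\pm k_0)$ cases and for the self-square checks in (iv). You also leave the bias value implicit; the paper pins it down explicitly, which is a cleaner way to certify existence, though your ``single scalar equation has a solution since the $F6$ coefficient $\sum_j |z^{(k_0)}_{pk_0 j}|^2=6\neq 0$'' argument is adequate.
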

The specific formats of $\hat \vz^{(k_0)}_{F6}$ is shown in Appendix (please check the proof and Eqn.~\ref{eq:synab}). Multiple order-6 solutions per frequency can be inserted in this construction. Compared to $\vz_{F6}$, this order-4/6 mixture solution has a lower order and is perceived in the experiments (See Fig.~\ref{fig:convergence-path}), in particular when $d$ is large (Tbl.~\ref{tab:factorization}), showing a strong preference of gradient descent towards lower order solutions.

\ifarxiv
\textbf{Necessary Conditions}. For now we have discussed solutions constructed to be global solutions (i.e. sufficient conditions), For the structure that weights must follow to satisfy Eqn.~\ref{co:globalminimizer} (i.e. necessary conditions), please check Lemma~\ref{lemma:order12-necessary} and Lemma~\ref{lemma:order3-necessary} in Appendix Sec.~\ref{sec:canonical-form}. 
\fi

\def\dd{\mathrm{d}}

\begin{figure*}
\centering
    \includegraphics[width=0.32\textwidth]{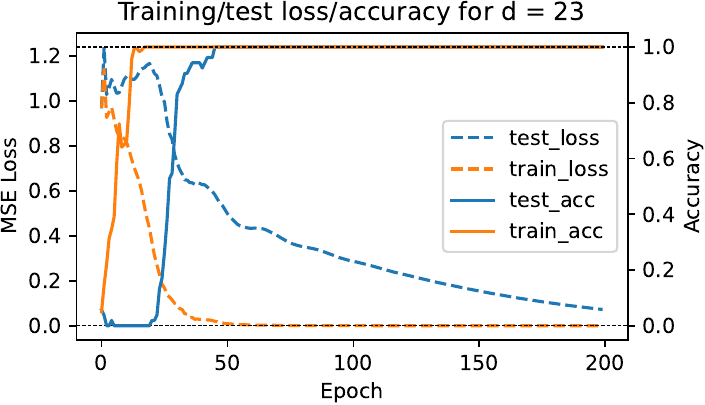}\hfill
    \includegraphics[width=0.32\textwidth]{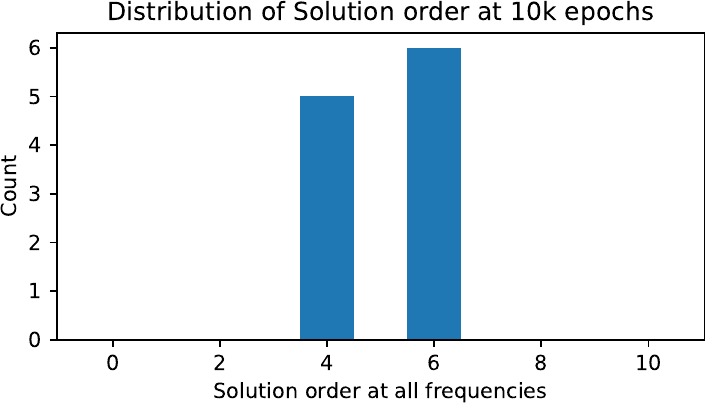}\hfill
    \includegraphics[width=0.32\textwidth]{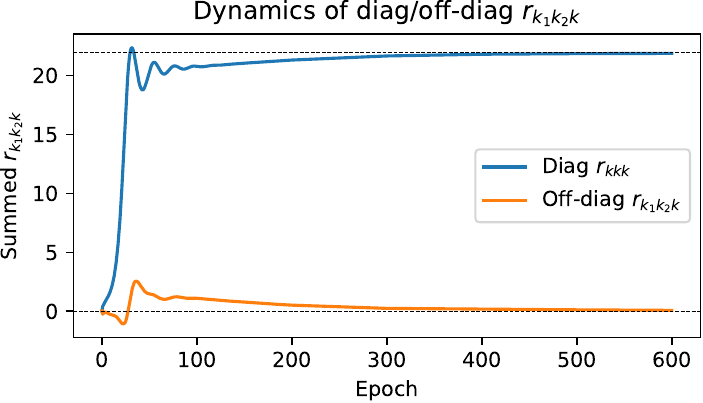}
    \includegraphics[width=0.32\textwidth]{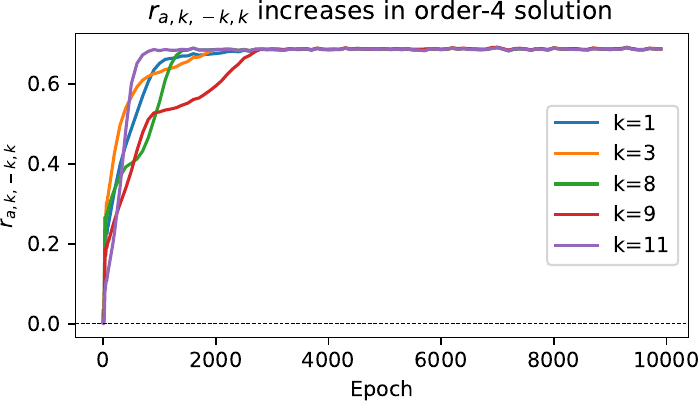}\hfill
    \includegraphics[width=0.32\textwidth]{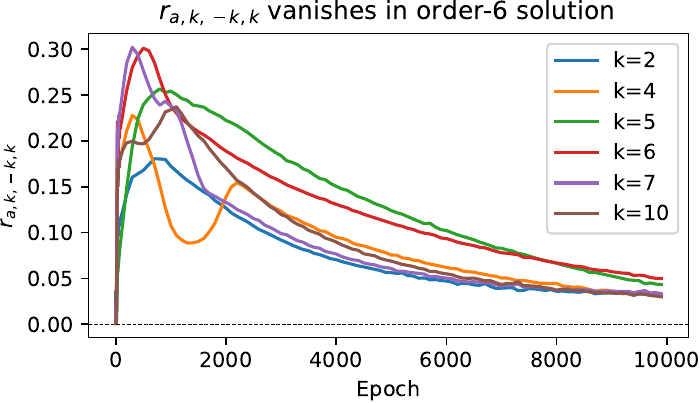}\hfill
    \includegraphics[width=0.32\textwidth]{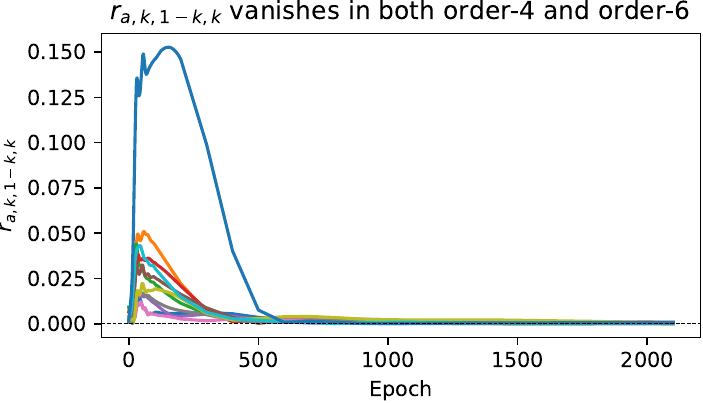}
    \vspace{-0.15in} 
    \caption{\small Dynamics of sum potentials (SPs) over the training process for modular addition with $d = 23$ and $q = 1024$ hidden nodes. \textbf{Top Row.} \emph{Left}: Training/test accuracy reaches 100\% and loss close to $0$. Test accuracy jumps after training reaches 100\% (grokking). \emph{Mid}: After 10k epochs, the distribution of solution orders are concentrated at 4 and 6 (Corollary~\ref{co:order-6} and \ref{co:order-4}). \emph{Right}: Dynamics of $r_{k_1k_2k}$. Summation of diagonal $r_{kkk}$ converges towards $d-1$ (dotted line) with ripple effects, while off-diagonal $r_{k_1k_2k}$ converges towards $0$. \textbf{Bottom Row.} Dynamics of different SPs. Order-4 and order-6 behave differently on $r_{p,k,-k,k}$, because order-4 does not satisfy the sufficient condition (Lemma~\ref{co:globalminimizer}) but a mixture of order-4 and order-6 (i.e., $\vz_{F4/6}$) is still the global solution to the $L_2$ loss (Corollary~\ref{co:foursixsol}).}
    \label{fig:dynamics-mps}
\end{figure*}

\vspace{-0.1in}
\section{Exploring the solution solution with Gradient dynamics}
\vspace{-0.1in}
\label{sec:gradientdynamics}
Now we have characterized the structures of global solutions. One natural question arises: why does the optimization procedure not converge to the perfect memorization solution $\vz_M$, but to the Fourier solutions $\vz_{F6}$ and $\vz_{F4/6}$? Although characterizing the full gradient dynamics is beyond the scope of this paper, we theoretically characterize some rough behaviors below. 

Corollary~\ref{co:algebraoptimizer} shows that by ring multiplication, we could create infinitely many global solutions from one. Then Thm.~\ref{thm:loworderfirst} answers which solution the gradient dynamics may pick: 
\begin{restatable}[The Occam's Razer: Preference of low-order solutions]{theorem}{loworder}
\label{thm:loworderfirst}
If $\vz = \vy \ringmul \vz'$ and both $\vz$ (of order $q$) and $\vz'$ are global optimal solutions, then there exists a path of zero loss connecting $\vz$ and $\vz'$ in the space of $\cZ_q$. As a result, lower-order solutions are preferred if trained with $L_2$ regularization. 
\end{restatable}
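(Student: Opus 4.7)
The plan is to first note that $\vz'$ lifts into $\cZ_q$ by zero-padding without changing any sum potential, then deform the unit factor $\vy$ to a ``collapsing unit'' (supported on a single hidden node) inside the manifold of units in $\cZ_{\ord(\vy)}$, and transport this deformation through ring multiplication with $\vz'$ to obtain the desired zero-loss path in $\cZ_q$. The weight-decay preference then follows from an $L_2$-norm comparison along this path.

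\textbf{Step 1: embedding and identifying $\vy$ as a unit.} First I would embed the lower-order $\vz'$ into $\cZ_q$ by appending $q - \ord(\vz')$ all-zero hidden nodes. Since every sum potential $r(\vz) = \sum_j \prod_{p,k} z_{pkj}$ sums a monomial over the hidden-node index, zero nodes contribute nothing, so all SP values (and therefore the loss) are preserved and the padded $\vz'$ is still a global solution in $\cZ_q$. Applying Theorem~\ref{thm:pothomo} to $r_{kkk}$ gives $r_{kkk}(\vz) = r_{kkk}(\vy)\, r_{kkk}(\vz')$; since both $\vz$ and $\vz'$ achieve the zero-loss target with $r_{kkk} = 1$ for all $k \neq 0$, this forces $r_{kkk}(\vy) = 1$, so $\vy$ is a unit.

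\textbf{Step 2: constructing the path of units.} Next I would build a continuous path $\vy(t)$, $t \in [0,1]$, of units in $\cZ_{\ord(\vy)}$ with $\vy(0) = \vy$ and $\vy(1) = [\vone, \vzero, \ldots, \vzero]$, the weight whose only nonzero hidden node is the first and takes value $1$ on every $(p,k)$. The first stage scales the non-first hidden nodes by $(1-t)$ and simultaneously readjusts the first hidden node so that $y(t)_{ak1}\, y(t)_{bk1}\, y(t)_{ck1} = 1 - (1-t)^3 S_k$ for every $k\neq 0$, where $S_k := \sum_{j>1} y_{akj} y_{bkj} y_{ckj}$; this keeps $r_{kkk}(\vy(t)) = 1$ throughout, and at $t=1$ all non-first nodes vanish while the first is itself a unit. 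A second stage deforms the first-node entries within the connected variety $\{(a,b,c) \in \cc^3 : abc = 1\}$ (one factor per frequency, respecting the Hermitian identification $y_{p,-k,1} = \overline{y_{pk1}}$) to the all-ones triple. Pushing this path through ring multiplication, Corollary~\ref{co:algebraoptimizer} guarantees that $\vz(t) := \vy(t) \ringmul \vz'$ is a global solution for every $t$, yielding a continuous zero-loss path in $\cZ_q$ from $\vz$ to the padded $\vz'$.

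\textbf{Step 3: $L_2$ regularization, and the main obstacle.} For the weight-decay claim I would use the Khatri--Rao structure $\|\vz(t)\|^2 = \sum_{p,k} \bigl(\sum_{j_1} |y(t)_{pkj_1}|^2\bigr)\bigl(\sum_{j_2} |z'_{pkj_2}|^2\bigr)$ versus $\|\vz(1)\|^2 = \|\vz'\|^2$; the unit constraint combined with H\"older gives a strict lower bound on the factor $\sum_{j_1} |y(t)_{pkj_1}|^2$ whenever $\vy(t)$ has more than one active node, so $\|\vz(t)\| > \|\vz(1)\|$ away from $t=1$ and weight decay drags gradient flow toward the lower-order end. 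The hardest step is Step~2: continuously inverting $abc = 1 - (1-t)^3 S_k$ for the first-node entries can hit zero denominators if individual first-node entries pass through $0$, which I would handle by first perturbing $\vy(0)$ generically within the unit manifold so all first-node entries are nonvanishing, then choosing a consistent branch of the division; the rest is algebraic bookkeeping with ring homomorphisms.
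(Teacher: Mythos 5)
Your proposal follows essentially the same route as the paper's proof: deduce $r_{kkk}(\vy)=1$ from ring homomorphism, build a path of units from $\vy$ to a single-node identity (the paper's ``augmented identity'' $\ve$), push that path through $\ringmul\,\vz'$ so that Corollary~\ref{co:algebraoptimizer} keeps the loss at zero throughout, and then compare $L_2$ norms to argue that weight decay favors the single-node endpoint. The paper's norm argument is the AM--GM / corner-minimizer computation $\min\sum_j(|a_j|^2+|b_j|^2+|c_j|^2)$ s.t.\ $\sum_j a_jb_jc_j=1$, which is what your H\"older step amounts to after using $\|\cdot\|_3\le\|\cdot\|_2$; note the lower bound is on the sum over $p$, not on each single factor $\sum_{j}|y_{pkj}|^2$, so your phrasing there is slightly off.

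The one place where you take a different tack is the realization of the path of units. You propose an explicit homotopy (shrink the non-first nodes by $(1-t)$ and compensate on node~$1$ so that $\prod_p y(t)_{pk1}=1-(1-t)^3 S_k$). The paper instead uses a one-parameter family of trajectories bracketed above and below the target and invokes an intermediate-value argument to pick $\lambda(t)$. Your construction is more concrete, but note that your suggested remedy for the degenerate cases (perturbing $\vy(0)$) does not handle the scenario where $1-(1-t)^3S_k$ itself passes through $0$ at some interior $t$ (possible when $S_k$ is real and $>1$), and it also requires adding a short preliminary path from $\vy$ to the perturbed start inside the unit variety; the cleaner fix is to observe that the product map $(a,b,c)\mapsto abc$ admits continuous lifts along any complex path (choosing a branch, or moving a single coordinate when the product must vanish) without needing a perturbation at all. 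With that patched, your argument matches the paper's in substance.
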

This shows that gradient dynamics (with weight decay) may pick a lower-order (i.e., simpler) solution. This suggests that gradient dynamics may not favor perfect memorization, which is of high order. We leave it a future work to prove the existence of a path that connects perfect memorization solutions with a lower-order one. The following theorem shows that the dynamics enjoys \emph{asymptotic freedom}: 
\begin{restatable}[Infinite Width Limits at Initialization]{theorem}{infinitelimit}
\label{lemma:infinitem}
Considering the modified loss of Eqn.~\ref{eq:obj} with only the first two terms: $\tilde \ell_k := -2r_{kkk} + \sum_{k_1k_2} |r_{k_1k_2k}|^2$, if the weights are i.i.d Gaussian and network width $q \rightarrow +\infty$, then $JJ^*$ converge to diagonal and the dynamics of SPs is decoupled.  
\end{restatable}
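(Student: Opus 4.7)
The plan is to push the gradient flow forward to the SP coordinates $r=(r_{k_1k_2k})$. Since $\tilde\ell$ depends on $\vz$ only through $r$, writing $J$ for the Jacobian $\partial r/\partial \vz$ and using $\dot{\vz} = -\overline{\nabla_\vz\tilde\ell}$, the chain rule gives $\dot r = J\dot{\vz} = -JJ^*\overline{\nabla_r\tilde\ell}$. It therefore suffices to show that $JJ^*$ is asymptotically diagonal at the i.i.d.\ Gaussian initialization as $q\to\infty$; the dynamics of each $r_{k_1k_2k}$ then depends only on its own gradient $\overline{\partial\tilde\ell/\partial r_{k_1k_2k}}$, which is precisely the decoupling claim.

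\textbf{Writing out $JJ^*$.} Because $r_{k_1k_2k}=\sum_j z_{ak_1j}z_{bk_2j}z_{ckj}$, only three kinds of partials of $r_{k_1k_2k}$ with respect to $z_{pmj}$ are non-zero, one per $p\in\{a,b,c\}$, each enforced by a Kronecker delta. Summing $(\partial_{z_{pmj}}r_{k_1k_2k})\overline{(\partial_{z_{pmj}}r_{l_1l_2l})}$ over all $(p,m,j)$ gives
\[
(JJ^*)_{(k_1k_2k),(l_1l_2l)} = \delta_{k_1l_1}\!\sum_{j}z_{bk_2j}z_{ckj}\overline{z_{bl_2j}z_{llj}} + \delta_{k_2l_2}\!\sum_{j}z_{ak_1j}z_{ckj}\overline{z_{al_1j}z_{llj}} + \delta_{kl}\!\sum_{j}z_{ak_1j}z_{bk_2j}\overline{z_{al_1j}z_{bl_2j}}.
\]
Each summand is an empirical average over $q$ independent hidden nodes $j$, which is the structural source of concentration.

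\textbf{Gaussian moment computation.} Under $\vw_{pj}\sim\gN(0,\sigma^2 I_d)$ i.i.d.\ and $\vphi_k$ forming an orthonormal Fourier basis, the coefficients $z_{pkj}$ are zero-mean jointly Gaussian with $\E[z_{pkj}\overline{z_{p'k'j'}}]=\sigma^2\delta_{pp'}\delta_{kk'}\delta_{jj'}$, subject only to the Hermitian identification $z_{p,-k,j}=\overline{z_{pkj}}$. I will apply Wick's theorem summand by summand: whenever the triples $(k_1,k_2,k)$ and $(l_1,l_2,l)$ disagree (modulo the reflection $k\mapsto -k$), no pairing of the four $z$-factors produces a chain of $|z|^2$ contractions, so the expectation vanishes; when the triples agree, each summand collapses to a product of two $|z|^2$'s with mean $\sigma^4$, giving $\E(JJ^*)_{\mathrm{diag}}=3q\sigma^4$. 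A standard fourth-moment bound yields $\Var[\,\cdot\,]=O(q\sigma^8)$ for each of the three $S^{(p)}$-type sums on and off the diagonal, so after normalization $(q\sigma^4)^{-1}JJ^*$ converges in probability to $3I$ on the quotient of SPs by the Hermitian identification.

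\textbf{Main obstacle.} The delicate point is the Hermitian redundancy: the triples $(k_1,k_2,k)$ and $(-k_1,-k_2,-k)$ label the same real degrees of freedom of $r$, and likewise for $\vz$. This introduces ``anomalous'' covariances such as $\E[z_{pkj}z_{p,-k,j}]=\sigma^2$ that generate off-diagonal entries of $JJ^*$ coupling $(k_1,k_2,k)$ to the reflected index $(-l_1,-l_2,-l)$, not only to $(l_1,l_2,l)$. I plan to resolve this by rewriting the flow in independent real coordinates (one representative per Hermitian equivalence class), where the covariance structure becomes genuinely diagonal and the Wick argument above applies cleanly; the anomalous pairings then collapse into the diagonal block corresponding to the conjugate representative. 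Once this bookkeeping is in place, $JJ^*$ is asymptotically a scalar multiple of the identity on the independent coordinates, and the ODE above decouples into $\dot r_{k_1k_2k} = -3q\sigma^4\,\overline{\partial\tilde\ell/\partial r_{k_1k_2k}} + o_{\pr}(q\sigma^4)$, as claimed.
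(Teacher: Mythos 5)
Your reduction and your computation of $JJ^*$ match the paper's proof exactly (the paper derives the same three-Kronecker-delta expression for $h_{k_1k_2k,\,l_1l_2l}$), and your Wick/concentration argument is a tighter version of the paper's heuristic that off-diagonal entries have "random phase" and thus vanish as $q\to\infty$. Note, though, that the "main obstacle" you flag is largely illusory: in each of the three sums, the four Gaussian factors appear as $z_p\,\bar z_p\, z_{p'}\,\bar z_{p'}$ with $p\neq p'$ drawn from independent weight matrices, so the only surviving Wick pairing is the standard one $\E[z_{pkj}\bar z_{pk'j}]\propto\delta_{kk'}$, and the anomalous covariance $\E[z_{pkj}z_{p,-k,j}]$ never enters; moreover a putative coupling between $(k_1,k_2,k)$ and $(-l_1,-l_2,-l)$ is already killed by the Kronecker factor $\delta_{k_p l_p}$ (which would force $k_p=-k_p$), so no separate real-coordinate bookkeeping is needed for generic frequencies.
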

Intuitively, this means that a large enough network width ($q\rightarrow +\infty$) makes the dynamics much easier to analyze. On the other hand, the final solution may not require that large $q$. As analyzed in Corollary~\ref{co:order-6}, for each frequency, to achieve global optimality, $6$ hidden nodes suffice. 

\begin{figure}
    \vspace{-0.1in} 
    \centering
    \includegraphics[width=0.9\textwidth]{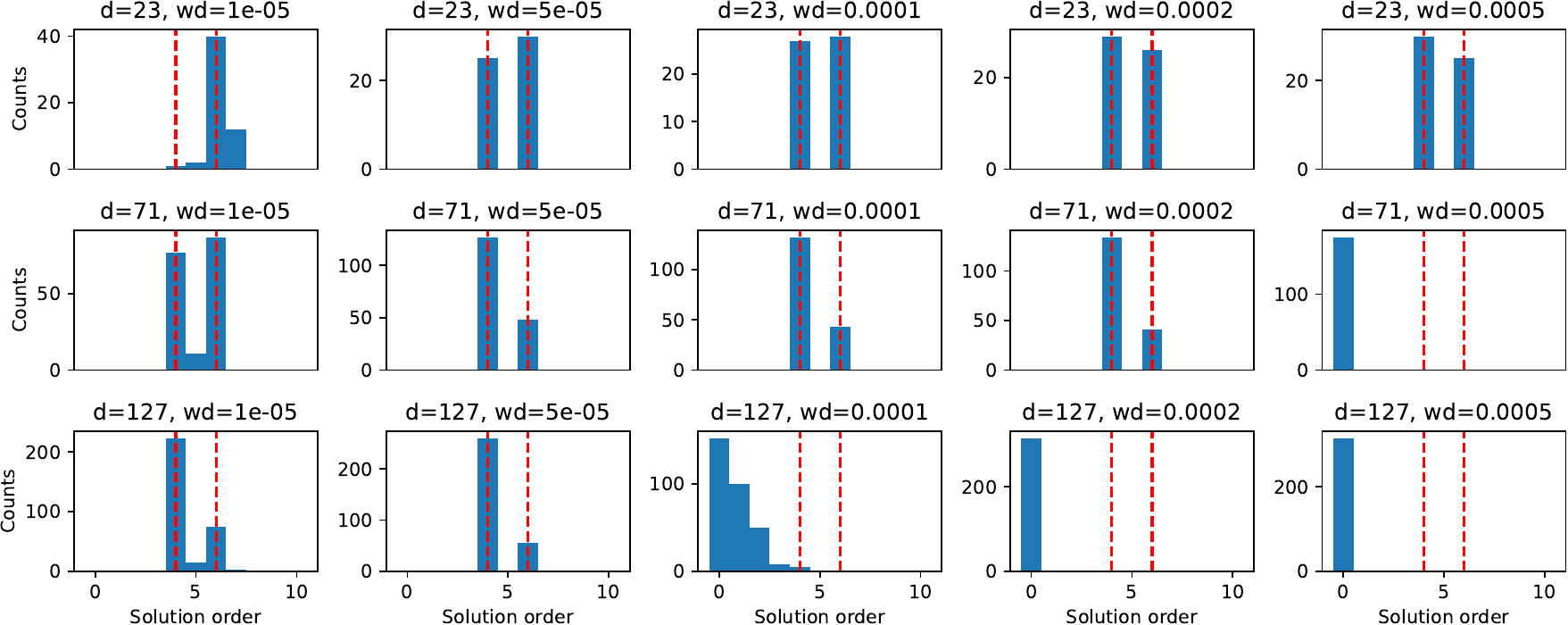}
    \vspace{-0.1in} 
    \caption{\small Solution distribution (accumulated over 5 random seeds) over different weight decay regularization for $q = 512$, trained with 10k epochs with Adam with learning rate $0.01$ on modular addition (i.e., predicting $a+b\mod d$) with $d\in \{23,71,127\}$. Red dashed lines correspond to order-4/6 solutions.}
    \label{fig:low-order-first}
\end{figure}

\vspace{-0.15in}
\section{Experiments}
\vspace{-0.15in}
\label{sec:exp}
\textbf{Setup}. We train the 2-layer MLP on the modular addition task, which is a special case of outcome prediction of Abelian group multiplication. We use Adam optimizer with learning rate $0.01$, MSE loss, and train for $10000$ epochs with weight decays. We tested on $|G| = d \in \{23, 71, 127\}$. All data are generated synthetically and training/test split is $90\%/10\%$. Each training with a fixed set of hyperparameter configuration is conducted on NVIDIA V100 for a few minutes.  

\textbf{Solution Distributions}. As shown in Fig.~\ref{fig:dynamics-mps}, we see order-4 and order-6 solutions in each frequency emerging from well-trained networks on $d=23$. The mixed solution $\vz_{F4/6}$ can be clearly observed in a small-scale example (Fig.~\ref{fig:convergence-path}). This is also true for larger $d$ (Fig.~\ref{fig:low-order-first}). Although the model is trained with heavily over-parameterized networks, the final solution order remains constant, which is consistent with Corollary~\ref{co:globalminimizer}. Large weight decay shifts the distribution to the left (i.e., low-order solutions) until model collapses (i.e., all weights become zero), consistent with our Theorem~\ref{thm:loworderfirst} that demonstrates that gradient descent with weight decay favors low-order solutions. Similar conclusions follow for fewer and more overparameterization (Appendix~\ref{sec:appendix-additional-exp}).

\begin{table*}[]
    \small
    \centering
    \setlength{\tabcolsep}{1pt}
    \vspace{-0.2in}
    \begin{tabular}{c||c|c|c||c|c||c|c|c|c}
      \multirow{2}{*}{$d$} & \%not  & \multicolumn{2}{c||}{\%non-factorable} & \multicolumn{2}{c||}{error ($\times 10^{-2}$)} & \multicolumn{4}{c}{solution distribution (\%) in factorable ones} \\ 
       & order-4/6 & order-4 & order-6 & order-4 & order-6 & $ \vz^\bk_{\nu=\i} \ringmul \vz^\bk_\xi$ & $ \vz^\bk_{\nu=\i} \ringmul \vz^\bk_{\syn,\alpha\beta}$ & $ \vz_{\nu}^\bk \ringmul \vz^\bk_\syn$ & others \\
     \hline\hline
     23 & $0.0${\tiny$\pm 0.0$}& $0.00${\tiny$\pm 0.00$}& $5.71${\tiny$\pm 5.71$}& $0.05${\tiny$\pm 0.01$}& $4.80${\tiny$\pm 0.96$}& $47.07${\tiny$\pm 1.88$}& $11.31${\tiny$\pm 1.76$}& $39.80${\tiny$\pm 2.11$}& $1.82${\tiny$\pm 1.82$} \\
     71 & $0.0${\tiny$\pm 0.0$}& $0.00${\tiny$\pm 0.00$}& $0.00${\tiny$\pm 0.00$}& $0.03${\tiny$\pm 0.00$}& $5.02${\tiny$\pm 0.25$}& $72.57${\tiny$\pm 0.70$}& $4.00${\tiny$\pm 1.14$}& $21.14${\tiny$\pm 2.14$}& $2.29${\tiny$\pm 1.07$} \\
    127 & $0.0${\tiny$\pm 0.0$}& $1.50${\tiny$\pm 0.92$}& $0.00${\tiny$\pm 0.00$}& $0.26${\tiny$\pm 0.14$}& $0.93${\tiny$\pm 0.18$}& $82.96${\tiny$\pm 0.39$}& $2.25${\tiny$\pm 0.64$}& $14.13${\tiny$\pm 0.87$}& $0.66${\tiny$\pm 0.66$}
    \end{tabular}
    \vspace{-0.05in}
    \caption{\small Matches between order-4/6 solutions from gradient descent and those constructed by \ours{}. Number of hidden nodes $q = 512$ and weight decay is $5\times 10^{-5}$. Around $95\%$ gradient descent  solutions are factorable with very small factorization error ($\sim 0.04$ compared to solution norm on the order of $1$). Furthermore, \ours{} successfully predicts $\sim 98\%$ of the structure of the empirical solutions, while the remaining $2\%$ are largely due to insufficient training, near miss against known theoretical construction. Here $\vz_\xi$ is defined in Corollary~\ref{co:order-4}, $\vz_{\nu} := \vu_{\nu} + \vone$ is defined in Tbl.~\ref{tab:poly-construction}, and $\vz_{\syn,\alpha\beta}$ is defined in Eqn.~\ref{eq:synab}. The means/standard deviations are computed over 5 seeds.}
    \label{tab:factorization}
\end{table*}

\textbf{Exact match between theoretical construction and empirical solutions}. A follow-up question arises: \emph{do the empirical solutions match exactly with our constructions?} After all, distribution of solution order is a rough metric. For this, we identify all solutions obtained by gradient descent at each frequency, factorize them and compare with theoretical construction up to conjugation/normalization. To find such a factorization, we use exhaustive search (Appendix~\ref{sec:appendix-additional-exp}). 

The answer is yes. Tbl.~\ref{tab:factorization} shows that around $95\%$ of order-4 and order-6 solutions from gradient descent can be factorized into $2\times 2$ and $2\times 3$ and each component matches our theoretical construction in Corollary~\ref{co:order-6} and~\ref{co:order-4}, with minor variations. Furthermore, when $d$ is large, most of the solutions become order-4, which is consistent with our analysis for mixed solution $\vz_{F4/6}$ (Corollary~\ref{co:foursixsol}) that one order-6 solution in the form of $\vz_{\nu=\i} \ringmul \vz_{\syn,\alpha\beta}$ suffices to achieve a global solution, with all other frequencies taking order-4s. In fact, for $d=127$, the number of order-6 solution taking the form of $\vz_{\nu=\i} \ringmul \vz_{\syn,\alpha\beta}$ is $(d - 1) / 2 \cdot 2.25\% \approx 1.26$, coinciding with the theoretical results.

\textbf{Implicit Bias of gradient descent}. Our construction gives other possible solutions (e.g., $\vz_{3\c} \ringmul \vz_\syn $) which are never observed in the gradient solutions. Even for the observed solutions, e.g. $\vz_{\nu} \ringmul \vz_\syn$, the distribution of free parameters is highly non-uniform (see Fig.~\ref{fig:distri-params} in Appendix), showing a strong preference of parameters that lead to symmetry. These suggest strong implicit bias in optimization, which we leave for future work. 

\vspace{-0.15in}
\section{Conclusion and future work}
\vspace{-0.15in}
In this work, we propose \ours{} (\emph{Composing Global Solutions}), a theoretical framework that models the algebraic structure of global solutions when training a 2-layer network on reasoning tasks of Abelian group with $L_2$ loss. We find that the global solutions can be algebraically composed by partial solutions that only fit parts of the loss, using ring operations defined in the weight space of the 2-layer neural networks across different network widths. Under \ours{}, we also analyze the training dynamics, show the benefit of over-parameterization, and the inductive bias towards simpler solutions due to topological connectivity between algebraically linked high-order (i.e., involving more hidden nodes) and low-order global solutions. Finally, we show that the gradient descent solutions exactly match what constructed solutions (e.g. $\vz_{F4/6}$ and $\vz_{F6}$, see Corollary~\ref{co:foursixsol} and Corollary~\ref{co:order-6}).

\textbf{Develop novel training algorithms}. Instead of applying (stochastic) gradient descent to  overparameterized networks, \ours{} suggests a completely different path: decompose the loss, find the SPs, construct low-order solutions and combine them to achieve the final solutions on the fly using algebraic operations. Such an approach may be more efficient and scalable than gradient descent, due to its factorable nature. Also, our framework works for losses depending on sum potentials ($L_2$ loss is just one example), which opens a new dimension for loss design.  

\textbf{Putting different widths into the same framework}. Many existing theoretical works study properties of networks with fixed width. However, \ours{} demonstrates that nice mathematical structures emerge when putting networks of different widths together, which is an interesting direction to consider. This is related to dynamically adding/pruning neurons during training~\cite{yoon2017lifelong,yu2018slimmable,wu2019splitting}.

\textbf{Grokking}. When learning modular addition, there 
exists a phase transition from \emph{memorization} to \emph{generalization} during training, known as \emph{grokking}~\citep{varma2023explaining,power2022grokking}, long after the training performance becomes (almost) perfect. While our work does not directly address grokking, which involves more complicated training dynamics than described in Sec.~\ref{sec:gradientdynamics}, our framework may be extended to a nonuniformly distributed training set (e.g. some input pairs $(g_1,g_2)$ are missing in the training set), in order to study the dynamics of representation learning on grokking. 

\textbf{Extending to other activations and loss functions.} For other activations (e.g., SiLU) with $\sigma(0) = 0$, with a Taylor expansion, the same framework may still apply, but with higher rank sum potentials (SPs). For other loss functions, we can do a similar Taylor expansion. We leave them for future work.

\bibliography{references}
\bibliographystyle{iclr25_conference}

\clearpage

\appendix

\onecolumn

\section{Notation Table}
\begin{table}[h]
    \centering
    \begin{tabular}{l|l}
      Symbol  & Description \\
      \hline\hline
      $\cc$ & The set of complex numbers. The complex field. \\
      $\nn$ & The set of natural numbers. \\ 
      $\i$ & The imaginary unit. $\i = \sqrt{-1}$. \\
      $\bar a$, $\bar \va$, $\bar A$ & The complex conjugate of a scalar $a$, a vector $\va$ or a matrix $A$. \\ 
      $A^*$ & The conjugate transpose of matrix $A$. $A^* = \bar A^\top$. \\ 
      $\mathbb{I}(x)$ & The indicator function. $\mathbb{I}(x) = 1$ if $x$ is true, otherwise $0$. \\ 
      \hline\hline
      $G$   & The Abelian group to be studied. \\
      $g\in G$ & Group element $g$ in $G$. \\
      $g_1 g_2$ & Production of group element $g_1$ and $g_2$ under group multiplication. \\
      $d$   & Size of the group $G$. $|G| = d$. \\ 
      $\ve_g$ & One-hot representation of group element $g$. The dimension of $\ve_g$ is $d$. \\
      $\phi_k : G\mapsto \cc$ & The $k$-th character function of $G$. If $G$ is cyclic and $0 \le g < d$, then $\phi_k(g) = e^{\i 2\pi k g / d}$. \\
      $\vphi_k \in \cc^d$ & The $k$-th character function in vector form. $\vphi_k = [\phi_k(g)]_{g\in G}$. \\ 
      $P^\perp_1$ & Zero-mean projection matrix $P^\perp_1 \equiv I - \frac{1}{d}\vone\vone^\top$. \\
      \hline\hline
      $\vw_{aj}$, $\vw_{bj}$ & Fan-in weight vectors for node $j$ in 2-layer networks defined in Eqn.~\ref{eq:arch}. \\
      $\vw_{cj}$ & The fan-out weight vector for node $j$ in 2-layer networks defined in Eqn.~\ref{eq:arch}. \\
      $\cZ_q$ & Collection of weight (in Fourier space) of all 2-layer networks with $q$ hidden nodes. \\
      $\cZ$ & Collection of weight (in Fourier space) of all 2-layer networks. $\cZ = \bigcup_{q\ge 0} \cZ_q$. \\
      $\vz \in \cZ$ & Weight matrices of one specific instance of 2-layer network. \\ 
      $\ord(\vz)$ & The number of hidden nodes in $\vz$. \\
      $\vz_1 + \vz_2$, $\vz_1*\vz_2$ & The ring addition and multiplication (Def.~\ref{def:operationsinz}). \\
      \hline\hline
      $r : \cZ \mapsto \cc$ & The sum potential (Def.~\ref{def:sum-potential}). \\
      $R$ & Collection of sum potentials. E.g., $R_\g = \{r_{kkk}, k\neq 0\}$. \\ 
      $\con_\g$, $\con_\c$, $\con_\n$, $\con_*$ & Collections of sum potentials (Lemma~\ref{co:globalminimizer}) that appear in MSE loss function (Eqn.~\ref{eq:obj}).   
    \end{tabular}
    \caption{The notation table.}
    \label{tab:notation-table}
\end{table}

\section{Decoupling $L_2$ Loss (Proof)}
\label{sec:appendix-decoupling}
We use the \emph{character function} $\phi: G\rightarrow \cc$, which maps a group element $g$ into a complex number.  
\begin{lemma} 
For finite Abelian group, the character function $\phi$ has the following properties~\cite{fulton2013representation,steinberg2009representation}: 
\begin{itemize}
    \item It is a 1-dimensional (irreducible) representation of the group $G$, i.e., $|\phi(g)| = 1$ for $g\in G$ and for any $g_1,g_2\in G$, $\phi(g_1g_2) = \phi(g_1)\phi(g_2)$. 
    \item There exists $d$ character functions $\{\phi_k\}$ that satisfy the orthonormal condition $\frac{1}{d}\sum_{g\in G} \phi_k(g) \overline{\phi_{k'}}(g) = \mathbb{I}(k=k')$. Here $\overline{\phi}$ is the complex conjugate of $\phi$ and is also a character function.
    \item The set of character functions $\{\phi_k\}$ forms a \emph{character group} $\hat G$ under pairwise multiplication: $\phi_{k_1+k_2} = \phi_{k_1} \circ \phi_{k_2}$. 
\end{itemize}
\end{lemma}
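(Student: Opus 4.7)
The plan is to verify each of the three bullet points in turn, invoking the Fundamental Theorem of Finite Abelian Groups whenever it helps. All three statements are classical, so the task is one of careful checking rather than discovering anything new.

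\textbf{Unit modulus and multiplicativity.} Multiplicativity $\phi(g_1 g_2) = \phi(g_1)\phi(g_2)$ is simply the defining condition for $\phi$ to be a $1$-dimensional representation of $G$, so there is nothing to prove. For $|\phi(g)| = 1$, I would exploit finiteness: Lagrange's theorem gives $g^d = e$, hence $\phi(g)^d = \phi(g^d) = \phi(e) = 1$, so $\phi(g)$ is a $d$-th root of unity and has unit modulus.

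\textbf{Existence of $d$ orthonormal characters.} By the Fundamental Theorem, $G \cong \zz/d_1\zz \oplus \cdots \oplus \zz/d_m\zz$ with $d_1 \cdots d_m = d$. On each cyclic factor the characters are $\phi^{(i)}_{k_i}(g_i) = e^{2\pi\i k_i g_i / d_i}$ for $0 \le k_i < d_i$, and orthogonality on the factor reduces to the geometric-sum identity $\sum_{g_i=0}^{d_i-1} e^{2\pi\i(k_i - k'_i) g_i / d_i} = d_i \cdot \mathbb{I}(k_i = k'_i)$. Characters on the direct sum are then tensor products $\phi_k(g) := \prod_i \phi^{(i)}_{k_i}(g_i)$, indexed by multi-indices $k = (k_1,\dots,k_m)$; this yields exactly $d_1 \cdots d_m = d$ characters, and summing factor-by-factor across $G$ gives the stated orthonormality. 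To conclude that these are \emph{all} $1$-dimensional representations, I would invoke the standard dimension count $\sum_{\rho} \dim(\rho)^2 = |G|$ for irreducible representations, which in the Abelian case forces every irreducible to be $1$-dimensional and hence counted above.

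\textbf{Character group structure.} Define $(\phi_{k_1} \circ \phi_{k_2})(g) := \phi_{k_1}(g)\phi_{k_2}(g)$. Using the product form above one reads off $\phi_{k_1} \circ \phi_{k_2} = \phi_{k_1 + k_2}$ with multi-index addition componentwise modulo $d_i$. The trivial character $\phi_0 \equiv 1$ is the identity, $\overline{\phi_k} = \phi_{-k}$ is the inverse (by unit modulus, since $\phi_k \overline{\phi_k} \equiv 1$), and associativity is inherited from multiplication in $\cc$. Hence $\hat G$ is an Abelian group, and the map $k \mapsto \phi_k$ realizes the isomorphism $\hat G \cong G$.

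\textbf{Main obstacle.} The only step that is not purely formal is the orthonormality relation. The potential trap is that one could try to prove it directly without using the cyclic decomposition, which leads to awkward character sums; once the decomposition is in hand the computation factorizes and becomes routine bookkeeping. Since the lemma is classical and cited to textbook references, I would keep the proof in the paper brief and defer full details to those references.
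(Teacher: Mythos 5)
Your proof is correct. The paper itself does not prove this lemma --- it states it as a standard fact and cites \cite{fulton2013representation,steinberg2009representation} --- so there is no internal proof to compare against; your sketch is the textbook argument those references give: Lagrange's theorem for unit modulus, the Fundamental Theorem of Finite Abelian Groups plus the geometric-sum identity for orthonormality and the count of characters (with the $\sum_\rho \dim(\rho)^2 = |G|$ formula to show these exhaust the irreducibles), and componentwise multi-index addition for the dual group structure. Since the lemma is classical and the paper's role for it is purely as a tool, your instinct to keep the write-up brief and defer to the references matches the paper's own treatment.
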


Note that the \emph{frequency} $k$ goes from $0$ to $d-1$, where $\phi_0 \equiv 1$ is the trivial representation (i.e., all $g \in G$ maps to $1$). According to the Fundamental Theorem of Finite Abelian Groups, each finite Abelian group can be decomposed into a direct sum of cyclic groups, and the character function of each cyclic group is exactly (scaled) Fourier bases. Therefore, in Abelian group, $k$ is a multi-dimensional frequency index. ~\cite{conrad2010characters} shows that $\hat G \cong G$ (Theorem 3.13) so each character function $\phi \in\hat G$ can also be indexed by $g$ itself. Right now we keep the index $k$.

For convenience, we define $\phi_{-k} := \overline \phi_k$ as the (complex) conjugate representation of $\phi_k$. 

Let $\vphi_k = [\phi_k(g)]_{g\in G} \in \cc^{d}$ be the vector that contains the value of the character function $\phi_k$ over $G$. Then $\{\vphi_k\}$ form an orthogonal base in $\cc^{d}$ and we can represent the weight vector $\vw_{pj}$ as the following, where $p \in \{a,b,c\}$: 
\begin{equation}
    \rebut{
    \vw_{aj} = \sum_{k\neq 0} z_{akj} \vphi_k,\quad\quad \vw_{bj} = \sum_{k\neq 0} z_{bkj} \vphi_k, \quad\quad \vw_{cj} = \sum_{k\neq 0} z_{ckj} \bar\vphi_k 
    }
    \label{eq:w-freq-space} 
\end{equation}
where $\vz := \{z_{pkj}\}$ are the complex coefficients. Here $p \in \{a,b,c\}$, $0\le k < d$ and $j$ runs through hidden nodes. 

\analyticform*
\begin{proof}
Note that the objective $\ell$ can be written down as 
\begin{eqnarray}
    \ell &=& \eee{g_1,g_2}{\|P_1^\perp(\vo(g_1,g_2)/2d - \ve_{g_1g_2})\|^2} \\
    &=& \eee{g_1,g_2}{\vo^\top P_1^\perp \vo/4d^2 - \vo^\top P_1^\perp \ve_{g_1g_2}/d + \ve_{g_1g_2}^\top P_1^\perp\ve_{g_1g_2}}
\end{eqnarray}
For notation brevity, let $z_{akj} := a_{kj}$, $z_{bkj} := b_{kj}$ and $z_{ckj} := c_{kj}$. For $\ee{\vo^\top P_1^\perp \ve_{g_1g_2}}$, since 
\begin{eqnarray}
\ve_{g_1g_2}^\top P_1^\perp \vo &=&  \sum_j \ve_{g_1g_2}^\top P_1^\perp \vw_{cj} \sigma(\vw_{aj}^\top \ve_{g_1} + \vw_{bj}^\top \ve_{g_2}) \\
&=& \sum_j \left(\sum_{k'\neq 0} c_{k'j}\bar\phi_{k'}(g_1 g_2)\right) \left(\vw_{aj}^\top \ve_{g_1} + \vw_{bj}^\top \ve_{g_2}\right)^2 \\
&=& \sum_j \left(\sum_{k'\neq 0} c_{k'j}\bar\phi_{k'}(g_1 g_2)\right) \left(\sum_k \sum_{p\in \{a,b\}} z_{pkj}\phi_k(g_p)\right)^2
\end{eqnarray}
Therefore, leveraging the fact that $\bar\phi_{k'}(g_1 g_2) = \bar\phi_{k'}(g_1)\bar\phi_{k'}(g_2)$, we have:
\begin{equation}
   \eee{g_1,g_2}{\ve_{g_1g_2}^\top P_1^\perp \vo} =  
   \sum_{k_1,k_2,k'\neq 0,p_1,p_2,j} c_{k'j} z_{p_1k_1j} z_{p_2k_2j} \eee{g_1,g_2}{\bar\phi_{k'}(g_1) \bar\phi_{k'}(g_2) \phi_{k_1}(g_{p_1}) \phi_{k_2}(g_{p_2})} 
\end{equation}
Since $\eee{g}{\phi_{k}(g)\bar\phi_{k'}(g)} = \mathbb{I}(k=k')$, there are only a few cases that the summand is nonzero: 
\begin{itemize}
\item $p_1 = a$, $p_2 = b$, $k' = k_1 = k_2 \neq 0$. 
\item $p_1 = b$, $p_2 = a$, $k' = k_1 = k_2 \neq 0$. 
\end{itemize}
In both cases, the summation reduces to $\sum_{k\neq 0,j} c_{kj}z_{akj}z_{bkj} = \sum_{k\neq 0,j} c_{kj}a_{kj}b_{kj}$. Let $r_{k_1k_2k'} := \sum_{j} a_{k_1j} b_{k_2j}c_{k'j}$, then we have 
\begin{equation}
    \eee{g_1,g_2}{\vo^\top(g_1,g_2) P_1^\perp \ve_{g_1g_2}} = 2\sum_{k\neq 0,j} a_{kj}b_{kj}c_{kj} = 2\sum_{k\neq 0} r_{kkk}
\end{equation}

For $\ee{\vo^\top P_1^\perp \vo}$, we have:
\begin{equation}
    \vo^\top P_1^\perp \vo = \sum_{j,j'} \vw_{cj}^\top P_1^\perp \vw_{cj'} \sigma(\vw_{aj}^\top\ve_{g_1} + \vw_{bj}^\top\ve_{g_2}) \sigma(\vw_{aj'}^\top\ve_{g_1} + \vw_{bj'}^\top\ve_{g_2}) 
\end{equation}
here
\begin{equation}
 \vw_{cj}^\top P_1^\perp \vw_{cj'} = \left(\sum_{k'\neq 0} c_{k'j}\bar\vphi_{k'}\right)^\top \left(\sum_{k''\neq 0}\bar c_{k''j'}\vphi_{k''}\right) = d\sum_{k'\neq 0} c_{k'j} \bar c_{k'j'}
\end{equation}
due to the fact that $\bar\vphi^\top_k\vphi_{k'} = \sum_g \bar\phi_k(g)\phi_{k'}(g) = d\mathbb{I}(k=k')$.

Then the key part is to compute the following terms:
\begin{equation}
    \eee{g_1,g_2}{
    z_{p_1k_1j_1} z_{p_2k_2j_1} z_{p_3k_3j_2} z_{p_4k_4j_2} c_{k'j_1}\bar c_{k'j_2} \phi_{k_1}(g_{p_1})\phi_{k_2}(g_{p_2}) \phi_{k_3}(g_{p_3})\phi_{k_4}(g_{p_3})} \label{eq:long-sum}
\end{equation}
summing over $\{p_1,p_2,p_3,p_4,k_1,k_2,k_3,k_4,k'\neq 0,j_1,j_2\}$. Note that since each $p \in \{a,b\}$, there are $2^4=16$ choices of $(p_1, p_2, p_3, p_4)$. For notation brevity, we use $(1, 3)$ to represent the subset of $p$ that takes the value of $a$ (e.g., $(1, 3)$ means that $p_1=p_3 = a$ and $p_2=p_4=b$). It is clear that for odd assignments such as $(1,2,3)$, since $z_{p0j} = 0$, the summation is zero. Then, we only discuss the even cases as follows:

\textbf{Case 1: $(1,3)$, $(2,4)$, $(1, 4)$, $(2,3)$}. The 4 cases are identical so we only need to analyze one. We take $(1,3)$ as an example. For $(1,3)$, $p_1 = p_3 = a$, $p_2 = p_4 = b$ and the only nonzero terms is when $k_1 + k_3 = 0 \mod d$, $k_2 + k_4 = 0 \mod d$, since $\eee{g_1}{\phi_{k_1}(g_1)\phi_{k_3}(g_1)} = \mathbb{I}(k_1+k_3=0 \mod d)$ (and similar in other cases). Then Eqn.~\ref{eq:long-sum} becomes:
\begin{eqnarray}
    & & \sum_{k_1,k_2,k'\neq 0} \sum_{j_1j_2}
    z_{a k_1j_1} z_{b k_2j_1} z_{a,-k_1,j_2} z_{b, -k_2, j_2} c_{k'j_1}\bar c_{k'j_2} \\
    &=& \sum_{k_1,k_2,k'\neq 0} \sum_{j_1} z_{a k_1j_1} z_{b k_2j_1}c_{k'j_1} \overline{\sum_{j_2}z_{a k_1j_2} z_{b k_2j_2}c_{k'j_2}} \\
    &=& \sum_{k_1,k_2,k'\neq 0} \sum_{j_1} a_{k_1j_1} b_{k_2j_1}c_{k'j_1} \overline{\sum_{j_2}a_{k_1j_2} b_{k_2j_2}c_{k'j_2}} \\
    &=& \sum_{k_1,k_2,k'\neq 0} r_{k_1k_2k'} \overline{r_{k_1k_2k'}} = \sum_{k_1,k_2,k'\neq 0} |r_{k_1k_2k'}|^2
\end{eqnarray}
Since there are 4 such cases, we have:
\begin{equation}
    \epsilon_1 = 4\sum_{k'\neq 0} \sum_{k_1k_2} |r_{k_1k_2k'}|^2
\end{equation}

\textbf{Case 2: $(1,2)$ and $(3,4)$}. The two cases are identical. Take $(1,2)$ as an example. In this case, $p_1 = p_2 = a$ and $p_3=p_4=b$. The only non-zero terms are when $k_1+k_2 = 0$, $k_3+k_4=0$. Then Eqn.~\ref{eq:long-sum} becomes:
\begin{eqnarray}
    & & \sum_{k_1,k_3,k'\neq 0} \sum_{j_1j_2}
    z_{a k_1j_1} \bar z_{ak_1j_1} z_{bk_3j_2} \bar z_{bk_3j_2} c_{k'j_1}\bar c_{k'j_2} \\
    &=& \sum_{k_1,k_3,k'\neq 0} \sum_{j_1} |a_{k_1j_1}|^2 c_{k'j_1} \sum_{j_2} |b_{k_3j_2}|^2 \bar c_{k'j_2} \\
    &=& \sum_{k'\neq 0} \left[\sum_{j_1} \left(\sum_{k_1} |a_{k_1j_1}|^2\right) c_{k'j_1} \right] \left[\sum_{j_2} \left(\sum_{k_3}|b_{k_3j_2}|^2\right) \bar c_{k'j_2}\right]
\end{eqnarray}
Let $r^\circledast_{amk'} := \sum_j \left(\sum_{k_1+k_2=m} a_{k_1j} a_{k_2j}\right) c_{k'j}$ (similar for $r^\circledast_{bmk'}$), then the above becomes $\sum_{k'\neq 0} r^{\circledast}_{a0k'} \bar r^{\circledast }_{b0k'}$. 

Similarly, for $(3,4)$, the above equation becomes $\sum_{k'\neq 0} \bar r^{\circledast}_{a0k'} r^{\circledast}_{b0k'}$. Therefore, we have:
\begin{equation}
    \epsilon_2 = \sum_{k'\neq 0} r^{\circledast}_{a0k'} \bar r^{\circledast}_{b0k'} + \bar r^{\circledast}_{a0k'} r^{\circledast}_{b0k'}
\end{equation}

Note that this term can be negative. However, we will see that when it is combined with the following terms, all terms will be non-negative.  

\textbf{Case 3: $(1,2,3,4)$ and $()$}. In this case we have:
\begin{eqnarray}
    & & \sum_{k'\neq 0} \sum_{j_1j_2}\sum_{p\in \{a,b\}} \sum_{k_1+k_2+k_3+k_4=0} z_{pk_1j_1} z_{pk_2j_1} z_{pk_3j_2} z_{pk_4j_2} c_{k'j_1} \bar c_{k'j_2} \\
    &=& \sum_{k'\neq 0} \sum_{j_1j_2}\sum_{p\in \{a,b\}} \sum_{k_1+k_2=k_3+k_4} z_{pk_1j_1} z_{pk_2j_1} \bar z_{pk_3j_2} \bar z_{pk_4j_2} c_{k'j_1} \bar c_{k'j_2} \\
    &=& \sum_{k'\neq 0} \sum_m \sum_{p\in \{a,b\}} \sum_{j_1j_2}\sum_{k_1+k_2=m}\sum_{k_3+k_4=m} z_{pk_1j_1} z_{pk_2j_1} \bar z_{pk_3j_2} \bar z_{pk_4j_2} c_{k'j_1} \bar c_{k'j_2} \\
    &=& \sum_{k'\neq 0} \sum_m \sum_{p\in \{a,b\}} \left[\sum_{j_1} \left(\sum_{k_1+k_2=m} z_{pk_1j_1} z_{pk_2j_1}\right) c_{k'j_1}\right]\left[ \sum_{j_2}\left(\sum_{k_3+k_4=m} \overline{z_{pk_3j_2} z_{pk_4j_2}}\right) \bar c_{k'j_2}\right] \nonumber \\ 
    &=& \sum_{k'\neq 0} \sum_m |r^{\circledast}_{amk'}|^2 + |r^{\circledast}_{bmk'}|^2
\end{eqnarray}
In particular, when $m = 0$, we have $\sum_{k'\neq 0} |r^{\circledast}_{a0k'}|^2 + |r^{\circledast}_{b0k'}|^2$. Therefore, we have
\begin{equation}
    \epsilon_2 + \epsilon_{3, m=0} = \sum_{k'\neq 0} |r^{\circledast}_{a0k'} + r^{\circledast}_{b0k'}|^2
\end{equation}
Finally, putting them together, we have:
\begin{eqnarray}
    \ee{\vo^\top P_1^\perp \vo} &=& d(\epsilon_1 + \epsilon_2 + \epsilon_3) = d(\epsilon_1 + \left(\epsilon_2 + \epsilon_{3,m=0}\right) + \epsilon_{3,m\neq 0}) \\
    &=& d\sum_{k'\neq 0} \left(4\sum_{k_1k_2} |r_{k_1k_2k'}|^2 + |r^{\circledast}_{a0k'} + r^{\circledast}_{b0k'}|^2 + \sum_{m\neq 0} |r^{\circledast}_{amk'}|^2 + |r^{\circledast}_{bmk'}|^2\right) \nonumber \\
    &\ge & 0 
\end{eqnarray}
Putting them together, we arrived at the conclusion.
\end{proof}

\globalsolutions*
\begin{proof}
Note that $2\sum_k r_{kkk} - \sum_k |r_{kkk}|^2$ has a minimizer $r_{kkk} = 1$. Therefore, the best loss value any assignment of weights is able to achieve is the following:
\begin{align}
    r_{k_1k_2k'} & = \sum_j a_{k_1j}b_{k_2j}c_{k'j} = \mathbb{I}(k_1=k_2=k') & k' \neq 0\\
    r^{\circledast}_{a0k'} + r^{\circledast}_{b0k'} & := \sum_j \left(\sum_k |a_{kj}|^2 + |b_{kj}|^2\right) c_{k'j} = 0 & k' \neq 0 \\
    r^{\circledast}_{amk'} & := \sum_j \left(\sum_{k_1+k_2=m} a_{k_1j}a_{k_2j}\right) c_{k'j} = 0 & k' \neq 0, m \neq 0 \\
    r^{\circledast}_{bmk'} & := \sum_j \left(\sum_{k_1+k_2=m} b_{k_1j}b_{k_2j}\right) c_{k'j} = 0 & k' \neq 0, m \neq 0 
\end{align}
Therefore the sufficient conditions (Eqn.~\ref{eq:globalminimizer}) will make all above come true. 
\end{proof}

\section{Semi-ring structure of $\cZ$ (Proof)}
\zring*
\begin{proof}
Straightforward from the definition of addition and multiplication (Def.~\ref{def:operationsinz}) and identification of hidden nodes under permutation (Def.~\ref{def:permutation-invariant}). Note that ring addition (i.e., concatenation) does not have inverse and thus it is a semi-ring.  
\end{proof}

\def\supp{\mathrm{supp}}

\pothomo*
\begin{proof}
Let $r(\vz) = \sum_j \prod_{(p, k) \in \idx(r)} z_{pkj}$. Since the ring identity $\vone$ is order-1 and all $z_{pkj} = 1$, it is obvious that $r(\vone) = 1$. 

Let $\supp(\vz_1)$ be the subset of the hidden nodes that corresponds to $\vz_1$ in the concatenated solution $\vz_1\ringadd\vz_2$, similar for $\supp(\vz_2)$. Note that 
\begin{equation}
    r(\vz_1+\vz_2) = \sum_{j\in \supp(\vz_1)} \prod_{(p, k) \in \idx(r)} z^\bone_{pkj} + \sum_{j\in \supp(\vz_2)} \prod_{(p, k) \in \idx(r)} z^\btwo_{pkj} = r(\vz_1) + r(\vz_2)
\end{equation}
On the other hand, we have
\begin{eqnarray}
    r(\vz_1\ringmul\vz_2) &=& \sum_{j_1j_2} \prod_{(p, k) \in \idx(r)} \left(z^\bone_{pkj_1} z^\btwo_{pkj_2}\right) \\
    &=& \sum_{j_1j_2} \left(\prod_{(p, k) \in \idx(r)} z^\bone_{pkj_1}\right)\left( 
\prod_{(p, k) \in \idx(r)} z^\btwo_{pkj_2}\right) \\
    &=& \left(\sum_{j_1} \prod_{(p, k) \in \idx(r)} z^\bone_{pkj_1}\right)\left(\sum_{j_2} \prod_{(p, k) \in \idx(r)} z^\bone_{pkj_2}\right) \\
    &=& r(\vz_1) r(\vz_2)
\end{eqnarray}
\end{proof}

\algebraglobaloptimizer*
\begin{proof}
Straightforward by leveraging the property of ring homomorphism. E.g., 
\begin{equation}
    r_{kkk}(\vz \ringmul \vy) = r_{kkk}(\vz) r_{kkk}(\vy) = r_{kkk}(\vz)
\end{equation}
and the proof is complete.
\end{proof}

\section{Solution Construction (Proof)}
\subsection{Construction of Partial Solutions}
\polyconst*
\begin{proof}
By definition, for any $r \in \con$ we have:
\begin{equation}
    r(\vz(\vu)) = \prod_{s\in \Omega_\con(\vu)} (r(\vu) + r(\hat\vs)) = \prod_{s\in \Omega_\con(\vu)} (r(\vu) - s) = 0
\end{equation}
similarly for any $r_{kkk} \in R_+$ we have:
\begin{equation}
    r_{kkk}(\vz(\vu)) = \prod_{s\in \Omega_\con(\vu)} (r_{kkk}(\vu) + r_{kkk}(\hat\vs)) = \prod_{s\in \Omega_\con(\vu)} (1 - s) \neq 0
\end{equation}
which is constant over different $k$. So $\vz(\vu)$ satisfies Lemma~\ref{co:globalminimizer}, up to a scaling factor. 
\end{proof}

\subsection{Construction of Global Solutions}
\ordersix*
\begin{proof}
Just notice that $\vz_\syn := \vrho(\vu_\syn)= \vu^2_\syn + \vu_\syn + \vone_k$ (superscript $(k)$ are omitted for brevity) makes all MPs in $R_\n$, $R_\c$ and part of $R_*$ (Tbl.~\ref{tab:poly-construction}) equal to $0$, except for ``aac'' and ``bbc'', which corresponds to monomial polynomials $r_{akkk} := \sum_j z_{akj}z_{akj}z_{ckj}$ and $r_{bkkk} := \sum_j z_{bkj}z_{bkj}z_{ckj}$. On the other hand, according to Tbl.~\ref{tab:poly-construction}, $\vz_{\nu} := \vu_{\nu} + \vone_k$ has $r_{akkk}(\vz_{\nu}) = r_{bkkk}(\vz_{\nu}) = 0$. Therefore, using ring homomorphism, we know that for any $r \in R_\n\cup R_\c \cup R_*$, $r(\vz_\syn \ringmul \vz_{\nu}) = 0$ and thus $R_\n\cup R_\c \cup R_*$ is the 0-sets. 

On the other hand for any $k'$, we have: 
\begin{align}
      r_{k'k'k'}(\vz_{F6}) &= r_{k'k'k'}\left(\frac{1}{\sqrt[3]{6}}\sum_{k=1}^{(d-1)/2} \vz^\bk_\syn \ringmul \vz^\bk_{\nu} \ringmul \vy_k\right) \\
      &= \frac{1}{6} \sum_{k=1}^{(d-1)/2} r_{k'k'k'}(\vz^\bk_\syn \ringmul \vz^\bk_{\nu} \ringmul \vy_k) \\
     &= \frac{1}{6} \sum_{k=1}^{(d-1)/2} 6 (\mathbb{I}(k=k') + \mathbb{I}(k=-k')) = 1 
\end{align}
The last equality is due to the fact that we only sum over half of the frequency. This means that $R_\g$ is a 1-set of $\vz_{F6}$. Therefore, $\vz_{F6}$ satisfies the sufficient condition (Eqn.~\ref{eq:globalminimizer}) and the conclusion follows.  
\end{proof}

\perfectmem*
\begin{proof}
Simply plugging in the solution and check whether the equations specified the equations. For $\vz_a$, for $k = 0$ everything is zero; for $k\neq 0$, we have: 
\begin{eqnarray}
    r_{k_1k_2k}(\vz_a) &=& \sum_j a_{k_1j}b_{k_2j}c_{kj} = \sum_j \omega^{j(k_1-k)} = \mathbb{I}(k_1=k\neq 0) \\
    r_{amk'k}(\vz_a) &=& \sum_j a_{k'j}a_{m-k',j}c_{kj} = \sum_j \omega^{j(m-k)} = \mathbb{I}(m=k\neq 0) \\
    r_{bmk'k}(\vz_a) &=& \sum_j b_{k'j}b_{m-k',j}c_{kj} = \sum_j \omega^{-jk} = \mathbb{I}(k=0) = 0 
\end{eqnarray}
Therefore, $\vz_a \in R_\c(k_1 \neq k) \cap R_\n \cap R_\conv(p = b \mathrm{\ or\ } m \neq k)$. Similar for $\vz_b$. For $\vz_M := d^{-2/3} \vz_a \ringmul \vz_b$, it satisfies all 0-sets constraints (i.e., for any $r$, either $\vz_a$ satisfies with $r(\vz_a) = 0$, or $\vz_b$ satisfies with $r(\vz_b) = 0$) and we have:
\begin{equation}
    r_{kkk}(d^{-2/3}\vz_a\ringmul\vz_b) = d^{-2} r_{kkk}(\vz_a)r_{kkk}(\vz_b) = d^{-2} \cdot d \cdot d = 1 
\end{equation}
So $\vz_M$ satisfies the sufficient conditions (Eqn.~\ref{eq:globalminimizer}).
\end{proof}

\orderfour*
\begin{proof}
First, $\vu_{\nu=\i} = \vu_{4\c}$ in Tbl.~\ref{tab:poly-construction} and thus $\vrho(\vu_{\nu=\i})$ has 0-sets $R_\c$ and $R_*$ except for ``$ab\bar c$'', which corresponds to MP $r_{k,k,-k} \in R_\c$. On the other hand, we have  
\begin{equation}
    r_{k,k,-k}(\vz_\xi) = 1 + \xi \cdot (-\i \bar\xi) \cdot (-\i) = 0  
\end{equation}
With the property of ring homomorphism, the conclusion follows. 
\end{proof}

\foursixsol*
\begin{proof}
While $\vz^\bk_{F4}$ does not satisfy $R_\n$, a weaker condition for a global optimizer to Theorem~\ref{thm:analyticform} is that $\sum_{k'} r_{p,k',-k',m} = 0$. We show that by adding constants to $(c,k)$ entries of $\vz^{(k_0)}_{F6}$ for $k\neq \pm k_0$, we can achieve that while not changing the value of other MPs.  

To see this, we compute for each $m \neq \pm k_0$:
\begin{align}
    & \sum_{k'} r_{p,k',-k',m}(\hat \vz^{(k_0)}_{F6}) = 2\sum_{k'}\sum_j |[\hat \vz^{(k_0)}_{F6}]_{pk'j}|^2 [\hat \vz^{(k_0)}_{F6}]_{cmj} \\
    & = 2\sum_j |[\hat \vz^{(k_0)}_{F6}]_{pk_0j}|^2 [\hat \vz^{(k_0)}_{F6}]_{cmj} = 2\sum_j [\hat \vz^{(k_0)}_{F6}]_{cmj}
\end{align}
The second equality is because all $(a,k')$ and $(b,k')$ entries are $0$ except for $k' = \pm k_0$, and the last equality is because all nonzero entries of $\vz^{(k_0)}_{F6}$ have magnitude $1$.  

On the other hand, we have:
\begin{align}
    \sum_{k'} r_{p,k',-k',m}\left(\sum_{k\neq k_0} \vz^\bk_{F4}\right) &= \sum_{k'} r_{p,k',-k',m}(\vrho(\vu_{4\c}^{(m)})) r_{p,k',-k',m}(\vz_\xi^{(m)}) \\
    &= 2 r_{p,m,-m,m}(\vrho(\vu_{4\c}^{(m)})) r_{p,m,-m,m}(\vz_\xi^{(m)}) \\
    &= 2 (1 + 1) (1+\i) = 4(1+\i) 
\end{align}

For $m= \pm k_0$, we have $r_{p,k',-k',m}(\hat \vz^{(k_0)}_{F6}) = 0$ and $r_{p,k',-k',m}(\vz^{(k)}_{F4}) = 0$ for $k\neq m$.

\begin{figure}
    \centering
    \includegraphics[width=0.8\textwidth]{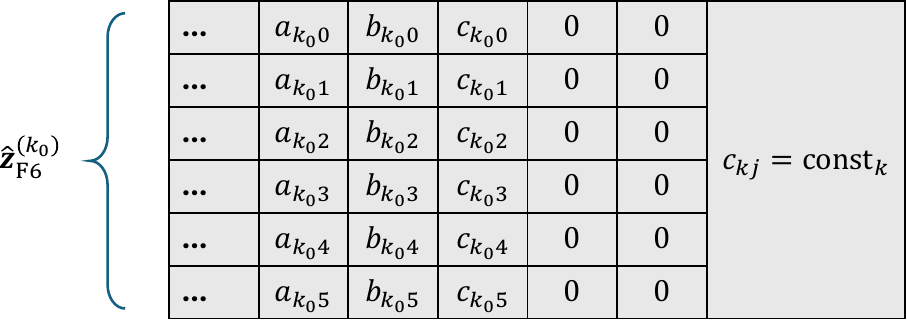}
    \caption{Visualization of $\hat \vz^{(k_0)}_{F6}$.}
    \label{fig:visualize-mixed}
\end{figure}

Therefore, we just let 
\begin{equation}
[\hat \vz^{(k_0)}_{F6}]_{cmj} = -\frac{4(1+\i)}{2\cdot 6} = -\frac13 (1 + \i) 
\end{equation}
and $\sum_{k'} r_{p,k',-k',m}(\vz_{F4/6}) = 0$ for all $m$. See Fig.~\ref{fig:visualize-mixed} for the construction. 

To see why such a modification of $\vz^{(k_0)}_{F6}$ won't change other MPs, simply notice that candidate MPs that may not be zero anymore are $r_{\pm k_0\pm k_0m}$, $r_{pk_0k_0m}$ and $r_{p,-k_0,-k_0,m}$ for $m\neq \pm k_0$. For $m=\pm k_0$, $\vz^{(k_0)}_{F6}$ are well behaved. 

Note that $r_{\pm k_0 \pm k_0k}(\hat \vz^{(k_0)}_{F6})$ is the same as applying $r_{\pm k_0 \pm k_0k_0}$ to a solution $\hat\vz$ which replaces $(c, k_0)$ entries of $\hat \vz^{(k_0)}_{F6}$ by $(c,m)$ entries. Let $\hat\vu_\syn = [\omega_3, \omega_3, 1]$ and $\hat\vu_\one = [1, -1, 1]$. Then $\hat\vz = \vrho(\hat\vu_\syn) * \vrho(\hat\vu_\one)$ and thus for $m\neq \pm k_0$, we have:
\begin{align}
        r_{\pm k_0\pm k_0m}(\vz_{F4/6}) &= r_{\pm k_0\pm k_0m}(\hat \vz^{(k_0)}_{F6}) \propto r_{\pm k_0 \pm k_0k_0}(\hat\vz) \\
        &= r_{\pm k_0 \pm k_0k_0}(\vrho(\hat\vu_\syn)) r_{\pm k_0 \pm k_0k_0}(\vrho(\hat\vu_\one)) = 0
\end{align}
since $r_{\pm k_0 \pm k_0 k_0}(\vrho(\hat\vu_\one)) = 0$. Similarly for $m \neq \pm k_0$, 
\begin{align}
    r_{pk_0k_0m}(\vz_{F4/6}) &= 
    r_{pk_0k_0m}(\hat \vz^{(k_0)}_{F6}) \propto r_{pk_0k_0k_0}(\hat\vz) \\
    &= r_{pk_0k_0k_0}(\vrho(\hat\vu_\syn)) r_{pk_0k_0k_0}(\vrho(\hat\vu_\one)) = 0
\end{align}
since $r_{p k_0 k_0 k_0}(\vrho(\hat\vu_\syn)) = 0$. Similarly for $r_{p,-k_0,-k_0,m}$.
\end{proof}
\textbf{Remarks.} To construct $\hat\vz_{F6}$, in addition to $\vz_\syn\ringmul\vz_{\nu=1}$ shown in the main proof, we could use other compositions to achieve the same effects. For example, $\vz_{\syn,\alpha\beta}\ringmul\vz_{\nu=\i}$, where $\vz_{\syn,\alpha\beta}$ is:
\begin{equation}
    z_{ak\cdot}= [1,\omega_3\alpha,\bar\omega_3\beta],\quad z_{bk\cdot}=[1,\omega_3\bar\alpha, \bar\omega_3\bar\beta],\quad z_{ck\cdot}= [1, \omega_3, \bar\omega_3] \label{eq:synab}
\end{equation}
where $|\alpha| = |\beta| = 1$. Note that $\vz_{\syn} = \vrho(\vu_\syn)$ is a special case of $\vz_{\syn,\alpha\beta}$ when $\alpha=\beta=1$.

\subsection{Canonical Forms}
\label{sec:canonical-form}
\begin{definition}
\label{def:canonical}
A solution $\vz$ is called \emph{canonical at $k_0$}, or $\vz \in \cC_{k_0}$, if $z_{pk0} = 1$ for all $p$ and $k=\pm k_0$. 
\end{definition}

\begin{restatable}[Canonical Decomposition]{lemma}{candecomp}
Any solution $\vz$ with $r_{k_0k_0k_0}(\vz) \neq 0$ can be decomposed into $\vz = \vz' * \vy$, where $\vz'$ is canonical at $k_0$ and $\ord(\vy) = 1$. Both $r_{k_0k_0k_0}(\vz')\neq 0$ and $r_{k_0k_0k_0}(\vy)\neq 0$.
\end{restatable}
\begin{proof}
Since $r_{k_0k_0k_0}(\vz) = \sum_j a_{k_0j} b_{k_0j} c_{k_0j} \neq 0$, there must exist some $j$ so that $z_{ak_0j} z_{bk_0j} z_{ck_0j} \neq 0$, which means that $z_{ak_0j} \neq 0$, $z_{bk_0j} \neq 0$ and $z_{ck_0j} \neq 0$. Since the node index $j$ can be permuted, we can let node $j$ be the first node $0$ and let $y_{pk0} = z_{pkj}$ and $z'_{pkj'} = z_{pkj'} z_{pkj}^{-1}$ for $p\in \{a,b,c\}$ and $k = \pm k_0$, then $\vz'$ is canonical at $k_0$ and $\ord(\vy) = 1$. Finally, by ring homomorphism, since 
\begin{equation}
    r_{k_0k_0k_0}(\vz) = r_{k_0k_0k_0}(\vz') r_{k_0k_0k_0}(\vy) \neq 0  
\end{equation}
we know that both $r_{k_0k_0k_0}(\vz')\neq 0$ and $r_{k_0k_0k_0}(\vy)\neq 0$.
\end{proof}

\rebut{
\begin{restatable}[Necessary Condition for $R_\c$]{lemma}{satisfyra}
\label{lemma:order12-necessary}
All order-1 and order-2 solutions satisfying $R_\c := \{r_{k_1k_2k} = 0, k_1, k_2,k\mathrm{\ not\ all\ equal}\}$ must have $r_{kkk} = 0$ for all $k$ (i.e. the first equation in Eqn.~\ref{eq:globalminimizer} cannot be satisfied). 
\end{restatable}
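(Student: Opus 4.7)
The plan is to argue by contradiction: assume $r_{k_0 k_0 k_0}(\vz) \neq 0$ for some nonzero $k_0$ with $k_0 \neq -k_0 \pmod d$, and derive a clash from the off-diagonal constraints in $\con_\c$ together with the Hermitian identity $z_{p,-k,j} = \bar z_{pkj}$. The order-1 case is immediate: writing $r_{k_1k_2k}(\vz) = z_{ak_10} z_{bk_20} z_{ck0}$, the assumption $r_{k_0k_0k_0} \neq 0$ forces each of $z_{ak_00}, z_{bk_00}, z_{ck_00}$ to be nonzero, and then $r_{k_0,-k_0,k_0}(\vz) = z_{ak_00} \bar z_{bk_00} z_{ck_00} \neq 0$ by Hermitian. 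But $(k_0, -k_0, k_0)$ is not diagonal, so $r_{k_0,-k_0,k_0} \in \con_\c$ demands that it vanishes.

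The order-2 case is more intricate. The first step is to linearize: writing $a^\bone_k := z_{ak0}$, $a^\btwo_k := z_{ak1}$ (and analogously for $b$, $c$), the constraints $r_{k_0, k, k_0}(\vz) = (a^\bone_{k_0} c^\bone_{k_0}) b^\bone_k + (a^\btwo_{k_0} c^\btwo_{k_0}) b^\btwo_k = 0$ for each $k \neq k_0$ amount to the assertion that the $2$-vector $(b^\bone_k, b^\btwo_k)$ lies in a fixed one-dimensional subspace of $\cc^2$ (the coefficient vector $(a^\bone_{k_0} c^\bone_{k_0}, a^\btwo_{k_0} c^\btwo_{k_0})$ is nonzero because $r_{k_0 k_0 k_0} \neq 0$). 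Hence there exists $\mu_b \in \cc$ with $b^\btwo_k = \mu_b b^\bone_k$ for all $k \neq k_0$. The analogous SP families $r_{k, k_0, k_0}$ and $r_{k_0, k_0, k}$ produce constants $\mu_a$ and $\mu_c$. Evaluating each proportionality at $k = -k_0$ and applying Hermitian on node~$2$ then gives $p^\btwo_{k_0} = \bar\mu_p \, p^\bone_{k_0}$ for $p \in \{a,b,c\}$.

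Substituting into the six off-diagonal SPs indexed by non-diagonal tuples in $\{k_0,-k_0\}^3$ (for example, $r_{k_0, -k_0, k_0}(\vz) = a^\bone_{k_0} \bar b^\bone_{k_0} c^\bone_{k_0} (1 + \bar\mu_a \mu_b \bar\mu_c)$) yields, modulo complex conjugation, the three equations $\bar\mu_a \bar\mu_b \mu_c = \bar\mu_a \mu_b \bar\mu_c = \mu_a \bar\mu_b \bar\mu_c = -1$. The main obstacle is extracting reality from these: multiplying them pairwise shows $\bar\mu_p^{\,2} \, |\mu_q|^2 |\mu_r|^2 = 1$ for each permutation $(p, q, r)$ of $(a, b, c)$, forcing every $\mu_p$ to be real, after which any of the three equations reduces to $\mu_a \mu_b \mu_c = -1$. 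But then $r_{k_0 k_0 k_0}(\vz) = a^\bone_{k_0} b^\bone_{k_0} c^\bone_{k_0} (1 + \bar\mu_a \bar\mu_b \bar\mu_c) = 0$, contradicting the hypothesis. The degenerate sub-cases in which some coefficient like $a^\bone_{k_0} c^\bone_{k_0}$ (or its node-$2$ counterpart) vanishes collapse the order-$2$ problem to an order-$1$ problem on the surviving node, so the same Hermitian argument applies; the corner case $k_0 = d/2$ for even $d$ lies outside the lemma's reach since $(k_0, -k_0, k_0)$ becomes diagonal and $\con_\c$ imposes no constraint.
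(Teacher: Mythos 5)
Your proof is correct and takes essentially the same route as the paper's. Both arguments dispatch the order-1 case by observing that $r_{k,-k,k}\in R_\c$ forces a factor of $z_{pk0}$ to vanish, and both handle the order-2 case by combining exactly the three single-frequency constraints $r_{k_0,-k_0,k_0}=r_{-k_0,k_0,k_0}=r_{k_0,k_0,-k_0}=0$ with the Hermitian identity $z_{p,-k,j}=\bar z_{pkj}$ to derive a relation that annihilates $r_{k_0k_0k_0}$.

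The only real difference is the parametrization of the node-2-to-node-1 data. The paper writes $z_{pk0}=m_{p0}e^{\i\theta'_p}e^{\i\theta_p}$ and $z_{pk1}=m_{p1}e^{\i\theta'_p}e^{-\i\theta_p}$ and works with the phases $\theta_p$, reducing the three constraints to a linear system whose solutions all satisfy $\theta_a+\theta_b+\theta_c\equiv\pm\pi/2 \pmod{\pi}$; you instead work directly with the per-coordinate ratio $\mu_p$, turning the three constraints into $\bar\mu_a\mu_b\bar\mu_c=\mu_a\bar\mu_b\bar\mu_c=\bar\mu_a\bar\mu_b\mu_c=-1$ and multiplying pairs to force each $\mu_p$ real with $\mu_a\mu_b\mu_c=-1$. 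The two parametrizations encode identical information ($\mu_p = (m_{p1}/m_{p0})e^{2\i\theta_p}$ up to conjugation), so the arguments are interchangeable; your version is marginally cleaner in that it avoids the paper's enumeration of phase solutions (which is in any case unnecessary, since summing the three phase equations already gives the needed congruence). You are also more explicit about the degenerate sub-cases where a coefficient vanishes (which indeed collapse to the order-1 argument on the surviving node, yielding $r_{k_0k_0k_0}=0$ directly) and about the $k_0=-k_0$ corner at $k_0=d/2$ for even $d$, which the paper's proof silently assumes away since it relies on $r_{k_0,-k_0,k_0}$ being an off-diagonal constraint.
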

\begin{proof}
For any order-1 solution, for any $k$, in order to make $r_{k,-k,k} = z_{ak0}z_{b,-k,0}z_{ck0} = z_{ak0}\bar z_{bk0}z_{ck0} = 0$, either $z_{ak0}$, $z_{bk0}$ or $z_{ck0}$ has to be zero, which means that $r_{kkk} = 0$. 

For order-2, first of all if any $z_{pk0} = 0$ for any $p \in \{a,b,c\}$, then a constraint like $r_{k,k,-k} = z_{ak0}z_{bk0}\bar z_{ck0} + z_{ak1}z_{bk1}\bar z_{ck1} = 0$ yields $z_{ak1}z_{bk1} z_{ck1} = 0$ and thus $r_{kkk} = 0$. If not, then for any two complex numbers $z_{pk0}$ and $z_{pk1}$, there always exist four real numbers $\theta_p \in (-\pi,\pi]$, $\theta'_p \in (-\pi,\pi]$, $m_{p0} > 0$ and $m_{p1} > 0$ so that
\begin{equation}
    z_{pk0} = m_{p0} e^{\i \theta'_p}e^{\i \theta_p}, \quad\quad z_{pk1} = m_{p1} e^{\i \theta'_p}e^{-\i \theta_p} 
\end{equation}
Then a constraint like $r_{k,k,-k} = z_{ak0}z_{bk0}\bar z_{ck0} + z_{ak1}z_{bk1}\bar z_{ck1} = 0$ can be written as $z_{ak0}z_{bk0}\bar z_{ck0} = - z_{ak1}z_{bk1}\bar z_{ck1}$, or equivalently: 
\begin{eqnarray}
    m_{a0}m_{b0}m_{c0} e^{\i (\theta'_a + \theta'_b + \theta'_c)}
    e^{\i (\theta_a + \theta_b - \theta_c)} &=& - m_{a1}m_{b1}m_{c1} e^{\i (\theta'_a + \theta'_b + \theta'_c)} e^{-\i (\theta_a + \theta_b - \theta_c)} \\
    m_{a0}m_{b0}m_{c0} e^{\i \theta_a}e^{\i \theta_b}e^{-\i \theta_c} &=& - m_{a1}m_{b1}m_{c1} e^{-\i \theta_a}e^{-\i\theta_b}e^{\i \theta_c}
\end{eqnarray}
Comparing their magnitude and phase, we have $m_{a0}m_{b0}m_{c0} = m_{a1}m_{b1}m_{c1}$ and 
\begin{equation}
    \theta_a + \theta_b - \theta_c = \pm \pi/2 \mod 2\pi
\end{equation}
Similarly, we have:
\begin{equation}
    \theta_a + \theta_c - \theta_b = \pm \pi/2 \mod 2\pi,\quad\quad
    \theta_b + \theta_c - \theta_a = \pm \pi/2 \mod 2\pi
\end{equation}
Solving the three equations and we have 6 possible solutions:
\begin{align}
   (\theta_a, \theta_b, \theta_c) &= (0, 0, \pm \pi/2)\mod 2\pi \\ 
   (\theta_a, \theta_b, \theta_c) &= (0, \pm \pi/2, 0)\mod 2\pi \\ 
   (\theta_a, \theta_b, \theta_c) &= (\pm \pi/2, 0, 0)\mod 2\pi
\end{align}
For all such solutions, let $m := m_{a0}m_{b0}m_{c0} = m_{a1}m_{b1}m_{c1}$, then we have:
\begin{eqnarray}
r_{kkk} &=& z_{ak0}z_{bk0}z_{ck0} + z_{ak1}z_{bk1}z_{ck1} \\
&=& m e^{\i(\theta'_a + \theta'_b + \theta'_c)} (e^{\i (\theta_a + \theta_b + \theta_c)} + e^{-\i (\theta_a + \theta_b + \theta_c)}) \\
&=& m e^{\i(\theta'_a + \theta'_b + \theta'_c)} (e^{\pm \i\pi/2} + e^{\mp \i\pi/2}) \\
&=& 0
\end{eqnarray}
\end{proof}
}

\rebut{

\begin{restatable}[Property of order-3 solutions satisfying $R_\c$ and $R_\g$]{lemma}{satisorderthree}
\label{lemma:order3-necessary}
With small $L_2$ regularization, all per-frequency order-3 canonical solutions $\vz$ at frequency $k_0$ that satisfy $R_\c$ and $R_\g$ are in the following form:
\begin{equation}
    z_{pk_0\cdot} = [1, \alpha_p \omega_3, \beta_p \bar\omega_3], \quad\quad \mathrm{for\ } p\in \{a,b,c\}
\end{equation}
where $\alpha_p = \pm 1$ and $\beta_p = \pm 1$ with the constraint that $\alpha_a\alpha_b\alpha_c = \beta_a\beta_b\beta_c = 1$. For $k \neq k_0, z_{pk\cdot} = 0$.
\end{restatable}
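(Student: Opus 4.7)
The plan is to translate the canonical and per-frequency hypotheses into a small polynomial system on six complex unknowns and use the $L_2$ regularizer together with $R_\g$ to pin down the claimed form. Under the hypotheses, the free parameters are $(a_j,b_j,c_j):=(z_{ak_0j},z_{bk_0j},z_{ck_0j})$ for $j=1,2$, with $j=0$ entries fixed at $1$ and $z_{p,-k_0,j}=\overline{z_{pk_0j}}$ by the Hermitian constraint. Since the support is $\{\pm k_0\}$, the only possibly nonzero off-diagonal SPs in $R_\c$ are indexed by triples in $\{\pm k_0\}^3\setminus\{\pm(k_0,k_0,k_0)\}$; after identifying complex conjugates, these yield three independent complex equations $1+u_1+u_2=0$, $1+v_1+v_2=0$, $1+w_1+w_2=0$, where $u_j:=a_jb_j\bar c_j$, $v_j:=a_j\bar b_jc_j$, $w_j:=\bar a_jb_jc_j$.

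Next I apply AM-GM within each node: $|a_j|^2+|b_j|^2+|c_j|^2\ge 3(|a_j||b_j||c_j|)^{2/3}$, with equality iff $|a_j|=|b_j|=|c_j|$. Because the three constraints depend on per-node magnitudes only through the common product $|a_j||b_j||c_j|$ while their phases are controlled by independent angles, any candidate minimizer of $\ell+\lambda\|\vw\|^2$ in the $\lambda\to 0^+$ limit admits an equilateral node-wise rescaling that strictly reduces the regularizer at no cost to feasibility. So I set $|a_j|=|b_j|=|c_j|=:r_j$ and $m_j:=r_j^3=|u_j|=|v_j|=|w_j|$, and parametrize by phases $\phi_{uj},\phi_{vj},\phi_{wj}$. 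A central identity is $u_jv_jw_j=|s_j|^2 s_j=m_j^2 s_j$, where $s_j:=a_jb_jc_j$, so $r_{k_0k_0k_0}=1+s_1+s_2$ is controlled entirely by the phase triples.

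The main obstacle is to show that small $L_2$ together with $R_\g$ forces $m_1=m_2=1$ and further selects the claimed phase structure. A key exclusion removes the degenerate branch $m_1+m_2=1$: there the constraints force $u_j=v_j=w_j=-m_j$ real negative, so $s_j=-m_j$ and $s_1+s_2=-1$, making $r_{k_0k_0k_0}=0$ and violating $R_\g$. A case analysis on the remaining feasible region then pins down $m_1=m_2=1$ and, because $z_1+z_2=-1$ on the unit circle has the unique solution $\{z_1,z_2\}=\{\omega_3,\bar\omega_3\}$, forces $\{u_1,u_2\}=\{v_1,v_2\}=\{w_1,w_2\}=\{\omega_3,\bar\omega_3\}$. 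Among the $2^3$ sign assignments of $(u_1,v_1,w_1)\in\{\omega_3,\bar\omega_3\}^3$, the six mixed triples give $s_j\in\{\omega_3,\bar\omega_3\}$ and paired with the forced node-$2$ swap yield $s_1+s_2=-1$, again violating $R_\g$; only the two all-equal triples survive, giving $s_j=1$ on both nodes and $r_{k_0k_0k_0}=3$. Finally, back-solving $2\theta_{aj}=\phi_{uj}+\phi_{vj}$ (cyclically) gives $a_j\in\{\pm\omega_3\}$ on node $1$ (the mod-$\pi$ ambiguity is exactly the $\alpha_a\in\{\pm 1\}$ freedom), and imposing $u_1=\omega_3$ (or conjugately for node $2$) yields the product constraint $\alpha_a\alpha_b\alpha_c=1$; analogously $\beta_a\beta_b\beta_c=1$. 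The magnitude pin-down is the most delicate piece, since the $L_2$ cost alone prefers smaller $m_j$, and one must systematically rule out all lower-$m$ branches as violating $R_\g$.
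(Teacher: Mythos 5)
Your route is genuinely different from the paper's: you work node-by-node with the products $u_j=a_jb_j\bar c_j$, $v_j=a_j\bar b_jc_j$, $w_j=\bar a_jb_jc_j$, use the identity $u_jv_jw_j=m_j^2 s_j$, and try to drive everything by combining AM--GM with triangle-inequality exclusions against $R_\g$. The exclusion of the boundary $m_1+m_2=1$ (where $u_j,v_j,w_j$ are forced to be real negative, hence $s_1+s_2=-1$ and $r_{k_0k_0k_0}=0$) is a nice observation and correct; the phase bookkeeping given $m_1=m_2=1$ is also correct.

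However, there is a real gap at the step you yourself flag: ``a case analysis on the remaining feasible region then pins down $m_1=m_2=1$.'' No such case analysis is given, and the framework you set up cannot deliver it as stated. Feasibility requires $m_1+m_2\ge 1$ (triangle inequality), with the equality case excluded by $R_\g$, so the region $\{m_1+m_2>1\}$ is open; the equilateral $L_2$ cost $\sum_j 3m_j^{2/3}$ has no interior minimizer there and its infimum sits on the excluded boundary. Constraining only $r_{k_0k_0k_0}\neq 0$ does not repair this, and scaling to $r_{k_0k_0k_0}=1$ does not obviously force $m_1=m_2$ either. The paper sidesteps this entirely with a structural step you are missing: the $R_\c$ constraints $(\va\circ\bar\vb)^\top\vc=(\va\circ\bar\vb)^\top\bar\vc=(\bar\va\circ\vb)^\top\vc=(\bar\va\circ\vb)^\top\bar\vc=0$ mean $\mathrm{span}(\Re(\va\circ\bar\vb),\Im(\va\circ\bar\vb))\perp\mathrm{span}(\Re\vc,\Im\vc)$ in $\rr^3$, so one of these spans is at most one-dimensional; combined with canonicality ($a_0\bar b_0=1$, $c_0=1$ real) and the requirement $r_{k_0k_0k_0}\neq 0$, this forces $\va\circ\bar\vb,\vb\circ\bar\vc,\vc\circ\bar\va$ to be real, i.e.\ full phase alignment $\theta_{aj}=\theta_{bj}=\theta_{cj}$ modulo $\pi$, \emph{before} any regularization enters. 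Only after that does the paper run a Lagrangian trade-off ($\max\,2\Re\,r_{kkk}-\epsilon\sum_jm_j^2$ subject to the aligned constraint), which cleanly yields $\theta_j\in\{2\pi/3,4\pi/3\}$ independently of $\epsilon$. Without an analogue of the orthogonality-in-$\rr^3$ argument (or an explicit $\epsilon$-penalized optimization), your case analysis remains unsupported and the proof does not close.
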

\begin{proof}
We first prove that $\vz$ satisfies $R_\c$ and $R_\g$. To see this, we have 
\begin{align}
    r_{k_1k_2k} &= \sum_j \mathbb{I}(k_1=k_2=k=k_0) \omega_3^{3j} + \sum_j \mathbb{I}(-k_1=k_2=k=k_0) \omega_3^j \\
    &+ \ldots + \sum_j \mathbb{I}(-k_1=-k_2=-k=k_0) \bar\omega_3^{3j} \\
    &= 3\mathbb{I}(k_1=k_2=k=k_0) + 3\mathbb{I}(k_1=k_2=k=-k_0) 
\end{align}
Note that all cross terms are gone since $\sum_j \omega^j_3 = 0$. It is clear that $r_{k_1k_2k} \neq 0$ unless $k_1 = k_2 = k$ so $\vz$ satisfies $R_\c$ and $R_\g$. 

Now we consider any per-frequency order-3 canonical solution (Def.~\ref{def:canonical}) at frequency $k$. Let $a_j := z_{akj}$, $b_j := z_{bkj}$ and $c_j := z_{ckj}$. Let $\va = [a_j]\in \cc^3$, $\vb = [b_j]\in \cc^3$ and $\vc = [c_j]\in \cc^3$. Since the solution is canonical, we have $a_0=b_0=c_0 = 1$. 

Then the conditions yield that
\begin{equation}
    (\va \circ \bar\vb)^\top \vc = 0,\quad 
    (\va \circ \bar\vb)^\top \bar \vc = 0,\quad
    (\bar\va \circ \vb)^\top \vc = 0,\quad 
    (\bar\va \circ \vb)^\top \bar \vc = 0
\end{equation}
which means that in $\rr^3$ space, the following condition holds: 
\begin{equation}
    \spann(\Re(\va\circ \bar\vb), \Im(\va\circ \bar\vb)) \perp \spann(\Re(\vc), \Im(\vc))
    \label{eq:orth-cond}
\end{equation}
where $\Re(\cdot)$ and $\Im(\cdot)$ are real and imaginary parts of a complex vector. Since Eqn.~\ref{eq:orth-cond} holds in $\rr^3$, it must be the following cases: either $\Re(\va\circ \bar\vb)$ is co-linear with $\Im(\va\circ \bar\vb)$, or $\Re(\vc)$ is co-linear with $\Im(\vc)$. 

If the latter is true (i.e., there exists $\beta$ so that $\beta\Re(\vc) = \Im(\vc)$), then since $c_0 = 1$ is real, $\beta = 0$ and $\Im(\vc) = 0$. So $\vc$ is real. In this case, 
\begin{equation}
    r_{kkk} = (\va \circ \vb)^\top \vc = (\va \circ \vb)^\top \bar\vc = 0   
\end{equation}
If the former is true, then similarly we conclude that $\Im(\va \circ \bar \vb) = 0$ and $\va \circ \bar \vb$ is real. Applying the same reasoning symmetrically, in order to find cases such that $r_{kkk} \neq 0$, a necessary condition is that 
\begin{equation}
    \va \circ \bar \vb, \vb \circ \bar \vc, \vc \circ \bar \va \in \rr^3
\end{equation}

Let $z_{pkj} = |z_{pkj}| e^{\i\theta_{pj}}$. Let's first consider the case that $\va \circ \bar \vb, \vb \circ \bar \vc, \vc \circ \bar \va \in \rr^3_{\ge 0}$. Then we have $\theta_{a0} = \theta_{b0} = \theta_{c0} = \theta_0 = 0$, $\theta_{a1} = \theta_{b1} = \theta_{c1} = \theta_1$, $\theta_{a2} = \theta_{b2} = \theta_{c2} = \theta_2$. Letting $m_j := |a_j| |b_j| |c_j|$, then the corresponding $r_{kkk}$ can be written as:
\begin{equation}
r_{kkk} = \sum_{j=0}^2 m_j e^{3\i\theta_j} 
\end{equation}
with the constraints that $\sum_{j=0}^2 m_j e^{\i\theta_j} = 0$ imposed by $R_\c$. 

\textbf{Minimal Norm solutions}. One interesting question is that what is the minimal norm representation that achieves the highest objective? For this we can solve the following optimization problem:
\begin{equation}
    \max_{\{m_j,\theta_j\}} \sum_j m_j (e^{3\i\theta_j} + e^{-3\i\theta_j}) - \epsilon \sum_j m^2_j \quad \mathrm{s.t.\ } \sum_j m_j e^{\i\theta_j} = 0
\end{equation}
which achieves the maximal when $m_j = 1 / \epsilon$, $\theta_1 = 2\pi \i /3$ and $\theta_2 = 4\pi \i / 3$ (or vise versa). Note that the optimal $\theta_j$ is fixed no matter how small the regularization coefficient $\epsilon$ is. 

To see that, let $u_j := e^{\i\theta_j}$. Then we have:
\begin{equation}
    \sum_j m_j (u_j + \bar u_j)^3 = \sum_j m_j [u^3_j + 3 u_j \bar u_j (u_j + \bar u_j) + \bar u_j^3] = \sum_j m_j (u^3_j + \bar u^3_j)
\end{equation}
Therefore, letting $x_j := 2\Re{u_j}$, we just need to consider the real part of the objective, and solve the following optimization in $\rr$:
\begin{equation}
    \max_{\{m_j,-2\le x_j\le 2, x_0 = 2\}} \sum_j m_j x^3_j - \epsilon\sum_j m^2_j \quad \mathrm{s.t.\ } \sum_j m_j x_j = 0
\end{equation}
whose solutions give a sufficient condition. Using Lagrangian multiplier, we have:
\begin{equation}
\frac{\partial L}{\partial x_j} = m_j (3x_j^2 - \lambda) = 0,\quad\quad \frac{\partial L}{\partial m_j} = x_j^3 - 2\epsilon m_j - \lambda x_j = 0
\end{equation}
which leads to $\lambda = 3$, $m_j = 1 / \epsilon$ and $x_1 = x_2 = -1$. This corresponds to the solution
\begin{equation}
    z_{pk\cdot} = [1, \omega_3, \bar\omega_3], \quad\quad \mathrm{where\ } p\in \{a,b,c\}
\end{equation}
Note that the original necessary condition is $\va \circ \bar \vb, \vb \circ \bar \vc, \vc \circ \bar \va \in \rr^3$. Considering the possible negativity, the solutions can be written as  
\begin{equation}
    z_{pk\cdot} = [1, \alpha_p \omega_3, \beta_p \bar\omega_3], \quad\quad \mathrm{for\ } p\in \{a,b,c\}
\end{equation}
where $\alpha_p = \pm 1$ and $\beta_p = \pm 1$ with the constraint that $\alpha_a\alpha_b\alpha_c = \beta_a\beta_b\beta_c = 1$.  
\end{proof}
\textbf{Remarks.} Note that this conclusion does not contradict with the constructed solution $\vz_{\syn,\alpha\beta}$ in Eqn.~\ref{eq:synab} in which $\alpha$ and $\beta$ are allowed to be any complex number with magnitude $1$. This is because $\vz_{\syn,\alpha\beta}$ does not satisfy all the constraints in $R_\c$ (but $\vz_{\syn,\alpha\beta} \ringmul \vz_{\nu=\i}$ will) unless $\alpha$ and $\beta$ are real and thus $\pm 1$. 

}

\section{Gradient Dynamics (Proof)}
\begin{figure}
    \centering
    \includegraphics[width=\linewidth]{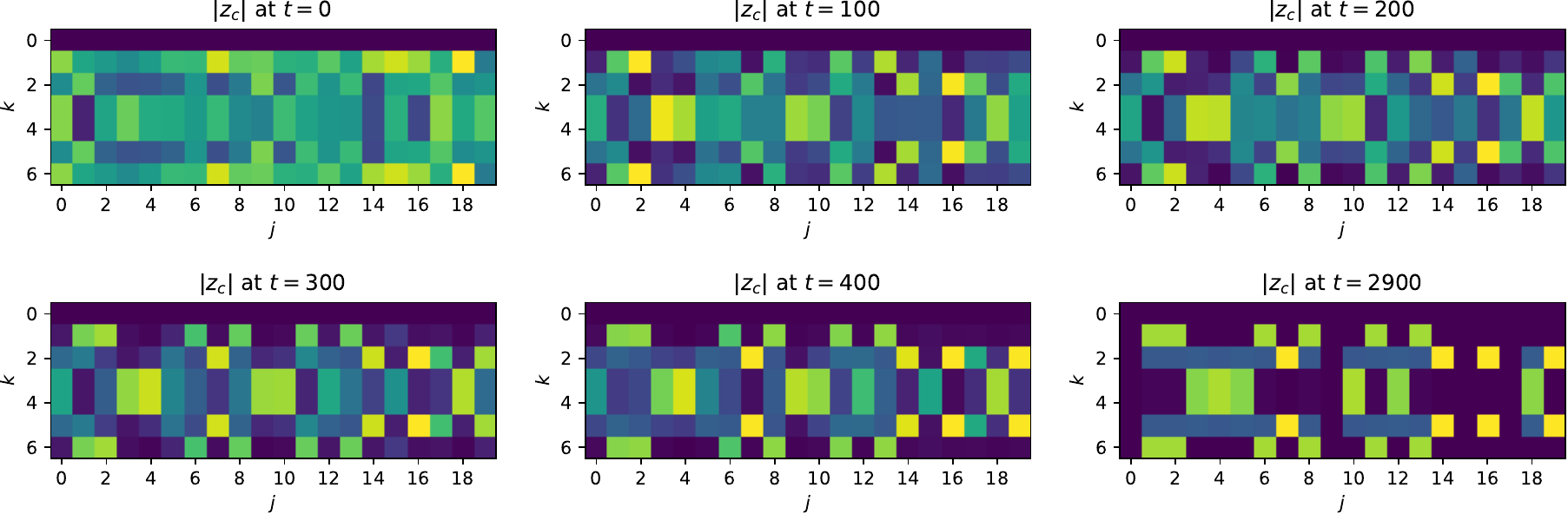}
    \vspace{-0.15in}
    \caption{\small The convergence path of $z_{c\cdot\cdot}$ when training modular addition using Adam optimizer (learning rate $0.05$, weight decay $0.005$). The final solution contains 2 order-6 ($\vz^\bk_{F6}$) and 1 order-4 ($\vz^\bk_{F4}$) solutions. Note that for $z_{c\cdot\cdot}$, unlike Fig.~\ref{fig:solution-structure}, each order-6 solution contains a constant bias term to cancel out the artifacts of order-4 solution (Corollary~\ref{co:foursixsol}). For each hidden node $j$, once a dominant frequency emerges, others fade away.}
    \label{fig:convergence-path}
\end{figure}

Let $\vr = [r_{k_1k_2k}, r_{pk_1k_2k}] \in \cc^{4d^3}$ be a vector of all MPs, and $J := \frac{\partial \vr}{\partial \vz} \frac{\partial \vz}{\partial \cW}$ be the Jacobian matrix of the mapping $\vr = \vr(\vz(\cW))$ in which $\cW$ is the collection of original weights. Note that when we take derivatives with respect to $r$ and apply chain rules, we treat $r$ and its complex conjugate (e.g., $r_{kkk}$ and $r_{-k,-k,-k} = \bar r_{kkk}$) as independent variables. Since we run the gradient descent on $\cW$, will such (indirect) optimization leads to a descent of $\vr$ towards the desired targets (Lemma~\ref{co:globalminimizer})? The following lemma confirms that:
\begin{restatable}[Dynamics of MPs]{lemma}{dynamicsmps}
\label{theorem:dyn-of-mp}
The dynamics of MPs satisfies $\dot \vr = -JJ^* \overline{\nabla_\vr \ell}$, which has positive inner product with the negative gradient direction $-\overline{\nabla_\vr \ell}$. 
\end{restatable}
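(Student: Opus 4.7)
The plan is to apply the chain rule, handling carefully the fact that $\cW$ is real while $\vr$ is complex and equipped with a built-in conjugate symmetry: by the Hermitian constraints, every sum potential $r$ appearing in $\vr$ sits alongside its conjugate $\bar r$ (e.g.\ $r_{k_1k_2k}$ and $r_{-k_1,-k_2,-k}=\bar r_{k_1k_2k}$, and similarly for $r_{pk_1k_2k}$). First I would write gradient flow on the real parameters, $\dot\cW = -\nabla_\cW \ell$, and then differentiate $\vr(\vz(\cW))$ along the trajectory to get $\dot\vr = J\dot\cW$. The only nontrivial step is to re-express $\nabla_\cW \ell$ in terms of $J$ and $\nabla_\vr \ell$.

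Treating $r$ and $\bar r$ as independent coordinates (as the notation convention preceding the lemma prescribes), the Wirtinger chain rule gives $\nabla_\cW \ell = J^\top \nabla_\vr \ell$, and this quantity is automatically real because conjugate pairs of components contribute complex-conjugate summands that combine into a real sum. Substituting yields $\dot\vr = -J J^\top \nabla_\vr \ell$. To match the form stated in the lemma, let $P$ denote the permutation that swaps each entry of $\vr$ with the entry containing its conjugate; then $P=P^\top$, $P^2=I$, $PJ = \bar J$ (so $J^\top P = J^*$), and since $\ell$ is real one has $\partial\ell/\partial\bar r = \overline{\partial\ell/\partial r}$, whence $P\nabla_\vr\ell = \overline{\nabla_\vr\ell}$. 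Therefore
\begin{equation*}
J^\top \nabla_\vr \ell \;=\; J^\top P\cdot P\,\nabla_\vr\ell \;=\; J^* \overline{\nabla_\vr \ell},
\end{equation*}
and the claimed identity $\dot\vr = -JJ^*\overline{\nabla_\vr \ell}$ follows immediately.

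For the descent property, I would close with the direct computation
\begin{equation*}
\langle -\overline{\nabla_\vr \ell},\, \dot\vr\rangle \;=\; (\overline{\nabla_\vr \ell})^* JJ^* \overline{\nabla_\vr \ell} \;=\; \|J^* \overline{\nabla_\vr \ell}\|_2^2 \;\ge\; 0,
\end{equation*}
with strict positivity whenever $\overline{\nabla_\vr \ell}$ does not lie in $\ker(J^*)$.

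The main obstacle I expect is the bookkeeping of the Wirtinger chain rule together with the two conjugate-symmetry identities $PJ=\bar J$ and $P\nabla_\vr \ell = \overline{\nabla_\vr \ell}$; once those are pinned down the remainder is a two-line calculation, and the same argument extends without change to any loss built from sum potentials.
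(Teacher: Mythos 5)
Your proof is correct and follows essentially the same route as the paper: gradient flow on the real weights $\cW$, chain rule to push $\dot\cW$ through the Jacobian $J$, and then the norm identity for the descent property. The one cosmetic difference is that the paper reaches $\dot\cW = -J^*\overline{\nabla_\vr\ell}$ by the shortcut of conjugating the (already real) quantity $\nabla_\cW\ell = J^\top\nabla_\vr\ell$ and pulling the bar through, whereas you establish the same identity $J^\top\nabla_\vr\ell = J^*\overline{\nabla_\vr\ell}$ explicitly via the conjugation-swap permutation $P$ with $PJ=\bar J$ and $P\nabla_\vr\ell=\overline{\nabla_\vr\ell}$; your version makes the reality of $\nabla_\cW\ell$ a verified consequence rather than an implicit assumption, but the underlying mechanism and the final two displays are identical.
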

\begin{proof}
By gradient descent of $\cW$, we have $\dot \cW = - \overline{\nabla_\cW \ell}$. By chain rule, 
we have:
\begin{equation}
    \dot \cW = - \overline{\nabla_\cW \ell} = - \overline{J^\top \nabla_\vr \ell} = - J^* \overline{\nabla_\vr \ell} 
\end{equation}
Then the dynamics of $\vr = \vr(\vz(\cW))$, as driven by the dynamics of $\cW$, is given by 
\begin{equation}
    \dot \vr = J \dot \cW =  -J J^* \overline{\nabla_\vr \ell}
\end{equation}
To show positive inner product, we have:
\begin{equation}
-\overline{\nabla_\vr \ell}^* \dot \vr = \overline{\nabla_\vr \ell}^* JJ^* \overline{\nabla_\vr \ell} = \|J^* \overline{\nabla_\vr \ell}\|_2^2 \ge 0 
\end{equation}
\end{proof}

\loworder*
\begin{proof}
Let $\ord(\vz) = q$ and $\ord(\vz') = q'$. Then $q' | q$. Since both $\vz$ and $\vz'$ are global optimal. Since $r_{kkk}$ is ring homomorphism, we know that $r_{kkk}(\vz) = r_{kkk}(\vz')r_{kkk}(\vy) = 1/2d = r_{kkk}(\vz')$ and thus $r_{kkk}(\vy) = 1$ for all $k\neq 0$.

Let the augmented identity $\ve \in \cZ_q$ be $e_{pmj} = \mathbb{I}(j=0)$. Then $r_{kkk}(\ve) = 1$ for all $k \neq 0$.

We want to construct a path in $\cZ_q$, the space of order-$q$ solutions as follows:
\begin{equation}
    \tilde \vz(t) = \tilde \vy(t) * \vz',\quad\quad 0\le t\le 1
\end{equation}
in which $\tilde \vy(0) = \ve$, $\tilde \vy(1) = \vy$, and $r_{kkk}(\tilde \vy(t)) = 1$ for any $t$. To see why this is possible, pick a continuous family of trajectories $\hat \vy(t;\lambda)$ with $\lambda \in [0, 1]$ so that they satisfies  
\begin{eqnarray}
    \hat \vy(0;\lambda) = \ve,\quad\quad \hat \vy(1;\lambda) = \vy,\quad\quad r_{kkk}(\hat\vy(t;0)) \le 1,\quad\quad r_{kkk}(\hat\vy(t;1)) \ge 1
\end{eqnarray}
which can always be achieved by scaling some trajectory with a factor that depends on $\lambda$. Then by intermediate theorem, there exists $\lambda(t)$ so that $r_{kkk}(\hat\vy(t;\lambda(t))) = 1$ for some $k$. Note that for different frequency $k$ and $k'$, $r_{kkk}$ and $r_{k'k'k'}$ involves disjoint components of $\vz$ so we could find such a path for all $k\neq 0$.

Therefore, for any monomial potential $r$ included in MSE loss (Eqn.~\ref{eq:obj}), we have
\begin{equation}
    r(\tilde \vz(t)) = r(\tilde \vy(t))r(\vz') = \left\{
    \begin{array}{cc}
        \mathrm{finite} \cdot 0 = 0 & r \neq r_{kkk} \\
        1\cdot 1/2d = 1/2d & r = r_{kkk}
    \end{array}
    \right.
\end{equation}
and thus the entire trajectory $\tilde \vz(t) = \tilde\vy(t) * \vz' \in \cZ_q$ connecting $\vz$ and $\ve * \vz'$, which is $\vz'$ in the space of $\cZ_q$, is also globally optimal. 

To see why weight decay regularization leads to lower-order solution, we could simply compare the $\ell_2$ norm of $\vz = \vy * \vz'$ and $\ve * \vz'$. At each frequency $k$, this reduces to the following optimization problem: 
\begin{equation}
    \min \sum_j |a_j|^2 + |b_j|^2 + |c_j|^2,\quad\quad \mathrm{s.t.} \sum_j a_j b_j c_j = 1
\end{equation}
where $a_j := y_{akj}$, $b_j := y_{bkj}$ and $c_j := y_{ckj}$. Since we know that arithmetic mean is no less than geometric mean:
\begin{equation}
    \frac{|a_j|^2 + |b_j|^2 + |c_j|^2}{3} \ge \sqrt[3]{|a_jb_jc_j|^2} 
\end{equation}
We have:
\begin{equation}
    \sum_j |a_j|^2 + |b_j|^2 + |c_j|^2 \ge 3 \sum_j |a_jb_jc_j|^{2/3} \ge 3 
\end{equation}
The last inequality holds because (1) if any $|a_jb_jc_j| \ge 1$, then it holds, (2) if all $|a_jb_jc_j| < 1$, then since $a^x$ is a decreasing function for $a < 1$, $\sum_j |a_jb_jc_j|^{2/3} \ge \sum_j |a_jb_jc_j| \ge |\sum_j a_jb_jc_j| = 1$.

The minimizer is reached when $|a_j| = |b_j| = |c_j|$. Note that if $a_jb_jc_j$ has any complex phase or negative, then in order to satisfy $\sum_j a_jb_jc_j=1$, objective function needs to be larger. So without loss of generality, we could study $a_j=b_j=c_j = x_j\ge 0$ and the optimization problem becomes 
\begin{equation}
    \min \sum_j x_j^2,\quad\quad\mathrm{s.t.} \sum_j x_j^3 = 1, \quad x_j \ge 0
\end{equation}
which has a minimizer at the corners $(1, 0, \ldots)$. This corresponds to $a_j=b_j=c_j = \mathbb{I}(j=0)$, which is the augmented identity $\ve\in\cZ_q$. 
\end{proof}

\infinitelimit*
\begin{proof}
Let $\tilde \ell := \sum_{k} \nabla \tilde \ell_k$. Let's compute the dynamics of MPs following Theorem~\ref{theorem:dyn-of-mp}: $\dot \vr = -JJ^*\overline{\nabla_\vr \tilde\ell}$. 

First it is clear that 
\begin{equation}
    \frac{\partial \tilde \ell}{\partial r_{k_1k_2k}}  = \sum_{k} \frac{\partial \tilde \ell_k}{\partial r_{k_1k_2k}} = -2 \mathbb{I}(k_1=k_2=k) + 2\overline{ r_{k_1k_2k}} 
\end{equation}
So the $(k_1,k_2,k)$ component of $\overline{\nabla_\vr \tilde\ell}$ only contains $r_{k_1k_2k}$.

Then we compute $H := JJ^*$ and show that it is asymptotically diagonal. To see this, each component of $H$, i.e., $h_{k_1k_2k_3,k'_1k'_2k'_3}$ can be computed as the following:
\begin{align}
& h_{k_1k_2k_3,k'_1k'_2k'_3} = \sum_{pmj} \frac{\partial r_{k_1k_2k_3}}{\partial z_{pmj}} \overline{\frac{\partial r_{k'_1k'_2k'_3}}{\partial z_{pmj}}} \\
&= \mathbb{I}(k_1=k_1')\sum_j b_{k_2j}\bar b_{k_2'j} c_{k_3j}\bar c_{k_3'j} \\
&+ \mathbb{I}(k_2=k_2')\sum_j a_{k_1j}\bar a_{k_1'j} c_{k_3j}\bar c_{k_3'j} \\
&+ \mathbb{I}(k_3=k_3')\sum_j a_{k_1j}\bar a_{k_1'j} b_{k_2j}\bar b_{k_2'j}
\end{align}
where $a_{kj} := z_{akj}$, $b_{kj} := z_{bkj}$ and $c_{kj} := z_{ckj}$. Then for component $(k_1k_2k_3,k_1',k_2',k_3')$, if any $k_p \neq k'_p$ for some $p\in \{a,b,c\}$, then the corresponding $z_{pk_pj}\bar z_{pk'_pj}$ has random phase for hidden node $j$, and $h_{k_1k_2k_3,k'_1k'_2k'_3} \rightarrow 0$ when $q\rightarrow +\infty$. 

Combining the two, we know that the dynamics of MPs is decoupled, that is, each $r_{k_1k_2k}$ evolves independently over time.  
\end{proof}

\textbf{Ripple effects}. While Theorem~\ref{lemma:infinitem} only holds at initialization, the resulting decoupled MP dynamics, e.g., $\dd r_{kkk} / \dd t = 1 - r_{kkk}$ that leads to $r_{kkk}(t) = 1 - e^{-t}$, already captures the rough shape of the curve (Fig.~\ref{fig:dynamics-mps} top right). To capture its fine structures (e.g., ripples before stabilization), we can also model the dynamics of the diagonal element in $JJ^*$. Consider a symmetric 1D case on a fixed frequency $k$, where all diagonal $r_{kkk} = r_0 - r$ (where $r_0 = 1/2d$) and all off-diagonal $r_{k_1k_2k} = r$, then  
\begin{equation}
    \dot r = -\dot r_{kkk} = \kappa (r_{kkk} - r_0) = -\kappa r, \quad \dot \kappa = \alpha (r_0 - r_{kkk}) - (1-\alpha) r_{k_1k_2k} - c_0 = (2\alpha - 1) r - c_0
\end{equation}
where $\kappa > 0$ is the diagonal element of $JJ^*$ and $\alpha$ is a coefficient that characterizes the relative strength of two negative gradient $-\overline{\nabla_{r_{kkk}}\ell} = r_0 - r_{kkk}$ and $-\overline{\nabla_{r_{k_1k_2k}}\ell} = - r_{k_1k_2k}$, and $c_0$ is the gradient terms caused by asymmetry and/or other frequencies. This yields a second-order ODE that has complex roots in the characteristic function when $c_0 > 0$.

\section{Extending \ours{} to Group Action Prediction}
\label{sec:extension-to-group-action-pred}
While in this work we mainly focus on Abelian group, \ours{} can be extended to more general \emph{group action prediction}: given a group element $g \in G$ and the current state $x \in \cX$, the goal is to predict $gx \in X$, i.e., the next state after action $g$. Such tasks include modular addition/multiplication in which the group acts on itself (i.e., $\cX = G$), and also includes the transition function in reinforcement learning~\citep{sutton2018reinforcement} and world modeling~\citep{garrido2024learning}, in which an action changes the current state to a new one.   

\textbf{Setup}. Consider a state space $\cX$ and \emph{group action} $G \times \cX \mapsto \cX$ where $g\in G$ is a group element acting on a state $x \in \cX$ to get an update state $gx \in \cX$. It satisfies two axioms (1) the group identity maps everything to itself: $ex = x$, and (2) the group action is compatible with group multiplication: $g(hx) = (gh)x$ for any $g, h\in G$ and $x \in \cX$. 

Equipped with the group action, the state space now can be decoupled into a disjoint of \emph{transitive components}.

\begin{definition}[Transitive group action]
A group action is transitive, if for any $x_1,x_2\in \cX$, there exists $g\in G$ so that $gx_1 = x_2$.
\end{definition}

\begin{figure}
    \centering
    \includegraphics[width=0.8\textwidth]{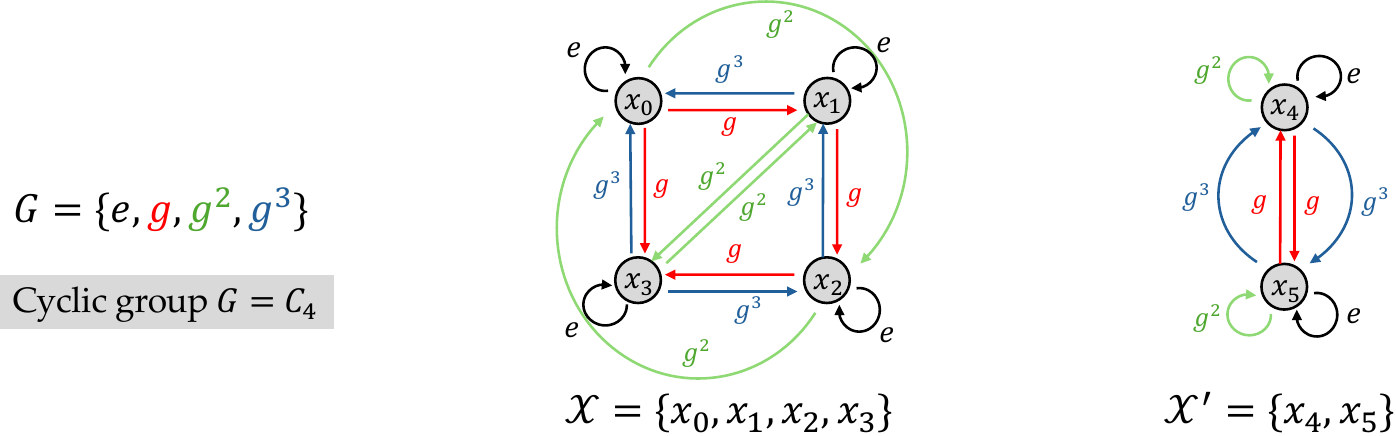}
    \caption{\small An example case of group action on state set $\cX$, $\cX$ can be partitioned into several disjointed components, each is a transitive graph w.r.t the group actions in $G$.}
    \label{fig:graph-decomposition}
\end{figure}

Since the group action is compatible with multiplication, $\cX$ under $G$ will be partitioned into disjoint components $\cX = \bigcup_l \cX_l$ and we can analyze each component separately (Fig.~\ref{fig:graph-decomposition}). 

\textbf{Transitive Group Action}. For each transitive component $\cX$ (dropping $l$ for brevity), under certain conditions, we could define a \emph{state multiplication} operation (a formal definition in Def.~\ref{def:y-mul} in Appendix) so that for any group action $gx\in\cX$, there is an associated state $x'\in\cX$ so that $x' \cdot x = gx$. Furthermore, under the multiplication, $\cX$ itself becomes a group:
\begin{restatable}[$\cX \cong G / G_{x_0}$]{theorem}{quotient}
\label{thm:y-struct}
If the \emph{group stabilizer} $G_{x_0} := \{g | g x_0 = x_0\}$ is a normal subgroup of $G$, then $\cX$ is \emph{isomorphic} to the quotient group $G / G_{x_0}$ and thus forms a group. 
\end{restatable}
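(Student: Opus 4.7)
The plan is to realize $\cX$ as the orbit of the base point $x_0$ and then transport the group structure of $G/G_{x_0}$ across a canonical bijection. Concretely, define the orbit map
\begin{equation}
    \phi : G \to \cX, \qquad \phi(g) := g x_0 .
\end{equation}
By transitivity, $\phi$ is surjective. Using the two axioms of a group action, I would check that $\phi(g) = \phi(h)$ if and only if $g^{-1}h \in G_{x_0}$, so the fibers of $\phi$ are exactly the left cosets of $G_{x_0}$. This yields a well-defined \emph{set} bijection
\begin{equation}
    \bar\phi : G / G_{x_0} \to \cX, \qquad \bar\phi(g G_{x_0}) := g x_0 .
\end{equation}

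Next I would use the normality hypothesis to endow $\cX$ with a group structure. Since $G_{x_0}\triangleleft G$, the quotient $G/G_{x_0}$ is a group under $(gG_{x_0})(hG_{x_0}) := (gh)G_{x_0}$, and I transfer this multiplication to $\cX$ along $\bar\phi$: for $x = g x_0$ and $y = h x_0$, set
\begin{equation}
    x \cdot y := (gh) x_0 .
\end{equation}
The crucial step is well-definedness: if $g x_0 = g' x_0$ and $h x_0 = h' x_0$, then $g^{-1}g', h^{-1}h' \in G_{x_0}$, and I must show $(gh)^{-1}(g'h') = h^{-1}(g^{-1}g')h \cdot (h^{-1}h') \in G_{x_0}$. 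The second factor is in $G_{x_0}$ by assumption, and the first factor is in $G_{x_0}$ precisely because $G_{x_0}$ is normal (so conjugation by $h$ preserves it). This is the one place where the normality assumption is indispensable; without it the proposed multiplication depends on the choice of coset representatives and $\cX$ would be only a homogeneous $G$-set rather than a group.

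The remaining verifications are routine. Associativity, the identity $x_0 = \phi(e)$, and inverses $(gx_0)^{-1} = g^{-1}x_0$ descend directly from $G/G_{x_0}$, and $\bar\phi$ is a group homomorphism by construction; combined with its bijectivity this gives the isomorphism $\cX \cong G/G_{x_0}$. Finally, compatibility with the original group action follows from $g(hx_0) = (gh)x_0 = (gx_0)\cdot(hx_0)$, which recovers the identity $x'\cdot x = gx$ with $x' = g x_0$ promised in Def.~\ref{def:y-mul}. The main obstacle is thus entirely the well-definedness step above; every other part of the argument is essentially bookkeeping around the first isomorphism theorem applied to the orbit map.
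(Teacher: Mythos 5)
Your proof is correct, and it follows exactly the route the paper sets up via its surrounding lemmas (the coset bijection of $\iota_0$, the normality-based well-definedness of the transferred multiplication, and compatibility with the action); in fact the paper states this theorem without writing out a proof, so your argument cleanly supplies the missing details. The one thing to note is that your well-definedness computation for $x\cdot y$ is really re-deriving the well-definedness of multiplication in $G/G_{x_0}$: once you know $\bar\phi$ is a set bijection, you can transport the group structure along $\bar\phi^{-1}$ and get an isomorphism for free, with normality used only once, to make $G/G_{x_0}$ a group in the first place.
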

Moreover, we can prove that for any group element $g \in G$, there exists $x = \iota_0(g) \in \cX$ so that for any state $x'$, the group action $gx'$ is the same as the state multiplication $x'\cdot x$. Therefore, for group action prediction tasks, we have (note the difference compared to Eqn.~\ref{eq:w-freq-space}):
\begin{equation}
    \vw_j = U_G \left(P_0 \vw^{||}_{j,G} + \vw^\perp_{j,G} \right) + U_{\cX}\vw_{j,\cX}
\end{equation}
where $\vw^{||}_{j,G} \in \rr^{|\cX|}$ is the ``in-graph'' component of $G$, $\vw^{\perp}_{j,G} \in \rr^{|G|}$ is the ``out-of-graph'' component of $G$, and $P_0\in \rr^{|G|\times |\cX|}$ ``lifts'' from $\cX$ to $G$ using $\iota_0$, i.e., $(P_0)_{gx} = 1$ for $g\in\iota^{-1}_0(x)$, and $\vw^{\perp}_{j,G} \perp P_0 \vw^{||}_{j,G}$. Since any $g$ just behaves like $\iota_0(g)$ when acting on $\cX$, our framework can be applied to characterize the learning of $\vw^{||}_{j,G}$. Intuitively, we only learn representation of $G$'s element ``module'' its kernel $G_{x_0}$, since element in the kernel is indistinguishable from each other.

On the other hand, the behavior of $\vw^\perp_{j,G}$ will be influenced by $g$ acting on other graphs, and the final learned representation of a group element $g$ is the direct sum of them. 

\section{Detailed explanation of Sec.~\ref{sec:extension-to-group-action-pred}}
\textbf{Matrix Representation}. Each group element $g$ can be represented by a matrix $R_g$, i.e., its \emph{matrix representation}, so that it respects the group multiplication (i.e., \emph{homomorphism}): $R_{gh} = R_gR_h$ for any group elements $g, h\in G$. 

The dimension of such a representation may differ widely. Some representation can be 1-dimensional (e.g., for Abelian group), while others can be infinitely dimensional. The \emph{permutation representation} $R_g \in \rr^{d\times d}$ maps a one-hot representation $\ve_x \in \rr^{d}$ of an object $\cX$ into its image $\ve_{gx} \in \rr^{d}$, also a one-hot representation. Intuitively, $(R_g)_{jk} = 1$ means that it maps the $k$-th element into the $j$-th element. 
\begin{restatable}[Structure of $R_g$]{lemma}{structrg}
For any $g\in G$, $R_g$ is a permutation matrix. 
\end{restatable}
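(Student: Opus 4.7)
The plan is to unpack the definition of the permutation representation and show directly that $R_g$ has exactly one $1$ in each row and column, with all other entries zero. By construction, $R_g$ is defined through its action on one-hot basis vectors: $R_g \ve_x = \ve_{gx}$ for each $x \in \cX$. Reading this off column-by-column shows that the $x$-th column of $R_g$ equals the one-hot vector $\ve_{gx}$, hence each column already has exactly one entry equal to $1$ and all others equal to $0$.

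Next I would show the same property for rows, which is equivalent to showing that the map $x \mapsto gx$ is a bijection on $\cX$. This follows from the group action axioms: using $g^{-1} \in G$, we have $g^{-1}(gx) = (g^{-1}g)x = ex = x$ and similarly $g(g^{-1}x) = x$ for all $x \in \cX$, so the map has a two-sided inverse $x \mapsto g^{-1}x$ and is therefore a bijection. Consequently, for each $y \in \cX$ there exists a unique $x \in \cX$ (namely $x = g^{-1}y$) with $gx = y$, so the $y$-th row of $R_g$ contains exactly one $1$ located in column $g^{-1}y$, with all other entries zero.

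Combining both observations, $R_g$ has exactly one $1$ per row and per column and all other entries zero, which is precisely the definition of a permutation matrix. I do not anticipate any genuine obstacle here; the statement follows immediately from (i) the one-hot structure built into the definition of $R_g$ and (ii) the invertibility of the group action supplied by the existence of $g^{-1}$. The only care needed is to notice that both row and column conditions must be checked, since a matrix whose columns are all one-hot is not automatically a permutation matrix without the bijectivity step.
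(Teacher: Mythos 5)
Your proof is correct and follows essentially the same route as the paper: observe that each column of $R_g$ is a one-hot vector by construction, then use the group inverse $g^{-1}$ and the compatibility axiom to show $x \mapsto gx$ is a bijection on $\cX$, so each row is also one-hot. The only cosmetic difference is that you exhibit $x \mapsto g^{-1}x$ as an explicit two-sided inverse, whereas the paper shows injectivity and implicitly invokes finiteness of $\cX$ to conclude bijectivity; your version is marginally more self-contained but the argument is the same.
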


\begin{restatable}[Summation of $R_g$]{lemma}{summationrg}
\label{lemma:summation_rg}
If the group action is transitive, then $\sum_{g\in G} R_g = \frac{|G|}{d} \vone\vone^\top$. 
\end{restatable}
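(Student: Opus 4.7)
The plan is to read off the claim one matrix entry at a time and reduce everything to the orbit--stabilizer theorem. Since $R_g$ is the permutation matrix with $(R_g)_{jk} = \mathbb{I}(g x_k = x_j)$, the $(j,k)$ entry of $\sum_{g\in G} R_g$ is simply the number of group elements that send $x_k$ to $x_j$:
\begin{equation}
\Big(\sum_{g\in G} R_g\Big)_{jk} = \#\{g\in G : g x_k = x_j\}.
\end{equation}
So the whole lemma reduces to showing that this count is the constant $|G|/d$, independent of $(j,k)$.

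First I would use transitivity to pick some $g_0\in G$ with $g_0 x_k = x_j$; this is where the hypothesis enters (and it also guarantees the set is non-empty, so the constant is not trivially zero). Next I would show that the fiber is a coset of the stabilizer $G_{x_k}$: if $g x_k = x_j = g_0 x_k$, then $g_0^{-1} g\in G_{x_k}$, so $\{g : g x_k = x_j\} = g_0 G_{x_k}$, a set of size $|G_{x_k}|$. Finally, since the action is transitive the orbit of $x_k$ is all of $\cX$, so orbit--stabilizer gives $|G| = |G\cdot x_k|\cdot |G_{x_k}| = d \cdot |G_{x_k}|$, hence $|G_{x_k}| = |G|/d$. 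Plugging this back in, every entry equals $|G|/d$, which is exactly the matrix $\frac{|G|}{d}\vone\vone^\top$.

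There is no real obstacle — the argument is a one-line application of orbit--stabilizer once the entry is interpreted as a fiber count. The only subtle point worth flagging in the write-up is that the stabilizer size is independent of the chosen basepoint $x_k$ (all stabilizers in a transitive action are conjugate, hence equinumerous), which is implicit in the orbit--stabilizer computation and is what makes the sum a scalar multiple of the all-ones matrix rather than merely having constant row sums.
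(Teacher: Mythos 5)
Your proof is correct and rests on the same core argument as the paper's. The paper proves this lemma by citing the more general direct-sum decomposition (Lemma~\ref{lemma:direct-sum-group-action}) and specializing to the case of a single orbit; the proof of that general lemma is precisely your computation — interpret the $(j,k)$ entry of $\sum_g R_g$ as the count $\#\{g : gx_k = x_j\}$, observe that this set is a coset of the stabilizer, and use the orbit--stabilizer relation (which the paper re-derives inline via the conjugacy of stabilizers across an orbit rather than invoking it by name) to conclude each entry equals $|G|/d$. Your direct write-up is slightly tidier for the transitive case, but it is not a different route.
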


\subsection{Transitive Case}
To construct the multiplication operation on $\cX$, we first pick reference point $x_0 \in \cX$, and establish a mapping $\iota_0: G \mapsto \cX$: $\iota_0(g) = gx_0$. Note that $\iota_0$ is not necessarily a bijection; in fact we have:
\begin{restatable}[Co-set Mapping $\iota_0$]{lemma}{cosetmapping}
There is a bijection between $\{\iota_0^{-1}(x)\}_{x\in \cX}$ and co-sets $[G : G_{x_0}]$ of \emph{group stabilizer} $G_{x_0} := \{g\in G | g x_0 = x_0\}$, which is a subgroup of $G$ fixing $x_0$.
\end{restatable}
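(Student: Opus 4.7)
The plan is to follow the standard orbit--stabilizer argument, adapted to our notation. First I would verify the parenthetical claim that $G_{x_0}$ is a subgroup of $G$: the group identity $e$ lies in $G_{x_0}$ because $e x_0 = x_0$; closure holds because if $g x_0 = x_0$ and $h x_0 = x_0$, then the compatibility axiom $g(h x_0) = (gh) x_0$ gives $(gh) x_0 = x_0$; and closure under inverses follows by applying $g^{-1}$ to both sides of $g x_0 = x_0$ and using $e x_0 = x_0$. This step is routine.

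Next I would describe each fiber $\iota_0^{-1}(x)$ explicitly. By definition $\iota_0^{-1}(x) = \{g \in G : g x_0 = x\}$. Because the action is transitive, the fiber is non-empty, so pick some $g_x \in \iota_0^{-1}(x)$. For any other $h \in \iota_0^{-1}(x)$ we have $h x_0 = g_x x_0$, which by the group action axioms gives $g_x^{-1} h \in G_{x_0}$, hence $h \in g_x G_{x_0}$. Conversely every element of $g_x G_{x_0}$ sends $x_0$ to $x$. Thus $\iota_0^{-1}(x) = g_x G_{x_0}$ is exactly a left coset of $G_{x_0}$ in $G$.

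The bijection is now immediate. Define $\Phi : \cX \to [G : G_{x_0}]$ by $\Phi(x) := \iota_0^{-1}(x)$. Well-definedness is the content of the previous paragraph (the coset does not depend on the choice of $g_x$). Surjectivity: any left coset $g G_{x_0}$ is the image $\Phi(g x_0)$. Injectivity: if $\Phi(x_1) = \Phi(x_2)$, pick any $g$ in this common coset, and then $x_1 = g x_0 = x_2$. This gives the stated bijection.

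I do not expect any real obstacle. The only subtlety worth flagging in the write-up is that the argument uses both group action axioms (identity and compatibility), so the quotient $[G : G_{x_0}]$ should be read as the set of left cosets; if instead right cosets are intended, one either swaps the roles using $g \mapsto g^{-1}$, or (when $G_{x_0}$ is normal, as assumed in Theorem~\ref{thm:y-struct}) left and right cosets coincide, so no modification is needed downstream.
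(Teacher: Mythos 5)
Your proof is correct and follows essentially the same route as the paper: both identify each fiber $\iota_0^{-1}(x)$ with a left coset of $G_{x_0}$ via the chain $\iota_0(g)=\iota_0(h)\iff g^{-1}h\in G_{x_0}\iff h\in gG_{x_0}$ and then read off the bijection. You additionally spell out that $G_{x_0}$ is a subgroup and the injectivity/surjectivity checks, which the paper leaves implicit, but the argument is the same.
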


\begin{restatable}[Uniqueness of Multiplication Mapping]{lemma}{uniqymul}
If $G_{x_0}$ is a normal subgroup, then for all $g_1 \in \iota_0^{-1}(x_1)$ and $g_2 \in \iota_0^{-1}(x_2)$, all $g_1g_2 G_{x_0}$ correspond to the same coset.
\end{restatable}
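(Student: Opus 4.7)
The plan is to unpack what the cosets $\iota_0^{-1}(x_1)$ and $\iota_0^{-1}(x_2)$ look like using the preceding Co-set Mapping Lemma, and then use normality of $G_{x_0}$ to absorb a ``stray'' stabilizer element across a product.

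First I would fix one representative in each fiber: pick $g_1 \in \iota_0^{-1}(x_1)$ and $g_2 \in \iota_0^{-1}(x_2)$. By the Co-set Mapping Lemma, any other element of the same fibers can be written as $g_1' = g_1 h_1$ and $g_2' = g_2 h_2$ with $h_1, h_2 \in G_{x_0}$. The goal is to show $g_1' g_2' G_{x_0} = g_1 g_2 G_{x_0}$, since the coset of $G_{x_0}$ containing $g_1 g_2$ is exactly what the lemma claims to be well-defined.

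Next I would compute $g_1' g_2' = g_1 h_1 g_2 h_2$ and try to commute $h_1$ past $g_2$. This is precisely where normality is used: because $G_{x_0} \trianglelefteq G$, the conjugate $g_2^{-1} h_1 g_2$ lies in $G_{x_0}$, so there exists $h_1' \in G_{x_0}$ with $h_1 g_2 = g_2 h_1'$. Substituting gives $g_1' g_2' = g_1 g_2 (h_1' h_2)$, and since $h_1' h_2 \in G_{x_0}$ this shows $g_1' g_2' \in g_1 g_2 G_{x_0}$. Equality of cosets follows.

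The main obstacle is essentially cosmetic: making sure the direction of multiplication in the definition of $\iota_0$ (left action $g \mapsto g x_0$) matches the side on which cosets of $G_{x_0}$ are taken, so that the Co-set Mapping Lemma genuinely identifies fibers with \emph{left} cosets $g G_{x_0}$. Once that convention is fixed, the argument is a one-line application of normality, and the conclusion immediately implies that a well-defined multiplication $x_1 \cdot x_2 := \iota_0(g_1 g_2)$ on $\cX$ exists, setting up the isomorphism $\cX \cong G/G_{x_0}$ used in Theorem~\ref{thm:y-struct}.
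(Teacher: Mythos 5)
The paper does not actually supply a proof for this lemma — it is stated in the appendix without an accompanying argument — so there is no in-paper proof to compare against. Your proof is correct and is exactly the standard well-definedness argument for coset multiplication: using the paper's Co-set Mapping Lemma to identify the fibers $\iota_0^{-1}(x_i)$ with left cosets $g_i G_{x_0}$, then invoking normality to rewrite $h_1 g_2 = g_2 (g_2^{-1} h_1 g_2)$ with $g_2^{-1} h_1 g_2 \in G_{x_0}$, so that $g_1' g_2' \in g_1 g_2 G_{x_0}$ and the two left cosets coincide. Your side remark about matching the left-action convention $g \mapsto g x_0$ with \emph{left} cosets $g G_{x_0}$ is also the right thing to check, and the paper's proof of the Co-set Mapping Lemma confirms that convention. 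In short, your argument fills a genuine gap in the paper cleanly and with the natural approach; it also sets up Theorem~\ref{thm:y-struct} ($\cX \cong G/G_{x_0}$) as you note.
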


\begin{definition}[The multiplication operator on $\cX$]
\label{def:y-mul}
When $G_{x_0}$ is a normal subgroup, we define \emph{multiplication} on $\cX$: $\cX\times \cX \mapsto \cX$ to be $x_1 x_2 := \iota_0(g_1 g_2 G_{x_0})$ for $x_1 = g_1 x_0$ and $x_2 = g_2 x_0$. Under this definition, $x_0$ is the identity element. 
\end{definition}

\begin{restatable}{lemma}{relyg}
\label{lemma:relyg}
If $g\in \iota_0^{-1}(x)$, then for any $x' \in \cX$, $gx'=xx'$.
\end{restatable}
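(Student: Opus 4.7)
The plan is to unwind both sides using only the definition of the multiplication on $\cX$ (Def.~\ref{def:y-mul}), the group action axiom $g(hx_0)=(gh)x_0$, and transitivity of the action.

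First I would fix an arbitrary $x'\in \cX$ and, using transitivity, pick some $g'\in G$ with $g' x_0 = x'$, i.e. $g'\in \iota_0^{-1}(x')$. I am also given $g\in \iota_0^{-1}(x)$, so by Def.~\ref{def:y-mul} we have
\begin{equation}
    x\cdot x' \;=\; \iota_0\bigl(g g' G_{x_0}\bigr).
\end{equation}
By Lemma~\ref{lemma:uniqymul} (which needed $G_{x_0}$ normal) this value does not depend on the choice of coset representative, so we may evaluate $\iota_0$ on the representative $gg'$ itself, giving $x\cdot x' = (gg')x_0$.

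Next I would apply the group action axiom $g(hx_0) = (gh)x_0$ in the reverse direction to rewrite $(gg')x_0 = g(g'x_0)$, and then use $g'x_0 = x'$ to conclude $(gg')x_0 = g x'$. Chaining the equalities yields $xx' = gx'$, as desired.

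I do not expect any genuine obstacle here; the statement is essentially a bookkeeping consequence of the well-definedness of the multiplication on $\cX$ (already established in Lemma~\ref{lemma:uniqymul}) together with the compatibility axiom of the group action. The only subtle point is to explicitly invoke Lemma~\ref{lemma:uniqymul} when passing from the coset $gg'G_{x_0}$ to a concrete representative, so that the identity $\iota_0(gg'G_{x_0})=(gg')x_0$ is justified rather than assumed.
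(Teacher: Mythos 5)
Your proof is correct and follows essentially the same route as the paper's: pick $g'$ (the paper calls it $h$) with $g'x_0 = x'$, compute $gx' = (gg')x_0$ via the action axiom, and note that $xx' := \iota_0(gg'G_{x_0}) = (gg')x_0$ by Def.~\ref{def:y-mul}. The only difference is that you explicitly flag the well-definedness step via Lemma~\ref{lemma:uniqymul}, which the paper leaves implicit.
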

This means that in terms of group action, the group element $g$ is indistinguishable to $x$ on $\cX$. 

\subsection{General group action}
In this case, $R_g$ can be decomposed into a direct sum of smaller matrices, and all our analysis applies to each of these small matrices. 

In the main text, to simplify the notation, we assume that the group action is transitive, i.e., for any $y, y'\in Y$, there exists $g \in G$ so that $gy = y'$. In the following we will show that for general group actions, the conclusion still follows. 

\emph{Group orbit}. For any $x \in \cX$, Let $G\cdot y := \{gy | g\in G\} \subseteq Y$ be its \emph{orbit}. 
\begin{lemma}
 For $y,y'\in G$, either $G\cdot y = G\cdot y'$ (two orbits collapse) or $G\cdot y \cap G\cdot y' \neq \emptyset$ (two orbits are disjoint). Therefore, orbits form a partition of $\cX$. 
\end{lemma}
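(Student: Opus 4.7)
The plan is to prove the dichotomy by the standard orbit argument and then deduce the partition claim. Note that the statement contains a small typo: the second alternative should read $G\cdot y \cap G\cdot y' = \emptyset$ (disjoint orbits have empty intersection); I will prove it in that corrected form.

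First I would establish the dichotomy by a direct argument rather than by invoking an equivalence relation. Assume $G\cdot y \cap G\cdot y' \neq \emptyset$ and pick some $z$ in the intersection, so that $z = g_1 y = g_2 y'$ for some $g_1, g_2 \in G$. Using the group axioms (existence of inverses and compatibility of the action, $g(hx) = (gh)x$ from the setup in Sec.~\ref{sec:extension-to-group-action-pred}), this gives $y = (g_1^{-1} g_2) y'$ and symmetrically $y' = (g_2^{-1} g_1) y$. Then for any $g\in G$, the element $gy = (g g_1^{-1} g_2) y' \in G\cdot y'$, so $G\cdot y \subseteq G\cdot y'$; the reverse inclusion follows by interchanging the roles of $y$ and $y'$. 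Hence $G\cdot y = G\cdot y'$.

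Next I would deduce the partition statement. Every $y \in \cX$ lies in its own orbit since $e \cdot y = y \in G\cdot y$ by the identity axiom, so the union of all orbits covers $\cX$. The dichotomy just proved says any two orbits are either identical or disjoint, which is exactly the defining property of a partition.

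The argument has no real obstacle; the only thing that requires any care is being explicit about which group axioms are used (associativity / compatibility of the action and existence of inverses), so that the chain $gy = g(g_1^{-1}(g_2 y')) = (g g_1^{-1} g_2) y'$ is justified from the axioms stated at the start of Sec.~\ref{sec:extension-to-group-action-pred}. Everything else is bookkeeping, and no results beyond the basic group/group-action axioms are needed.
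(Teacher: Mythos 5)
Your proof is correct and is the standard orbit-dichotomy argument; the paper itself states this lemma without proof, so there is nothing to compare against, and your argument is the natural one to supply. You are also right that the statement has a typo (disjointness should be $G\cdot y \cap G\cdot y' = \emptyset$); note additionally that the quantifier should read $y, y' \in \cX$, not $y, y' \in G$, since orbits are of states under the group action.
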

Let $X / G := \{G\cdot y | x \in \cX\}$ be the collection of all orbits. The following lemma tells that the matrix representation $R_g$ can be decomposed into a direct sum (i.e., block diagonal matrix) on each orbit.  

\begin{lemma}[Direct sum decomposition of $R_g$]
\label{lemma:direct-sum-group-action}
\begin{equation}
    R_g = \bigoplus_{Y' \in Y / G} R^{Y'}_{g} 
\end{equation}
and each $R^{Y'}_g \in \rr^{|Y'|\times |Y'|}$ is a permutation matrix with $\sum_g R^{Y'}_g = \frac{|G|}{|Y'|} \vone\vone^\top$. 
\end{lemma}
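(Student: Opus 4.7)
The plan is to exploit the orbit partition established in the preceding lemma, and then reduce everything to the transitive case of Lemma~\ref{lemma:summation_rg} applied orbit by orbit.

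First I would reorder the standard basis of $\rr^d$ so that basis vectors indexed by elements of the same orbit are grouped together; since the orbits partition $\cX$, this is just a relabeling. Under the group action, every $g \in G$ satisfies $g \cdot Y' \subseteq Y'$ for each orbit $Y' \in \cX/G$ (by the very definition of an orbit, and because $g^{-1}$ acts likewise we in fact have equality). Therefore the permutation matrix $R_g$ sends the subspace spanned by basis vectors in $Y'$ to itself, which makes $R_g$ block diagonal with one block per orbit. Calling the block $R^{Y'}_g$, this gives the direct sum decomposition $R_g = \bigoplus_{Y'\in \cX/G} R^{Y'}_g$.

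Next I would verify that each block $R^{Y'}_g$ is itself a permutation matrix. Since $R_g$ is a permutation matrix (by the earlier structure lemma) and is block diagonal, each diagonal block is a $0/1$ matrix with at most one $1$ per row and column; because $g$ acts bijectively on $Y'$, every row and column does contain exactly one $1$, so $R^{Y'}_g \in \rr^{|Y'|\times |Y'|}$ is a permutation matrix representing the restricted action of $g$ on $Y'$.

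Finally, to obtain $\sum_{g\in G} R^{Y'}_g = \tfrac{|G|}{|Y'|}\vone\vone^\top$, I would observe that $G$ acts transitively on $Y'$ by the definition of an orbit. Thus the restricted action $g \mapsto R^{Y'}_g$ is a transitive group action on the set $Y'$, and Lemma~\ref{lemma:summation_rg} applies directly to this smaller system, yielding the claimed identity with $d$ replaced by $|Y'|$. There is no real obstacle here; the only subtlety is that the restricted map $G \to \mathrm{Sym}(Y')$ need not be injective (the stabilizer of $Y'$ may be nontrivial), but Lemma~\ref{lemma:summation_rg} only uses transitivity of the action, not faithfulness, so the argument goes through unchanged.
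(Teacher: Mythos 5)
Your reduction to the transitive case is clean and the block-diagonality argument is correct, but there is a circularity: in this paper Lemma~\ref{lemma:summation_rg} is \emph{not} proved independently -- its stated proof is ``Simply apply Lemma~\ref{lemma:direct-sum-group-action} and notice that for transitive group action, $X / G = \{Y\}$.'' In other words, Lemma~\ref{lemma:summation_rg} is the transitive \emph{special case} of the very lemma you are trying to prove, so invoking it here begs the question. The actual mathematical content -- the counting argument showing that every entry of $\sum_g R^{Y'}_g$ equals $|G|/|Y'|$ -- lives in the proof of Lemma~\ref{lemma:direct-sum-group-action} itself, and your proposal never supplies it.

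To close the gap you must prove the summation identity directly on a single orbit $Y'$. The paper does this by an orbit-stabilizer count: for fixed $y \in Y'$, two elements $g, h \in G$ send $y$ to the same destination iff $g^{-1}h \in G_y$, i.e.\ iff $h \in gG_y$; hence each destination in $Y'$ is hit exactly $|G_y|$ times, so every entry of column $y$ in $\sum_g R^{Y'}_g$ equals $|G_y|$. One then checks that $|G_y|$ is constant over the orbit (stabilizers of points in the same orbit are conjugate), and since the $|Y'|$ destinations collectively receive $|G|$ hits, $|G_y| = |G|/|Y'|$. That derivation is the piece your argument is missing; once it is inserted, the reduction to the transitive case becomes unnecessary, and you would essentially be reproducing the paper's proof. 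Your logical organization (transitive case first, then reduce) would also be fine, but only if you actually prove the transitive case rather than citing a downstream corollary of the target lemma.
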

\begin{proof}
By the definition of group orbits, the group action $g$ is closed within each $Y'$. Therefore, $R_g$ is a direct sum (i.e., block-diagonal). 

For each element $x \in \cX'$, let's check its destination under $G$. It is clear that if two group elements $g,h\in G$ maps $\cX$ to the same destination, then 
\begin{equation}
    gy = hy \iff y = g^{-1}hy \iff g^{-1}h \in G_y \iff h = gG_y
\end{equation}
where $G_y$ is the stabilizer of $\cX$, a subgroup of $G$. Therefore, $g$ and $h$ map $\cX$ to the same destination, if and only if they are from the same coset of $G_y$. Therefore, each entry of $\sum_g R^{Y'}_g$ on the column $\cX$ equals to the size of cosets of $G_y$, which is $|G_y|$. Furthermore, for $y_1, y_2 \in Y'$, since they belong to the same orbit, there exists $g$ so that $gy_1 = y_2$ and thus for any $g' \in G_{y_1}$, we have 
\begin{equation}
    g'y_1 = y_1 \iff g g' y_1 = gy_1 = y_2 \iff gg'g^{-1} y_2 = y_2 \iff gg'g^{-1} \in G_{y_2} 
\end{equation}
So there exists bijection between $G_{y_1}$ and $G_{y_2}$. This means that $|G_{y}|$ is constant for any $x \in \cX'$ and thus all elements in $\sum_g R^{Y'}_g$ are equal to $|G| / |Y'|$ (i.e., the number of the group elements that send $\cX$ out to various destinations in $Y'$, divided by the possible distinct destinations $|Y'|$, results in the number of times each destination gets hit).  
\end{proof}

\section{Proofs for the content in Appendix}
\structrg*
\begin{proof}
Since every element needs to have a destination, every column of $R_g$ sums to $1$, i.e., $\vone^\top R_g = \vone^\top$. Then we prove that the mapping $y \mapsto gy$ is a bijection. Suppose there exists $y_1, y_2$ so that $gy_1 = gy_2$. Therefore by compatibility we have: 
\begin{equation}
     g^{-1}(gy_1) = g^{-1}(gy_2) \iff (g^{-1}g)y_1 = (g^{-1}g)y_2 \iff ey_1 = ey_2 \iff y_1 = y_2 
\end{equation}
So any $g$ is a bijective mapping on $\cX$. Since every element of $R_g$ is either $0$ or $1$, $R_g$ is a permutation matrix.  
\end{proof}

\summationrg*
\begin{proof}
Simply apply Lemma~\ref{lemma:direct-sum-group-action} and notice that for transitive group action, $X / G = \{Y\}$.  
\end{proof}

\cosetmapping*
\begin{proof}
First we have 
\begin{equation}
    \iota_0(g) = \iota_0(h) \iff gy_0 = hy_0 \iff y_0 = g^{-1}hy_0 \iff g^{-1}h \in G_{y_0} \iff h \in gG_{y_0}
\end{equation}
So for any $y = g y_0$, all elements in $\iota_0^{-1}(y)$ are also in $gG_{y_0}$ and vice versa. The bijection is:
\begin{equation}
    \iota^{-1}_0(y) \leftrightarrow gG_{y_0}, \quad\quad \mathrm{for\ } y = gy_0
\end{equation}
or equivalently, 
\begin{equation}
    y \leftrightarrow \iota_0(gG_{y_0})
\end{equation}
\end{proof}

\relyg*
\begin{proof}
For $g \in \iota_0^{-1}(x)$, we have $g x_0 = x$. For any $x' = h x_0$, we have:
\begin{equation}
    g x' = g h x_0 = (g h) x_0
\end{equation}
On the other hand, by definition, $xx' := \iota_0(g h G_{x_0}) = (gh) x_0$. So for any $x'$, $gx' = xx'$.
\end{proof}

\section{Additional Experiments}
\label{sec:appendix-additional-exp}
\textbf{Algorithm to extract factorization from gradient descent solutions}. Given the solutions obtained by gradient descent using Adam optimizer, we first compute the corresponding $\vz$ via the Fourier transform (that is, Eqn.~\ref{eq:w-freq-space}). Here $\vz = [z_{pkj}]$ is a $3$-by-$d$-by-$q$ tensor. Here $d = |G|$ and $q$ is the number of hidden nodes in the 2-layer neural networks. 

Then for each frequency $k$, we extract the salient components of $\vz$ by thresholding with a universal threshold (e.g. $0.05$). The number of salient components (e.g., $6$ or $4$) is the order of the per-frequency solution.  

Suppose we now get $\vz^\bk$ for frequency $k$, which is a $3$-by-$6$ (and thus an order-6) solution. Then we enumerate all possible permutation of $6$ hidden nodes ($6!=720$ possibilities) to find one permutation $\tau$ so that $\|z_{pk\tau(\cdot)} - z^\bone_{pk\cdot} \otimes z^\btwo_{pk\cdot}\|$ is minimized, following ring multiplication defined in Def.~\ref{def:operationsinz}. Note that for each permutation, we also need to consider whether $\tilde\vone := [-1,-1,1]$ can be applied to each hidden node $j$ ($\tilde\vone$ is also defined in Tbl.~\ref{tab:poly-construction}). This is because both $\vz_1 \ringadd \vz_2$ and $\vz_1 \ringadd \tilde\vone \ringmul \vz_2$ have exactly the same values on all sum potentials (SPs) we consider, due to the fact that $r(\tilde\vone) = 1$ for any $r \in R_\g \cup R_\c \cup R_\n \cup R_*$. Therefore we call $\tilde\vone$ ``pseudo-1''.

For search efficiency, we therefore first consider the permutation $\tau$ so that $\|z_{ck\tau(\cdot)} - z^\bone_{ck\cdot} \otimes z^\btwo_{ck\cdot}\|$ is minimized, since the component $c$ is invariant to the pseudo-1 transformation $\tilde\vone$, and then for those eligible $\tau$, we search whether $\tilde\vone$ should be applied when considering $p \in \{a,b\}$.

Once we find such $\vz_1$ and $\vz_2$, we convert them into their canonical forms $\tilde\vz_1$ and $\tilde\vz_2$ (Def.~\ref{def:canonical}) to eliminate any possible multiplicative term $\vy$ so that $\vz_1 = \vy \ringmul \tilde\vz_1$. We then compare the canonical forms (up to complex conjugate) with various order-3 and order-2 partial solutions constructed by \ours{}, as detailed in Sec.~\ref{sec:composing-solutions}. If their distance is below a certain threshold (e.g., $< 10\%$ of the norm after normalizing both $\hat\vz_1$ and $\hat\vz_2$), then a match is detected.

\begin{figure}
    \centering
    \includegraphics[width=0.9\textwidth]{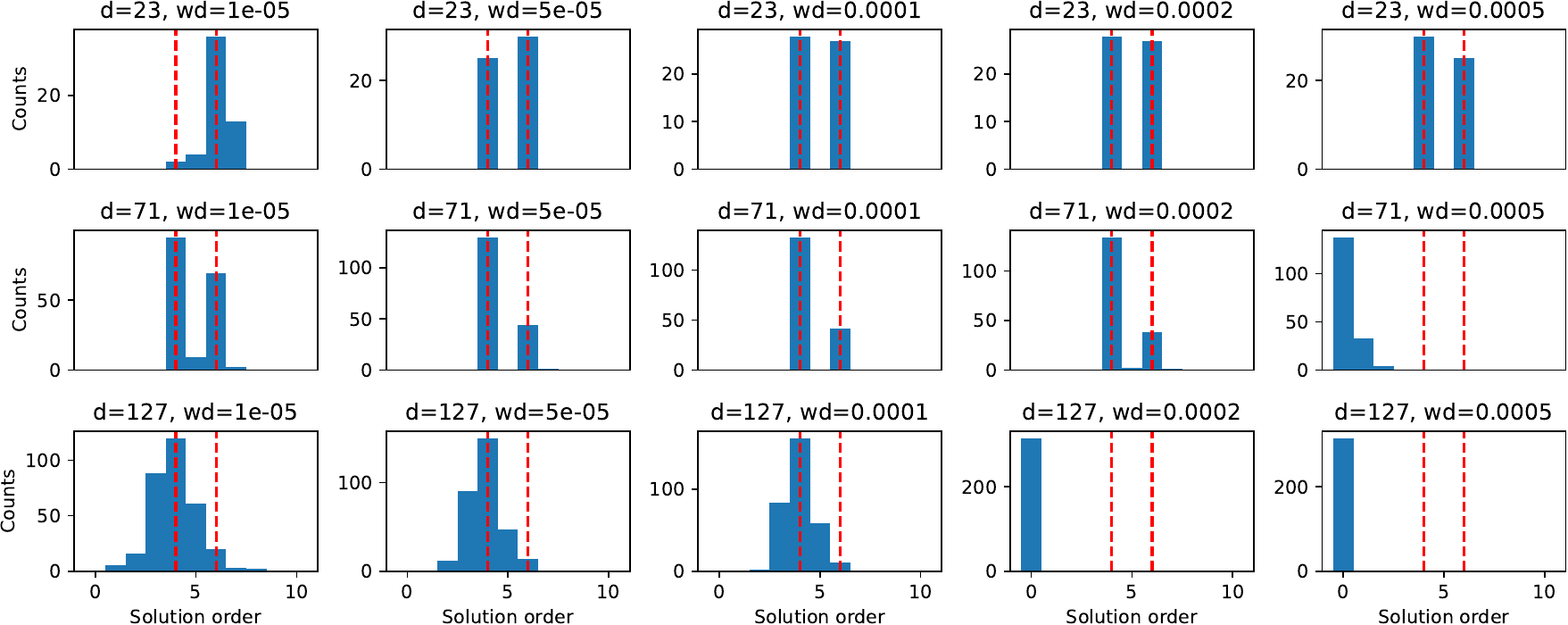}
    \caption{\small Distribution of solutions with hidden size $q = 256$.}
\end{figure}

\begin{figure}
    \centering
    \includegraphics[width=0.9\textwidth]{figs/solution_distri_512-crop.pdf}
    \caption{\small Distribution of solutions with hidden size $q = 512$.}
\end{figure}

\begin{figure}
    \centering
    \includegraphics[width=0.9\textwidth]{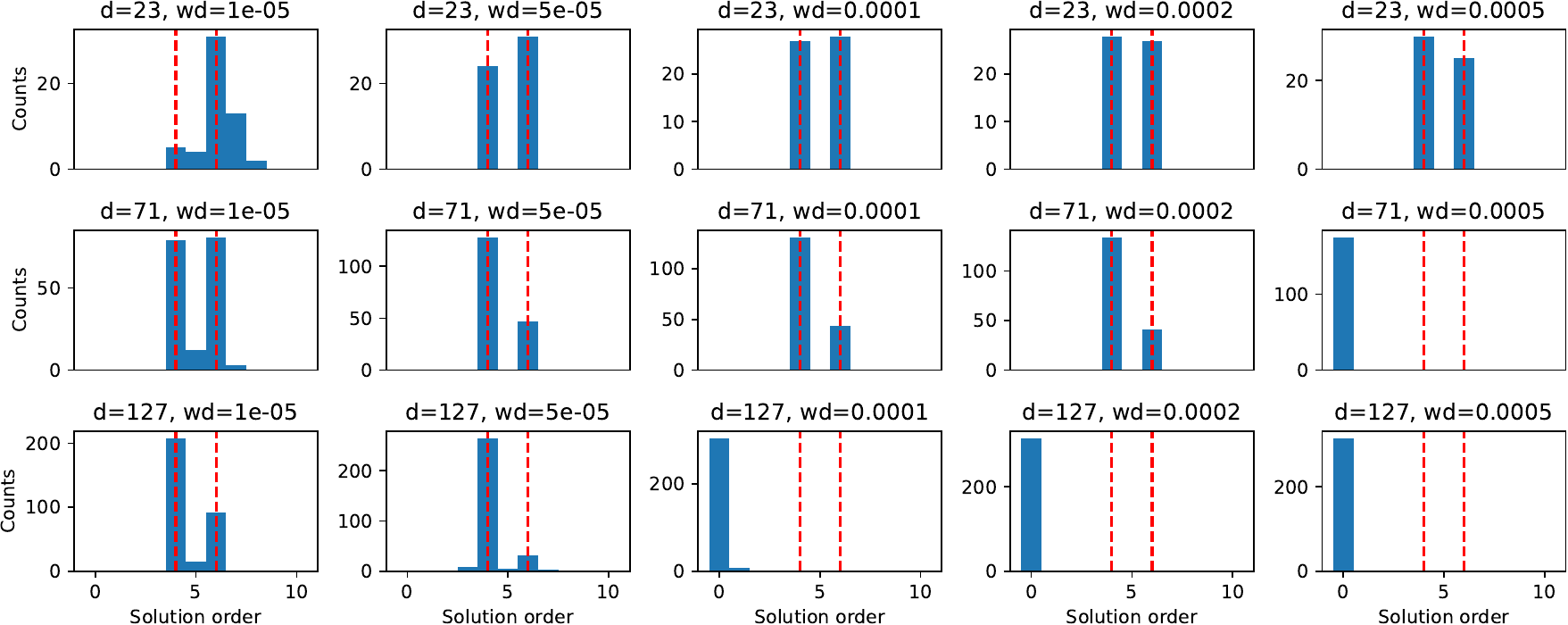}
    \caption{\small Distribution of solutions with hidden size $q = 1024$.}
\end{figure}

\begin{figure}
    \centering
    \includegraphics[width=0.9\textwidth]{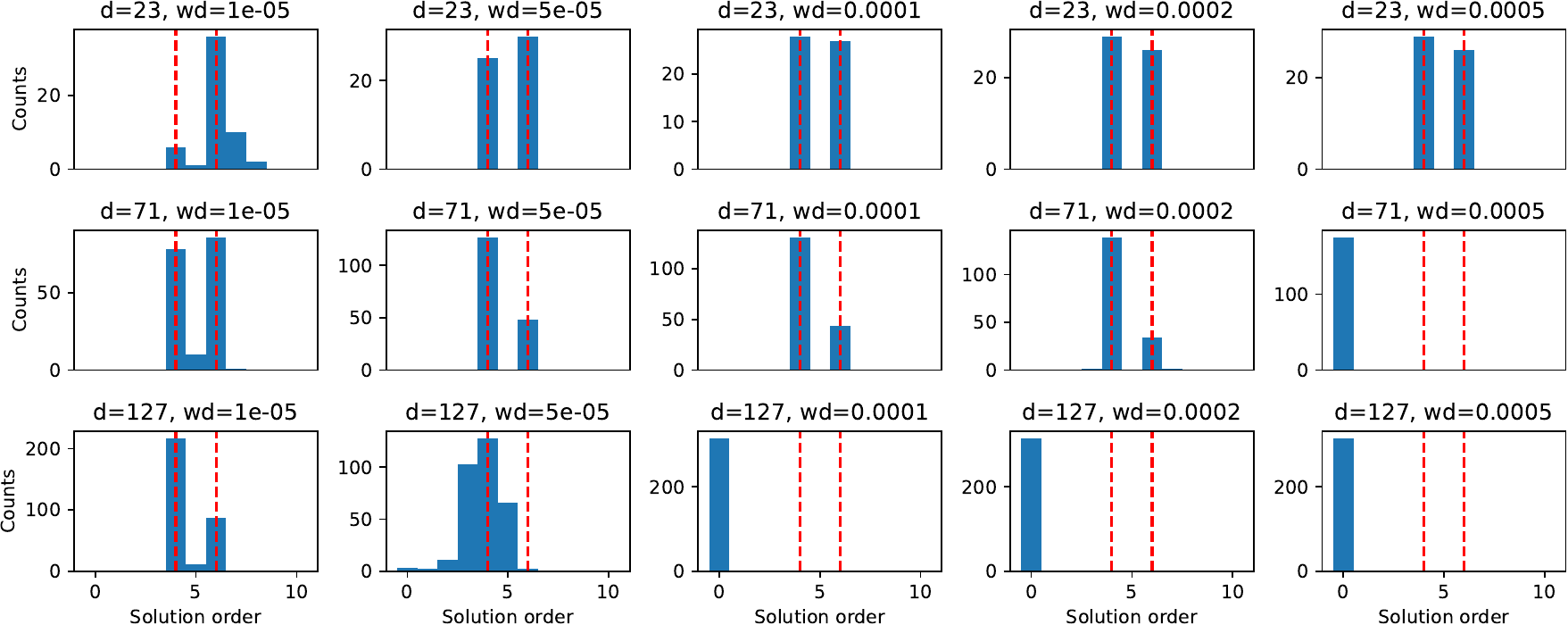}
    \caption{\small Distribution of solutions with hidden size $q = 2048$.}
\end{figure}

\begin{figure}
    \centering
    \includegraphics[width=0.32\linewidth]{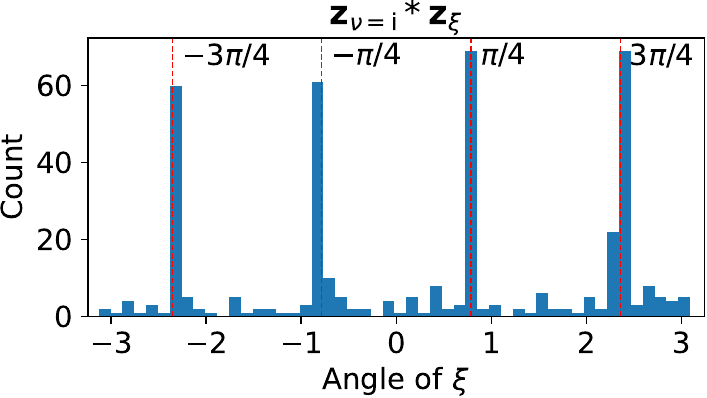}
    \includegraphics[width=0.32\linewidth]{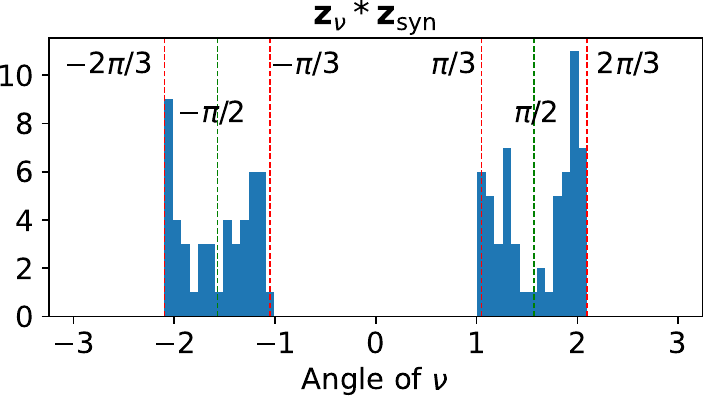}
    \includegraphics[width=0.32\linewidth]{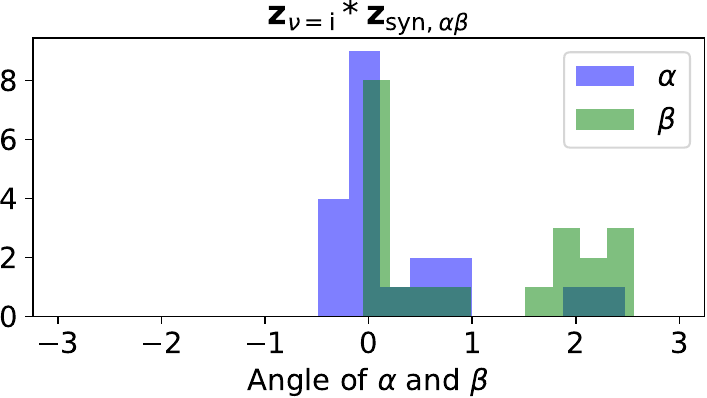}
    \vspace{-0.1in}
    \caption{\small Distribution of free parameters ($\xi$, $\nu$, $\alpha$ and $\beta$, all with magnitude $1$) in three kinds of gradient descent solutions identified by \ours{}. While any value of these parameters makes a global solution, gradient descent dynamics has a particular preference in picking them during optimization.}
    \label{fig:distri-params}
\end{figure}

\end{document}